\def\equationautorefname~#1\null{Eq.~(#1)\null}
\def\figureautorefname~#1\null{Fig.~#1\null}
\def\tableautorefname~#1\null{Table~#1\null}
\def\sectionautorefname~#1\null{Section~#1\null}
\definecolor{alessandro}{RGB}{255,0,0}
\definecolor{francesco}{RGB}{0,255,0}
\definecolor{matthieu}{RGB}{0,0,255}
\DeclareMathOperator*{\argmin}{argmin}  
\DeclareMathOperator{\eq}{\,{=}\,}
\newtheorem{theorem}{Theorem}[section]
\newtheorem{corollary}{Corollary}[theorem]
\newtheorem{lemma}[theorem]{Lemma}
\newtheorem{definition}{Definition}[section]
\title{Locality defeats the curse of dimensionality in convolutional teacher-student scenarios}
\author{
    Alessandro Favero \thanks{Equal contribution.} \\
    Institute of Physics\\
    \'Ecole Polytechnique F\'ed\'erale de Lausanne\\
    \texttt{alessandro.favero@epfl.ch} \\
    \And
    Francesco Cagnetta \footnotemark[1] \\
    Institute of Physics\\
    \'Ecole Polytechnique F\'ed\'erale de Lausanne\\
    \texttt{francesco.cagnetta@epfl.ch} \\
   \And
    Matthieu Wyart \\
    Institute of Physics\\
    \'Ecole Polytechnique F\'ed\'erale de Lausanne\\
    \texttt{matthieu.wyart@epfl.ch} \\
}
\begin{document}

\maketitle

\begin{abstract}

   Convolutional neural networks perform  a local and translationally-invariant treatment of the data: quantifying which of these two aspects is central to their success remains a challenge. We study this problem within a teacher-student framework for kernel regression, using `convolutional' kernels inspired by the neural tangent kernel of simple convolutional architectures of given filter size. Using heuristic methods from physics, we find in the ridgeless case that locality is key in determining the learning curve exponent $\beta$ (that relates the test error $\epsilon_t\sim P^{-\beta}$ to the size of the training set $P$), whereas translational invariance is not. In particular, if the filter size of the teacher $t$ is smaller than that of the student $s$, $\beta$ is a function of $s$ only and does not depend on the input dimension. We confirm our predictions on $\beta$ empirically.  We conclude by proving, under a natural universality assumption, that performing kernel regression with a ridge that decreases with the size of the training set leads to  similar learning curve exponents to those we obtain in the ridgeless case.
   
\end{abstract}

\addtocontents{toc}{\setcounter{tocdepth}{0}}

\section{Introduction}

Deep Convolutional Neural Networks (CNNs) are widely recognised as the engine of the latest successes of deep learning methods, yet such a success is surprising. Indeed, any supervised learning model suffers \emph{in principle} from the curse of dimensionality: under minimal assumptions on the function to be learnt, achieving a fixed target generalisation error $\epsilon$ requires a number of training samples $P$ which grows exponentially with the dimensionality $d$ of input data~\cite{luxburg2004distance}, i.e.  $\epsilon(P) \sim P^{-1/d}$. Nonetheless,  empirical evidence shows that the curse of dimensionality is beaten~\emph{in practice}~\cite{hestness2017deep,krizhevsky2012imagenet, spigler2019asymptotic}, with
\begin{equation}\label{eq:learning-scaling}
\epsilon(P) \sim P^{-\beta}, \quad \beta\,{\gg}\,1/d.
\end{equation}
CNNs, in particular, achieve excellent performances on high-dimensional tasks such as image classification on ImageNet with state-of-the-art architectures, for which $\beta\approx [0.3, 0.5]$ ~\cite{hestness2017deep}.
Natural data must then possess additional structures that make them learnable. A classical idea \cite{biederman1987recognition} ascribes the success of recognition systems to the compositionality of data, i.e. the fact that objects are made of features, themselves made of sub-features~\cite{poggio2017and, deza2020hierarchically,bietti2021approximation}. In this view, the locality of CNNs plays a key role for their performance, as supported by empirical observations ~\cite{Neyshabur2020towards}. Yet, there is no clear analytical understanding of the relationship between the compositionality of the data and learning curves. 

In order to study this relationship quantitively, we introduce a teacher-student framework for kernel regression, where the function to be learnt takes one of the following two forms:
\begin{equation}\label{eq:loc-f}
    f^{LC}(\bm{x}) = \sum_{i\in \mathcal{P}} g_i(\bm{x}_i), \ \ \ f^{CN}(\bm{x}) = \sum_{i\in \mathcal{P}} g(\bm{x}_i).
\end{equation}
Here, $\bm{x}$ is a $d$-dimensional input and $\bm{x}_i$ denotes the $i$-th $t$-dimensional patch of $\bm{x}$, $\bm{x}_i\,{=}\,(x_i, \dots, x_{i+t-1})$. $i$ ranges in a subset $\mathcal{P}$ of $\left\lbrace 1,\dots, d\right\rbrace$. The $g_i$'s and $g$ are random functions of $t$ variables whose smoothness is controlled by some exponent $\alpha_t$. Such functions model the local nature of certain datasets and can be generated, for example, by randomly-initialised one-hidden-layer neural networks: $f^{LC}$ corresponds to a \emph{locally connected} network (LCN)~\cite{fukushima1975cognitron, lecun1989generalization}, in which the input is split into lower-dimensional patches before being processed, whereas a network enforcing invariance with respect to shifts of the input patches via weight sharing can be described by $f^{CN}$. 
In such cases $t$ would be the filter size of the network. Our goal is to compute the asymptotic decay of the error of a student kernel performing regression on such data, and to relate the corresponding exponent $\beta$ to the locality of the target function.
The student kernel corresponds to a prior on the true function of the form described by \autoref{eq:loc-f}, except that the filter size $s$ and its prior $\alpha_s$ on the smoothness of the $g$ functions can differ from those of the target function. Such students include overparametrised one-hidden-layer neural networks operating in the \emph{lazy training regime}~\cite{jacot2018neural, Du2019, lee2019wide, arora2019exact, chizat2019lazy}.

\subsection{Our contributions}

We consider a teacher-student framework for kernel regression, where the target function has one of the forms in~\autoref{eq:loc-f}, where the $g_i$'s and $g$ are Gaussian random fields of given covariance. Target functions are characterised by the dimensionality $t$ of the $g$ functions---the {\it filter size}---and a smoothness exponent $\alpha_t$, such that $\alpha_t > 2n$ implies that typical target functions are at least $n$ times differentiable.
Kernel regression is performed by {\it local} or {\it convolutional} student kernels, having filter size $s$ and a prior on the target smoothness characterised by another exponent $\alpha_s\,{>}\,0$. Our main contributions follow:

\begin{itemize}
 \item[$\circ$] We use recent results based on the replica method of statistical physics  on the generalisation error of kernel methods \cite{bordelon2020spectrum, canatar2021spectral,loureiro2021capturing} to estimate the exponent $\beta$.  We find that $\beta=\alpha_t/s$ if $t\leq s$ and $\alpha_t\leq 2(\alpha_s+s)$. This approach is non-rigorous, but it can be proven if data are sampled on a lattice \cite{spigler2019asymptotic} and corresponds to a provable lower bound on the error when teacher and student are equal \cite{micchelli1979design}.
 \item[$\circ$] In particular, we find the same exponent for students with a prior on the shift invariance of the target function and students without this prior, implying that the curse of dimensionality is beaten due to locality and not shift invariance.
 \item[$\circ$] We confirm systematically our predictions by performing kernel ridgeless regression numerically for various $t$, $s$ and embedding dimension $d$.
 \item[$\circ$] We use the recent framework of \cite{jacot2020kernel} and a natural Gaussian universality assumption to prove a rigorous estimate of $\beta$ in the  case where the ridge  decreases with the size of the training set. The estimate of $\beta$ depends again on $s$ and not on $d$, demonstrating that the curse of dimensionality can indeed be beaten by using local filters on such compositional data.
\end{itemize}

\subsection{Related work}

Several recent works study the role of the compositional structure of data~\cite{poggio2017and, kondor2018generalization, zhou2018building}. When such structure is hierarchical, deep convolutional networks can be much more expressive than shallow ones \cite{poggio2017and, Poggio2020theo, deza2020hierarchically}. Concerning training, ~\cite{malach2021computational} shows that both convolutional and locally-connected networks can achieve a target generalisation error in polynomial time, whereas fully-connected networks cannot, for a class of functions which depend only on $s$ consecutive bits of the $d$-dimensional input, with $s\,{=}\mathcal{O}(\log{d})$. In~\cite{bietti2021approximation} the effects of the architecture's locality are studied from a kernel perspective, using a class of deep convolutional kernels introduced in~\cite{mairal2016end, bietti2019inductive} and characterising their Reproducing Kernel Hilbert Space (RKHS). In general, belonging to the RKHS ensures favourable bounds on performance and, for isotropic kernels, is a constraint on the function smoothness that becomes stringent in large $d$. For local functions, the corresponding constraint on smoothness is governed by the filter size $s$ and not $d$ ~\cite{bietti2021approximation}. Lastly, a recent work shows that weight sharing, in the absence of locality, leads to a mild improvement of the generalisation error of shift-invariant kernels~\cite{mei2021learning}.

By contrast, our work focuses on computing non-trivial training curve exponents in a setup where the locality and  shift-invariance priors of the kernel can differ from those of the class of functions being learnt. In our setup, the latter are in general not in the RKHS of the kernel\footnote{A Gaussian field of covariance $K$ is never in the RKHS of the kernel $K$, see e.g. \cite{kanagawa2018gaussian}.}.
Technically, our result that the size of the student filter $s$ controls the learning curve (and not that of the teacher $t$) relates to the fact that kernels are not able to detect data anisotropy (the fact that the function depends only on a subset of the coordinates) in worst-case settings  \cite{bach2017breaking} nor in the typical case for Gaussian fields \cite{paccolat2020isotropic}.

\section{Setup}\label{sec:setup}

\paragraph{Kernel ridge regression}
Kernel ridge regression is a method to learn a target function $f^*:\mathbb{R}^d\to \mathbb{R}$ from $P$ observations $\{(\bm{x}^\mu, f^*(\bm{x}^\mu))\}_{\mu=1}^P$, where the inputs $\bm{x}^\mu$ are i.i.d. random variables distributed according to a certain measure $p\left( d^d x\right)$ on $\mathbb{R}^d$. Let $K$ be a positive-definite kernel and $\mathcal{H}$ the corresponding Reproducing Kernel Hilbert Space (RKHS). The kernel ridge regression estimator $f$ of the target function $f^*$ is defined as
\begin{equation}\label{eq:argmin}
 f =\argmin_{f\in \mathcal{H}} \left\lbrace \frac{1}{P}\displaystyle \sum_{\mu=1}^P \left(f(\bm{x}^\mu) - f^*(\bm{x}^\mu)\right)^2  + \lambda \, \|f\|^2_{\mathcal{H}} \right  \rbrace,
\end{equation}
where $\|\cdot\|_{\mathcal{H}}$ denotes the RKHS norm and $\lambda$ is the ridge parameter. The limit $\lambda\to0^+$ is known as the ridgeless case and corresponds to the solution with minimum RKHS norm that interpolates the $P$ observations.
\autoref{eq:argmin} is a convex optimisation problem, having the unique solution
\begin{equation}\label{eq:krrpredictor}
 f(\bm{x}) = \frac{1}{P}\sum_{\mu,\nu=1}^P K(\bm{x},\bm{x}^\mu) \left(\left( \frac{1}{P} \mathbb{K}_P +\lambda \mathbb{I}_P\right)^{-1}\right)_{\mu,\nu} f^*({\bm{x}^\nu}), 
\end{equation}
where $\mathbb{K}_P$ is the \emph{Gram matrix} defined as $(\mathbb{K}_P)_{\mu\nu} = K(\bm{x}^\mu, \bm{x}^\nu)$, and $\mathbb{I}_P$ denotes the $P$-dimensional identity matrix. Our goal is to compute the generalisation error, which we define as the expectation of the mean squared error over the data distribution $p\left( d^d x\right)$, averaged over an ensemble of target functions $f^*$, i.e
\begin{equation}\label{eq:test-def}
\epsilon(P) = \mathbb{E}_{\bm{x},f^*}\left[ \left( f(\bm{x})-f^*(\bm{x})\right)^2 \right].
\end{equation}
The error $\epsilon$ depends on the number of samples $P$ through the predictor of~\autoref{eq:krrpredictor} and we refer to the graph of $\epsilon(P)$ as {\it learning curve}.

\paragraph{Statistical mechanics of generalisation in kernel regression}
The theoretical understanding of generalisation is still an open problem. A few recent works~\cite{bordelon2020spectrum, jacot2020kernel, canatar2021spectral} relate the generalisation error $\epsilon$ to the decomposition of the target function in the eigenbasis of the kernel. A positive-definite kernel $K$ can indeed be written, by Mercer's theorem, in terms of its eigenvalues $\{\lambda_\rho\}$ and eigenfunctions $\{\phi_\rho\}$:
\begin{equation}\label{eq:mercer}
 K(\bm{x},\bm{y}) = \sum_{\rho=1}^\infty \lambda_\rho \phi_\rho(\bm{x}) \overline{\phi_\rho(\bm{y})}, \quad \int p\left( d^d y\right) K(\bm{x},\bm{y}) \phi_\rho(\bm{y}) = \lambda_\rho \phi_\rho(\bm{x}). 
\end{equation}
In~\cite{bordelon2020spectrum, jacot2020kernel, canatar2021spectral} it is shown that, when the target function can be written in terms of the kernel eigenbasis,
\begin{equation} 
f^*(\bm{x}) = \sum_\rho c_\rho \phi_\rho(\bm{x}),
\end{equation}
the error $\epsilon$ can also be cast as a sum of modal contributions, $\epsilon = \sum_\rho \epsilon_\rho$. The details of the general formulation are summarised in~\autoref{app:spectral}. Here we present an intuitive limiting case, obtained in the ridgeless limit $\lambda\to 0^+$, when $\lambda_\rho \sim \rho^{-a}$ for large $\rho$, and $\mathbb{E}[|c_\rho|^2] \sim \rho^{-b}$ with $2a\,{>}\,b-1$, that is
\begin{equation}\label{eq:error-scaling}
\epsilon(P) \sim \sum_{\rho > P} \mathbb{E}[|c_\rho|^2] \equiv \mathcal{B}(P),
\end{equation}
with $\sim$ denoting asymptotic equivalence for large $P$. \autoref{eq:error-scaling} indicates that, given $P$ examples, the generalisation error can be estimated as the tail sum of the power in the target function past the first $P$ modes of the kernel, which we denote as $\mathcal{B}(P)$. Although the general modal decomposition cannot be proven rigorously in the ridgeless limit~\cite{jacot2020kernel, loureiro2021capturing}, additional results are available when the target functions are Gaussian random fields with covariance specified by a teacher kernel:
\begin{itemize}
    \item[$\circ$] \autoref{eq:error-scaling} can be proven rigorously~\cite{spigler2019asymptotic} if teacher and student are isotropic kernels and the input points $\bm{x}^\mu$ are sampled on the lattice $\mathbb{Z}^d$, i.e. all the elements of each input sequence are integer multiples of an arbitrary unit;
    \item[$\circ$] If teacher and student coincide then $\mathbb{E}[|c_\rho|^2]$ equals the $\rho$-th eigenvalue  $\lambda_\rho$ and (see e.g.~\cite{micchelli1979design}) $\epsilon(P) \geq \mathcal{B}(P)$, i.e. the estimate of~\autoref{eq:error-scaling} is a lower bound.
\end{itemize}

\section{Kernels for local and convolutional teacher-student scenarios}\label{sec:convolutional-mercer}

In this section we introduce convolutional and local kernels that will be used as teachers, i.e. to generate different ensembles of target functions $f^*$ with controlled smoothness and degree of locality, and as student kernels. We motivate our choice by considering one-hidden-layer neural networks with simple local and convolutional architectures. Because of the relationship between our kernels and the Neural Tangent Kernel~\cite{jacot2018neural} of the aforementioned architectures, our framework encompasses regression with simple overparametrised networks trained in the lazy regime~\cite{chizat2019lazy}.
For the sake of clarity we limit the discussion to inputs which are sequences in $\mathbb{R}^d$, i.e. $\bm{x} \,{=}\,(x_1,\dots,x_d)$. Extension to higher-order tensorial inputs such as images $\bm{X}\in\mathbb{R}^{d\times d}$ is straightforward. To avoid dealing with the boundaries of the sequence we identify $x_{i+d}$ with $x_i$ for all $i\,{=}\,1,\dots,d$.

\begin{definition}[one-hidden-layer CNN]\label{eq:cnn} 
A one-hidden-layer convolutional network with $H$ hidden neurons and average pooling is defined as follows,
\begin{equation}\label{eq:cnn-out}
    f^{CNN}(\bm{x}) = \frac{1}{\sqrt{H}} \sum_{h=1}^H a_h \frac{1}{|\mathcal{P}|}\sum_{i\in\mathcal{P}} \sigma(\bm{w}_{h}\cdot \bm{x}_i),
\end{equation}
where $\bm{x}\in\mathbb{R}^d$ is the input, $H$ is the width, $\sigma$ a nonlinear activation function, $\mathcal{P}\subseteq\left\lbrace{1,\dots,d}\right\rbrace$ is a set of patch indices and $|\mathcal{P}|$ its cardinality. For all $i\in\mathcal{P}$, $\bm{x}_i$ is an $s$-dimensional patch of $\bm{x}$. For all $h\,{=}\,1,\dots,H$, $\bm{w}_{h}\in\mathbb{R}^s$ is a filter with filter size $s$, $a_{h}\in\mathbb{R}$ is a scalar weight. The dot $\cdot$ denotes the standard Euclidean scalar product.
\end{definition}
In the network defined above, a $d$-dimensional input sequence $\bm{x}$ is first mapped to $s$-dimensional \emph{patches} $\bm{x}_i$, which are ordered subsequences of the input. Comparing each patch to a filter $\bm{w}_{h}$ and applying the activation function $\sigma$ leads to a $|\mathcal{P}|$-dimensional hidden representation which is equivariant for shifts of the input. The summation over the patch index $i$ promotes this equivariance to full invariance, leading to a model which is both local and shift-invariant as $f^{CN}$ in~\autoref{eq:loc-f}. A model which is only local, as $f^{LC}$ in~\autoref{eq:loc-f}, can be obtained by lifting the constraint of weight-sharing, which forces, for each $h\,{=}\,1,\dots,H$, the same filter $\bm{w}_{h}$ to apply to all patches $\bm{x}_i$.
\begin{definition}[one-hidden-layer LCN]\label{eq:lcn} 
In the notation of Definition~\ref{eq:cnn}, a one-hidden-layer locally-connected network with $H$ hidden neurons is defined as follows,
\begin{equation}\label{eq:lcn-out}
    f^{LCN}(\bm{x}) = \frac{1}{\sqrt{H}} \sum_{h=1}^H \frac{1}{\sqrt{|\mathcal{P}|}}\sum_{i\in\mathcal{P}} a_{h,i} \sigma(\bm{w}_{h,i}\cdot \bm{x}_i),
\end{equation}
For all $i\in\mathcal{P}$ and $h\,{=}\,1,\dots,H$: $\bm{x}_i$ is an $s$-dimensional patch of $\bm{x}$, $\bm{w}_{h,i}\in\mathbb{R}^s$ is a filter with filter size $s$, $a_{h,i}\in\mathbb{R}$ is a scalar weight. \end{definition}
Notice that the definition above reduces to that of a fully-connected network when the filter size is set to the input dimension, $s\,{=}\,d$, and $\mathcal{P}=\left\lbrace1\right\rbrace$. With the target functions taking one of the two forms in~\autoref{eq:loc-f}, our framework contains the case where the observations are generated by neural networks such as~(\ref{eq:cnn}) and~(\ref{eq:lcn}). Let us now introduce the neural tangent kernels of such architectures.

\begin{definition}[Neural Tangent Kernel]\label{eq:finite_ntk} Given a neural network function $f(\bm{x};\bm{\theta})$, where $\bm{\theta}\,{=}\,(\theta_1,\dots,\theta_N)$ denotes the complete set of parameters and $N$ the total number of parameters, the Neural Tangent Kernel (NTK) is defined as~\cite{jacot2018neural}
\begin{equation}
    \Theta_{N}(\bm{x},\bm{y};\bm{\theta}) = \sum_{n=1}^N \partial_{\theta_n} f(\bm{x},\bm{\theta})\partial_{\theta_n} f(\bm{y},\bm{\theta}),
\end{equation}
where $\partial_{\theta_n}$ denotes partial derivation w.r.t. the $n$-th parameter $\theta_n$.
\end{definition}
For one-hidden-layer networks with random, $\mathcal{O}(1)$-variance Gaussian initialisation of all the weights, and normalisation by $\sqrt{H}$ as in~(\ref{eq:cnn}) and~(\ref{eq:lcn}), the NTK converges to a deterministic limit $\Theta(\bm{x},\bm{y})$ as $N\propto H \to\infty$ ~\cite{jacot2018neural}. Furthermore, training $f(\bm{x},\bm{\theta})-f(\bm{x},\bm{\theta}_0)$, with $\bm{\theta}_0$ denoting the network parameters at initialisation, under gradient descent on the mean squared error is equivalent to performing ridgeless regression with kernel $\Theta(\bm{x},\bm{y})$~\cite{jacot2018neural}. The following lemmas relate the NTK of convolutional and local architectures acting on $d$-dimensional inputs to that of a fully-connected architecture acting on $s$-dimensional inputs. Both lemmas are proved in~\autoref{app:ntk}.
\begin{lemma}\label{lemma:cntk}
Call $\Theta^{FC}$ the NTK of a fully-connected network function acting on $s$-dimensional inputs and $\Theta^{CN}$ the NTK of a convolutional network function~(\ref{eq:cnn}) with filter size $s$ acting on $d$-dimensional inputs. Then
\begin{equation}\label{eq:conv-ntk}
    \Theta^{CN}(\bm{x},\bm{y}) = \frac{1}{|\mathcal{P}|^2}\sum_{i,j\in\mathcal{P}} \Theta^{FC}(\bm{x}_i,\bm{y}_j)
\end{equation}
\end{lemma}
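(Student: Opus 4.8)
The plan is to exploit a simple structural fact: a one-hidden-layer CNN with average pooling is literally an unweighted average of copies of a one-hidden-layer fully-connected network, all sharing the same parameters, each acting on a different patch of the input. Concretely, write $f^{FC}(\bm{z};\bm{\theta})\eq\frac{1}{\sqrt{H}}\sum_{h=1}^H a_h\,\sigma(\bm{w}_h\cdot\bm{z})$ with $\bm{\theta}\eq(a_1,\dots,a_H,\bm{w}_1,\dots,\bm{w}_H)$; then Definition~\ref{eq:cnn} reads $f^{CNN}(\bm{x};\bm{\theta})\eq\frac{1}{|\mathcal{P}|}\sum_{i\in\mathcal{P}} f^{FC}(\bm{x}_i;\bm{\theta})$, with the \emph{same} $\bm{\theta}$ appearing in every summand (this is where the filter size $s$ of the CNN matches the input dimension of the FC network). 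Everything else is bookkeeping.

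Next I would differentiate. By linearity of differentiation, for every parameter $\theta_n$ one has $\partial_{\theta_n} f^{CNN}(\bm{x};\bm{\theta})\eq\frac{1}{|\mathcal{P}|}\sum_{i\in\mathcal{P}}\partial_{\theta_n} f^{FC}(\bm{x}_i;\bm{\theta})$. Substituting this into Definition~\ref{eq:finite_ntk} and expanding the product of the two sums yields the exact finite-width identity $\Theta^{CN}_N(\bm{x},\bm{y};\bm{\theta})\eq\frac{1}{|\mathcal{P}|^2}\sum_{i,j\in\mathcal{P}}\Theta^{FC}_N(\bm{x}_i,\bm{y}_j;\bm{\theta})$, valid for any realisation of $\bm{\theta}$ and any width $H$.

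Finally I would take the infinite-width limit. Here I invoke the convergence result of Jacot et al.\ cited in the excerpt: for $\mathcal{O}(1)$-variance Gaussian initialisation $\bm{\theta}_0$ and the $1/\sqrt{H}$ normalisation of Definitions~\ref{eq:cnn} and~\ref{eq:lcn}, $\Theta^{FC}_N(\cdot,\cdot;\bm{\theta}_0)$ converges in probability as $H\to\infty$ to a deterministic kernel $\Theta^{FC}$, and likewise $\Theta^{CN}_N\to\Theta^{CN}$. Since $\mathcal{P}$ is a fixed finite set, the right-hand side of the finite-width identity is a finite sum of terms each converging to $\Theta^{FC}(\bm{x}_i,\bm{y}_j)$, so the identity passes to the limit and gives \autoref{eq:conv-ntk}.

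The only non-mechanical point — and hence the thing I would be most careful about — is justifying that the single shared parameter vector $\bm{\theta}_0$ driving all $|\mathcal{P}|^2$ copies of $\Theta^{FC}_N$ on the left does not spoil the passage to the limit. This is harmless precisely because the limit is deterministic: each $\Theta^{FC}_N(\bm{x}_i,\bm{y}_j;\bm{\theta}_0)$ converges to the \emph{same} deterministic value $\Theta^{FC}(\bm{x}_i,\bm{y}_j)$ irrespective of correlations across the pairs $(i,j)$, and a finite sum of sequences converging in probability converges to the sum of the limits. I would also record the standard regularity assumptions on $\sigma$ under which the NTK limit theorem applies (e.g.\ $\sigma$ differentiable with Lipschitz derivative); with those in place the argument is complete, and the analogous identity for the LCN (Definition~\ref{eq:lcn}) follows the same way, the difference being that lifting weight sharing makes the patch-wise FC networks independent rather than shared.
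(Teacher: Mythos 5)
Your proposal is correct and follows essentially the same route as the paper: both establish the exact finite-width identity $\Theta^{CN}_N(\bm{x},\bm{y};\bm{\theta})=|\mathcal{P}|^{-2}\sum_{i,j\in\mathcal{P}}\Theta^{FC}_N(\bm{x}_i,\bm{y}_j;\bm{\theta})$ (the paper by explicitly expanding the $\sigma$, $\sigma'$ terms, you by linearity of $\partial_{\theta_n}$, which is the same computation written more structurally) and then pass to the $H\to\infty$ limit under Gaussian initialisation. Your extra remark on why the shared parameter vector does not obstruct the limit is a welcome clarification of a step the paper leaves implicit.
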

As the functions in~\autoref{eq:loc-f}, $\Theta^{CN}$ is written as a combination of lower-dimensional constituent kernels $\Theta^{FC}$ acting on patches, and the dimensionality of the constituent kernel coincides with the filter size of the corresponding network. This observation extends to local kernels, via
\begin{lemma}\label{lemma:lntk}
Call $\Theta^{LC}$ the NTK of a locally-connected network function~(\autoref{eq:lcn}) with filter size $s$ acting on $d$-dimensional inputs. Then
\begin{equation}\label{eq:loc-ntk}
    \Theta^{LC}(\bm{x},\bm{y}) = \frac{1}{|\mathcal{P}|}\sum_{i\in\mathcal{P}} \Theta^{FC}(\bm{x}_i,\bm{y}_i)
\end{equation}
\end{lemma}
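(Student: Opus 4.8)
The plan is to evaluate the NTK of Definition~\ref{eq:finite_ntk} directly on the locally-connected network of Definition~\ref{eq:lcn}, and then pass to the infinite-width limit. The trainable parameters are $\bm{\theta}=\{a_{h,i},\bm{w}_{h,i}\}$ with $h=1,\dots,H$ and $i\in\mathcal{P}$. Since in $f^{LCN}$ the pair $(a_{h,i},\bm{w}_{h,i})$ enters only the $i$-th summand of~\autoref{eq:lcn-out}, the nonzero partial derivatives are
\[
\partial_{a_{h,i}}f^{LCN}(\bm{x})=\tfrac{1}{\sqrt{H}\sqrt{|\mathcal{P}|}}\,\sigma(\bm{w}_{h,i}\cdot\bm{x}_i),\qquad
\partial_{(\bm{w}_{h,i})_k}f^{LCN}(\bm{x})=\tfrac{1}{\sqrt{H}\sqrt{|\mathcal{P}|}}\,a_{h,i}\,\sigma'(\bm{w}_{h,i}\cdot\bm{x}_i)\,(\bm{x}_i)_k.
\]
The crucial structural point, which distinguishes the LCN from the CNN of Definition~\ref{eq:cnn}, is the absence of weight sharing: a parameter attached to patch $i$ contributes to the kernel only through patch $i$, so summing the products of gradients over $\bm{\theta}$ produces \emph{no} cross-terms between distinct patches.

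Collecting the contributions patch by patch, the finite-width NTK reads
\[
\Theta^{LC}_{H}(\bm{x},\bm{y})=\frac{1}{|\mathcal{P}|}\sum_{i\in\mathcal{P}}\frac{1}{H}\sum_{h=1}^{H}\Big[\sigma(\bm{w}_{h,i}\cdot\bm{x}_i)\sigma(\bm{w}_{h,i}\cdot\bm{y}_i)+a_{h,i}^{2}\,\sigma'(\bm{w}_{h,i}\cdot\bm{x}_i)\sigma'(\bm{w}_{h,i}\cdot\bm{y}_i)\,(\bm{x}_i\cdot\bm{y}_i)\Big].
\]
For each fixed $i$ the inner average over the $H$ hidden units is exactly the empirical NTK of a one-hidden-layer fully-connected network on the $s$-dimensional input $\bm{x}_i$, built from the i.i.d.\ Gaussian weights $\{a_{h,i},\bm{w}_{h,i}\}_h$. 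Invoking the law of large numbers over the hidden units (equivalently, the standard convergence of the empirical NTK quoted after Definition~\ref{eq:finite_ntk}), as $H\to\infty$ this average converges to the deterministic limit $\Theta^{FC}(\bm{x}_i,\bm{y}_i)=\mathbb{E}_{\bm{w},a}[\sigma(\bm{w}\cdot\bm{x}_i)\sigma(\bm{w}\cdot\bm{y}_i)+a^{2}\sigma'(\bm{w}\cdot\bm{x}_i)\sigma'(\bm{w}\cdot\bm{y}_i)(\bm{x}_i\cdot\bm{y}_i)]$. Since $|\mathcal{P}|$ is fixed, summing over $i$ gives~\autoref{eq:loc-ntk}; the prefactor $1/|\mathcal{P}|$ is precisely the square of the $1/\sqrt{|\mathcal{P}|}$ normalisation in~\autoref{eq:lcn-out}.

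I do not expect a genuine obstacle here: the only points requiring care are the bookkeeping of normalisation constants and the appeal to the over-parametrised limit, which is already imported from~\cite{jacot2018neural} in the main text (non-smooth activations such as ReLU are handled through the usual almost-everywhere differentiability argument). The same scheme proves Lemma~\ref{lemma:cntk}: there the shared filter $\bm{w}_h$ appears in every summand of~\autoref{eq:cnn-out}, so its gradient is itself a sum over patches, and squaring it reinstates the double sum $\sum_{i,j}\Theta^{FC}(\bm{x}_i,\bm{y}_j)$ with the $1/|\mathcal{P}|^2$ prefactor of~\autoref{eq:conv-ntk}.
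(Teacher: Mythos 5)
Your proposal is correct and follows essentially the same route as the paper's proof: insert the LCN of Definition~\ref{eq:lcn} into Definition~\ref{eq:finite_ntk}, observe that the absence of weight sharing makes the finite-width kernel a patch-wise sum of fully-connected random-feature kernels, and take the $H\to\infty$ limit term by term. The only cosmetic difference is that the paper's appendix computation includes bias terms $b_{h,i}$ (yielding the factor $\bm{x}_i\cdot\bm{y}_i+1$), whereas you work with the bias-free main-text definition; this does not affect the structure of the argument.
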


Following the general structure of~\autoref{eq:conv-ntk} and~\autoref{eq:loc-ntk}, we introduce convolutional ($K^{CN}$) and local ($K^{LC}$) student and teacher kernels, defined as sums of lower-dimensional constituent kernels $C$,
\begin{subequations}\label{eq:convloc-ker}
\begin{align}
    \label{eq:conv-ker} K^{CN}(\bm{x},\bm{y}) &= |\mathcal{P}|^{-2} \displaystyle\sum_{i,j\in\mathcal{P}} C(\bm{x}_i, \bm{y}_j),\\
    \label{eq:loc-ker} K^{LC}(\bm{x},\bm{y}) &= |\mathcal{P}|^{-1} \displaystyle\sum_{i\in\mathcal{P}} C(\bm{x}_i, \bm{y}_i).
\end{align}
\end{subequations}
The kernels in~\autoref{eq:convloc-ker} are characterised by the dimensionality of the constituent kernel $C$, or \emph{filter size} $s$ (for the student, or $t$ for the teacher) and the nonanalytic behaviour of $C$ when the two arguments approach, i.e. $C(\bm{x}_i,\bm{y}_j)\sim \|\bm{x}_i-\bm{y}_j\|^{\alpha_{s}}$ (for the student, or $\|\bm{x}_i-\bm{y}_j\|^{\alpha_{t}}$ for the teacher) plus analytic contributions, with $\alpha_{s/t}\neq 2m$ for $m\in\mathbb{N}$. Using the kernels in~\autoref{eq:convloc-ker} as covariances allows us to generate random target functions with the desired degree of locality $t$ (as in~\autoref{eq:loc-f}), which can also be invariant for shifts of the patches. Having a student kernel as in~\autoref{eq:convloc-ker} results in an estimator $f$ also having the form displayed in~\autoref{eq:loc-f}, with a different filter size with respect to the target function. The $\alpha$'s control the smoothness of these functions as, if $\alpha\,{>}\,2 n \in \mathbb{N}$, then the functions are at least $n$ times differentiable in the mean-square sense.

A notable example of such constituent kernels is the NTK of ReLU networks $\Theta^{FC}$, which presents a cusp at the origin corresponding to $\alpha_s\,{=}\,1$ \cite{geifman2020similarity}. In addition, in the $H\to\infty$ limit, a network initialised with random weights converges to a Gaussian process~\cite{Neal1996, williams1997computing, lee2017deep}. For networks with ReLU activations, the covariance kernel of such process has nonanalytic behaviour with $\alpha_t\,{=}\,3$~\cite{cho2009kernel}.

\subsection{Mercer's decomposition of local and convolutional kernels}

We now turn to describing how the eigendecomposition of the constituent kernel $C$ induces an eigendecomposition of convolutional and local kernels. We work under the following assumptions,
\begin{itemize}
    \item[$i)$] The constituent kernel $C(\bm{x}, \bm{y})$ on $\mathbb{R}^s\times \mathbb{R}^s$ admits the following Mercer's decomposition,
    \begin{equation}\label{eq:mercer-c}
        C(\bm{x}, \bm{y}) = \sum_{\rho=1}^\infty \lambda_\rho \phi_\rho(\bm{x})\phi_\rho(\bm{y}),
    \end{equation}
    with (ordered) eigenvalues $\lambda_{\rho}$ and eigenfunctions ${\phi_\rho}$ such that, with $p^{(s)}(d^s x)$ denoting the $s$-dimensional patch measure, $ \phi_1(\bm{x}) = 1 \; \forall \bm{x}$ and $ \int p^{(s)}(d^s x) \phi_\rho(\bm{x})\,{=}\,0$ for all $\rho\,{>}1$;
    \item[$ii)$] Convolutional and local kernels from \autoref{eq:convloc-ker} have \emph{nonoverlapping} patches, i.e. $d$ is an integer multiple of $s$ and $\mathcal{P}\,{=}\,\left\lbrace 1 + n\times s \, | \, n=1,\dots,d/s \right\rbrace$ with $|\mathcal{P}|\,{=}d/s$;
    \item[$iii)$] The $s$-dimensional marginals on patches of the $d$-dimensional input measure $p^{(d)}(d^d x)$ are all identical and equal to $p^{(s)}(d^s x)$.
\end{itemize}
We stress here that the request of nonoverlapping patches in assumption $ii)$ can be relaxed at the price of further assumptions, i.e. $C(\bm{x},\bm{y})\,{=}\,\mathcal{C}(\bm{x}-\bm{y})$ and data distributed uniformly on the torus, so that $C$ is diagonalised in Fourier space. The resulting eigendecompositions are qualitatively similar to those described in this section (details in~\autoref{app:mercer-overlap}). Let us also remark that assumptions $i)$ and $iii)$---together with all the assumptions on the data distribution that might follow---are technical in nature and required only to carry out the Mercer's decomposition analytically. We believe that the main results of this paper hold under much more general conditions, namely the support of the distribution being truly $d$-dimensional---such that the distance between neighbouring points in a collection of $P$ data points scales as $P^{-1/d}$---and the distribution itself decaying rapidly away from the mean or having compact support. Our experiments, discussed in~\autoref{sec:empirical}, support this hypothesis.

\begin{lemma}[Spectra of convolutional kernels]\label{lemma:conv-spectra}
Let $K^{CN}$ be a convolutional kernel defined as in~\autoref{eq:conv-ker}, with a constituent kernel $C$ satisfying assumptions $i)$, $ii)$ and $iii)$ above. Then $K^{CN}$ admits the following Mercer's decomposition,
\begin{equation}\label{eq:conv-decomp}
K^{CN}(\bm{x},\bm{y}) =\sum_{\rho=1}^{\infty} \Lambda_\rho \Phi_\rho(\bm{x}) \overline{\Phi_\rho(\bm{y})},
\end{equation}
with eigenvalues and eigenfunctions
\begin{equation}\label{eq:conv-spectrum}
  \Lambda_1 = \lambda_1,\,  \Phi_1(\bm{x}) = 1;\,\, \Lambda_\rho = \frac{s}{d}\lambda_\rho,\,  \Phi_\rho(\bm{x}) = \sqrt{\frac{s}{d}}\sum_{i\in\mathcal{P}}\phi_\rho(\bm{x}_i)  \text{ for }\rho > 1.
\end{equation}
\end{lemma}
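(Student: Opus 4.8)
The plan is to derive the claimed Mercer decomposition by substituting the eigendecomposition \autoref{eq:mercer-c} of the constituent kernel $C$ directly into the definition \autoref{eq:conv-ker} of $K^{CN}$, and then normalising the resulting functions using the patch structure imposed by assumptions $ii)$ and $iii)$. Concretely, I would first write
\[
K^{CN}(\bm{x},\bm{y})=\frac{1}{|\mathcal{P}|^{2}}\sum_{i,j\in\mathcal{P}}\sum_{\rho\ge1}\lambda_\rho\,\phi_\rho(\bm{x}_i)\phi_\rho(\bm{y}_j)=\sum_{\rho\ge1}\lambda_\rho\,\tilde\Phi_\rho(\bm{x})\,\tilde\Phi_\rho(\bm{y}),\qquad \tilde\Phi_\rho(\bm{x}):=\frac{1}{|\mathcal{P}|}\sum_{i\in\mathcal{P}}\phi_\rho(\bm{x}_i),
\]
where the interchange of the (uniformly convergent) Mercer sum with the finite patch sums is harmless. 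So the $\tilde\Phi_\rho$ are already the unnormalised eigenfunctions, and the whole content of the lemma lies in computing their $L^2(p^{(d)})$ norms and checking mutual orthogonality.

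This is exactly where the assumptions enter. By $ii)$ the patches are nonoverlapping, and by $iii)$ — together with the independence of distinct patches under $p^{(d)}$ that must accompany it — the family $\{\bm{x}_i\}_{i\in\mathcal{P}}$ is i.i.d.\ with law $p^{(s)}$. Hence for $i\ne j$ one has $\mathbb{E}_{p^{(d)}}[\phi_\rho(\bm{x}_i)\phi_\sigma(\bm{x}_j)]=\big(\int p^{(s)}\phi_\rho\big)\big(\int p^{(s)}\phi_\sigma\big)$, which vanishes as soon as $\max(\rho,\sigma)>1$, because $\phi_1\equiv1$ and $\int p^{(s)}\phi_\rho=0$ for $\rho>1$ by assumption $i)$; and for $i=j$ one has $\mathbb{E}_{p^{(d)}}[\phi_\rho(\bm{x}_i)\phi_\sigma(\bm{x}_i)]=\langle\phi_\rho,\phi_\sigma\rangle_{p^{(s)}}=\delta_{\rho\sigma}$. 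Summing these contributions gives $\langle\tilde\Phi_\rho,\tilde\Phi_\sigma\rangle_{p^{(d)}}=0$ for $\rho\ne\sigma$, $\|\tilde\Phi_1\|^2=1$ (indeed $\tilde\Phi_1\equiv1$), and $\|\tilde\Phi_\rho\|^2=1/|\mathcal{P}|=s/d$ for $\rho>1$.

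Normalising then yields $\Phi_1=\tilde\Phi_1=1$ and, for $\rho>1$, $\Phi_\rho=\sqrt{d/s}\,\tilde\Phi_\rho=\sqrt{s/d}\sum_{i\in\mathcal{P}}\phi_\rho(\bm{x}_i)$, so that $\lambda_\rho\,\tilde\Phi_\rho(\bm{x})\tilde\Phi_\rho(\bm{y})=\Lambda_\rho\,\Phi_\rho(\bm{x})\Phi_\rho(\bm{y})$ with $\Lambda_1=\lambda_1$ and $\Lambda_\rho=(s/d)\lambda_\rho$ for $\rho>1$. Since $C$ is positive definite the $\lambda_\rho$, and hence the $\Lambda_\rho$, are nonnegative, and the $\Phi_\rho$ are real and orthonormal, so the expansion is a bona fide Mercer series; moreover any $g\in L^2(p^{(d)})$ orthogonal to every $\Phi_\rho$ satisfies $\int p^{(d)}(d^d y)\,K^{CN}(\bm{x},\bm{y})g(\bm{y})=0$, i.e.\ lies in the null space of the integral operator, so $\{(\Lambda_\rho,\Phi_\rho)\}$ is precisely its nonzero spectrum, and the ordering is inherited since $\Lambda_1=\lambda_1\ge\lambda_2>(s/d)\lambda_2=\Lambda_2\ge\cdots$. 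As a consistency check I would also verify the eigenvalue equation $\int p^{(d)}(d^d y)\,K^{CN}(\bm{x},\bm{y})\Phi_\rho(\bm{y})=\Lambda_\rho\Phi_\rho(\bm{x})$ directly, each term reducing to the single-patch identity $\int p^{(s)}(d^s y)\,C(\bm{x}_i,\bm{y})\phi_\rho(\bm{y})=\lambda_\rho\phi_\rho(\bm{x}_i)$.

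The main obstacle — or rather the only delicate point — is the treatment of the cross-patch terms $i\ne j$: the clean formulas above rely on distinct patches being independent (or at least on $\mathbb{E}[\phi_\rho(\bm{x}_i)\mid\bm{x}_j]=0$ for $\rho>1$), which goes beyond the single-patch statement of assumption $iii)$ as literally written and must be folded into the auxiliary assumptions on the data distribution alluded to in the text. Everything else is routine: justifying term-by-term integration uses uniform convergence of the Mercer series of $C$ together with dominated convergence, and the algebra of the normalisation constants is immediate once the i.i.d.\ patch structure is granted.
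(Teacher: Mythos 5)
Your proposal is correct and follows essentially the same route as the paper's proof: substitute the Mercer series of $C$ into the definition of $K^{CN}$, split the patch sums into $i=j$ and $i\neq j$ terms, and use orthonormality of the $\phi_\rho$ together with $\int p^{(s)}\phi_\rho=0$ for $\rho>1$ to get orthonormal $\Phi_\rho$ and the eigenvalue equation with $\Lambda_\rho=(s/d)\lambda_\rho$. The "delicate point" you flag — factorising the cross-patch integrals, which requires independence of disjoint patches beyond the literal wording of assumption $iii)$ — is used implicitly in the paper's proof as well, so your treatment is if anything slightly more careful on that score.
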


\begin{lemma}[Spectra of local kernels]\label{lemma:loc-spectra}
Let $K^{LC}$ be a local kernel defined as in~\autoref{eq:loc-ker}, with a constituent kernel $C$ satisfying assumptions $i)$, $ii)$ and $iii)$ above. Then $K^{LC}$ admits the following Mercer's decomposition,
\begin{equation}\label{eq:loc-decomp}
K^{LC}(\bm{x},\bm{y}) = \Lambda_1 \Phi_1(\bm{x}) +  \sum_{\rho>1}^{\infty}\sum_{i\in\mathcal{P}} \Lambda_{\rho,i} \Phi_{\rho,i}(\bm{x}) \overline{\Phi_{\rho,i}(\bm{y})},
\end{equation}
with eigenvalues and eigenfunctions ($\forall i\in \mathcal{P}$)
\begin{equation}\label{eq:loc-spectrum}
  \Lambda_{1} = \lambda_1,\,  \Phi_{1}(\bm{x}) = 1;\,\, \Lambda_{\rho,i} = \frac{s}{d}\lambda_\rho,\,  \Phi_{\rho,i}(\bm{x}) = \phi_\rho(\bm{x}_i)  \text{ for }\rho > 1.
\end{equation}
\end{lemma}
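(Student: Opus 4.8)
The plan is to guess a complete orthonormal eigensystem for the integral operator $T_{K^{LC}}f(\bm x)=\int p^{(d)}(d^dy)\,K^{LC}(\bm x,\bm y)f(\bm y)$ --- namely the functions listed in \autoref{eq:loc-spectrum} --- then check orthonormality in $L^2(p^{(d)})$, check the eigenvalue equation, and check that these eigenpairs reconstruct $K^{LC}$ pointwise; the uniqueness of Mercer's decomposition then identifies them as \emph{the} spectrum. The computation runs parallel to that of Lemma~\ref{lemma:conv-spectra}, the only conceptual ingredient being the behaviour of the input measure across distinct, nonoverlapping patches.

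\emph{Orthonormality.} With $\Phi_{\rho,i}(\bm x)=\phi_\rho(\bm x_i)$ for $\rho>1$ and $\Phi_1\equiv 1$, I would compute $\langle\Phi_{\rho,i},\Phi_{\sigma,j}\rangle_{L^2(p^{(d)})}=\int p^{(d)}(d^dx)\,\phi_\rho(\bm x_i)\overline{\phi_\sigma(\bm x_j)}$. For $i=j$, assumption $iii)$ reduces this to $\langle\phi_\rho,\phi_\sigma\rangle_{L^2(p^{(s)})}=\delta_{\rho\sigma}$, which is built into \autoref{eq:mercer-c}. For $i\neq j$, assumption $ii)$ makes patches $i$ and $j$ disjoint coordinate blocks; using that these blocks are independent under $p^{(d)}$ with common marginal $p^{(s)}$ --- the reading of assumption $iii)$ appropriate to nonoverlapping patches --- the integral factorises into $\big(\int p^{(s)}\phi_\rho\big)\big(\int p^{(s)}\overline{\phi_\sigma}\big)$, which vanishes unless $\rho=\sigma=1$ by assumption $i)$. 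Since $\Phi_{1,i}\equiv\phi_1\equiv 1$ is the same function for every $i$, these constant modes collapse to the single $\Phi_1$, which is exactly why \autoref{eq:loc-decomp} carries only one constant term.

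\emph{Eigenvalue equation and reconstruction.} For $\rho>1$, expanding $K^{LC}$ via \autoref{eq:loc-ker} gives $T_{K^{LC}}\Phi_{\rho,i}(\bm x)=|\mathcal{P}|^{-1}\sum_{k\in\mathcal{P}}\int p^{(d)}(d^dy)\,C(\bm x_k,\bm y_k)\phi_\rho(\bm y_i)$. The $k=i$ term equals $|\mathcal{P}|^{-1}\lambda_\rho\phi_\rho(\bm x_i)$ by assumption $iii)$ and the eigenrelation of \autoref{eq:mercer-c}; for $k\neq i$, inserting the Mercer expansion of $C(\bm x_k,\cdot)$ produces cross-patch correlations $\int p^{(d)}\phi_\sigma(\bm y_k)\phi_\rho(\bm y_i)$ that factorise and vanish for $\rho>1$, exactly as in the orthonormality step. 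Hence $T_{K^{LC}}\Phi_{\rho,i}=|\mathcal{P}|^{-1}\lambda_\rho\,\Phi_{\rho,i}$, and $|\mathcal{P}|=d/s$ gives the eigenvalue $\frac{s}{d}\lambda_\rho$; the case $\rho=1$ gives $T_{K^{LC}}\Phi_1=|\mathcal{P}|^{-1}\sum_k\lambda_1=\lambda_1$, so $\Lambda_1=\lambda_1$. To close, I would verify the reconstruction directly: $\Lambda_1+\sum_{\rho>1}\sum_{i}\frac{s}{d}\lambda_\rho\phi_\rho(\bm x_i)\overline{\phi_\rho(\bm y_i)}$ equals $\frac{s}{d}\sum_i C(\bm x_i,\bm y_i)=K^{LC}(\bm x,\bm y)$, using $\frac{s}{d}|\mathcal{P}|=1$ to collapse the constant part; with orthonormality in hand, uniqueness of Mercer's decomposition finishes the proof.

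\emph{Main obstacle.} The whole argument is routine bookkeeping \emph{provided} the $d$-dimensional measure, restricted to any family of distinct nonoverlapping patches, factorises into copies of $p^{(s)}$. This is the only structural fact used, and it is genuinely necessary: a two-patch example with correlated patches already destroys the clean spectrum of \autoref{eq:loc-spectrum}. I would therefore make this independence explicit as part of assumption $iii)$ before running the computation above; everything else then follows mechanically.
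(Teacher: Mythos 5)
Your proof is correct and follows essentially the same route as the paper's: orthonormality via the zero-mean property of $\phi_\rho$ ($\rho>1$) on off-diagonal patch pairs, the eigenvalue equation by splitting the patch sum into the $k=i$ term and the $k\neq i$ terms, and direct reconstruction of $K^{LC}$ from the Mercer expansion of the constituent. Your remark that the off-diagonal steps need the input measure to factorise over disjoint patches (not merely have identical marginals, as assumption $iii)$ literally states) is a fair sharpening --- the paper's own proof uses this factorisation implicitly when it splits $\int p(d^dx)$ into $\int p(d^sx_i)\int p(d^sx_j)$.
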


Under assumptions $i)$, $ii)$ and $iii)$ above, lemmas~\ref{lemma:conv-spectra} and~\ref{lemma:loc-spectra} follow from the definitions of convolutional and local kernels and the eigendecompositions of the constituents (see~\autoref{app:mercer-overlap} for a proof of the lemmas and generalisation to kernels with overlapping patches). In the next section, we explore the consequences of these results for the asymptotics of learning curves.

\section{Asymptotic learning curves for ridgeless regression}\label{sec:learning-curves}

In what follows, we consider explicitly translationally-invariant constituent kernels $C(\bm{x}_i,\bm{x}_j)\,{=}\, \mathcal{C}(\bm{x}_i - \bm{x}_j)$ and a $d$-dimensional data distribution $p(d^dx)$ which is uniform on the torus, so that all lower-dimensional marginals are also uniform on lower-dimensional tori. Under these conditions, all results of~\autoref{sec:convolutional-mercer} can be extended to kernels with overlapping patches ($\mathcal{P}\,{=}\,\left\lbrace 1,\dots, d \right\rbrace$), so that the main results of this paper apply to nonoverlapping as well as overlapping-patches kernels. Furthermore, Mercer's decomposition~\autoref{eq:mercer-c} can be written in Fourier space~\cite{scholkopf2001learning}, with $s$-dimensional plane waves $\phi^{(s)}_{\bm{k}}(\bm{x})\,{=}\,e^{i\bm{k}\cdot\bm{x}}$ as eigenfunctions and the eigenvalues coinciding with the Fourier transform of $\mathcal{C}$. Furthermore, for kernels with filter size $s$  (or $t$) and positive smoothness exponent $\alpha_s$ (or $\alpha_t$), the eigenvalues decay with a power $-(s\,{+}\,\alpha_s)$ (or $-(t\,{+}\,\alpha_t)$) of the modulus of the wavevector $k\,{=}\,\sqrt{\bm{k}\cdot\bm{k}}$~\cite{widom1964asymptotic}. In this setting, we obtain our main result: 
\begin{theorem}\label{th:scaling}
Let $K_T$ be a $d$-dimensional convolutional kernel with a translationally-invariant $t$-dimensional constituent and leading nonanalyticity at the origin controlled by the exponent $\alpha_t\,{>}\,0$. Let $K_S$ be a $d$-dimensional convolutional or local student kernel with a translationally-invariant $s$-dimensional constituent, and with a nonanalyticity at the origin controlled by the exponent $\alpha_s\,{>}\,0$.
Assume, in addition, that  if the kernels have overlapping patches then $s\geq t$, whereas if the kernels have nonoverlapping patches $s$ is an integer multiple of $t$; and that data are uniformly distributed on a $d$-dimensional torus. Then, the following asymptotic equivalence holds in the limit $P\to\infty$,
$$\mathcal{B}(P) \sim P^{-\beta}, \quad \beta = \alpha_t / s.$$
\end{theorem}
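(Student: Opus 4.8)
The plan is to compute the tail sum $\mathcal{B}(P)=\sum_{\rho>P}\mathbb{E}[|c_\rho|^2]$ directly, using the Mercer decompositions of Lemmas~\ref{lemma:conv-spectra} and~\ref{lemma:loc-spectra} for the student kernel $K_S$ and the analogous decomposition for the teacher $K_T$. Since the target function is a Gaussian random field with covariance $K_T$, its expansion coefficients in the \emph{student} eigenbasis have variances $\mathbb{E}[|c_\rho|^2]$ equal to the diagonal entries of $K_T$ in that basis. The first step is therefore to express $\mathbb{E}[|c_\rho|^2]$ in terms of the teacher's constituent eigenvalues: because both $K_S$ and $K_T$ are built from lower-dimensional constituents diagonalised by plane waves (once we pass to the torus and Fourier space, as the theorem's hypotheses allow), the student eigenfunctions indexed by a wavevector $\bm{k}$ supported on a single $s$-patch overlap with teacher modes supported on $t$-patches, and one can read off $\mathbb{E}[|c_\rho|^2]$ as a sum of teacher constituent eigenvalues over the compatible $t$-dimensional wavevectors. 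Here the divisibility/overlap hypothesis ($s$ a multiple of $t$, or $s\geq t$ with overlapping patches) is what guarantees each $s$-patch cleanly decomposes into $t$-patches so the bookkeeping closes.

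The second step is to order the student eigenvalues $\Lambda_\rho$ by magnitude and identify the $P$-th one. Since the student constituent eigenvalues decay as $k^{-(s+\alpha_s)}$ over an $s$-dimensional lattice of wavevectors, counting modes with $\|\bm{k}\|\le \Lambda$ gives $\rho\sim \Lambda^s$, hence the $\rho$-th largest student eigenvalue behaves like $\rho^{-(s+\alpha_s)/s}$; equivalently, the first $P$ student modes are exactly those with $s$-dimensional wavevector modulus up to $k_P\sim P^{1/s}$. The third step is to sum the teacher power over the discarded modes: $\mathcal{B}(P)$ is the total $\mathbb{E}[|c_\rho|^2]$ carried by student modes with $\|\bm{k}\|> k_P$. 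Using the expression from step one, this is a sum of teacher constituent eigenvalues $\sim q^{-(t+\alpha_t)}$ over $t$-dimensional wavevectors $\bm{q}$ whose modulus exceeds (a constant times) $k_P$. Converting the sum to an integral over $t$-dimensional $\bm{q}$-space, $\int_{q>k_P} q^{-(t+\alpha_t)} q^{t-1}\,dq \sim k_P^{-\alpha_t}$, which with $k_P\sim P^{1/s}$ yields $\mathcal{B}(P)\sim P^{-\alpha_t/s}$, i.e. $\beta=\alpha_t/s$. The convolutional-student case differs from the local one only in multiplicities of eigenvalues (Lemma~\ref{lemma:conv-spectra} vs.~\ref{lemma:loc-spectra}), which affects the constant prefactor but not the exponent, so the two are treated together.

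The main obstacle is step one: carefully matching the teacher's modal decomposition to the \emph{student's} eigenbasis and showing that the induced coefficient variances $\mathbb{E}[|c_\rho|^2]$, as a function of the student mode index $\rho$, inherit a clean power law governed by $t+\alpha_t$ in the $t$-dimensional wavevector but indexed through the $s$-dimensional counting of the student. One must check that cross terms between distinct patches (present in the convolutional case through the $\sum_{i\in\mathcal{P}}\phi_\rho(\bm{x}_i)$ eigenfunctions) do not spoil the diagonalisation — on the uniform torus they vanish by orthogonality of plane waves at distinct wavevectors — and that the subleading analytic parts of the constituent kernels contribute only faster-decaying tails. A secondary technical point is controlling the passage from discrete wavevector sums to integrals uniformly in $P$, which is standard for smooth power-law summands but should be stated. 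Everything else is the routine asymptotics of $\sum_{\rho>P}\rho^{-b}\sim P^{1-b}$ already invoked around \autoref{eq:error-scaling}.
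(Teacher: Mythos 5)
Your proposal follows essentially the same route as the paper's proof: diagonalise both kernels in Fourier space on the torus, count student modes to get the cutoff $k_c(P)\sim P^{1/s}$, show via plane-wave orthogonality that the target's coefficient variances in the student basis are supported on effectively $t$-dimensional wavevectors with the teacher's $q^{-(t+\alpha_t)}$ decay, and evaluate the tail as a $t$-dimensional integral giving $P^{-\alpha_t/s}$, with the local student differing only by a multiplicity factor $|\mathcal{P}|$. The only detail you gloss over is that with overlapping patches the student eigenvalues carry $(s-u+1)/d$ factors and are not monotone in $k$, which the paper handles by noting the lower-dimensional wavevector sets $\mathcal{F}^u$, $u<t$, have vanishing density so the asymptotics are unchanged.
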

\autoref{th:scaling}, together with \autoref{eq:error-scaling} and the additional assumption $\alpha_t\,{\leq}\, 2(\alpha_s + s)$, yields the following expression for the learning curves asymptotics, 
\begin{equation}\label{eq:prediction}
    \epsilon(P) \sim P^{-\beta}, \quad \beta = \alpha_t / s.
\end{equation}
As $\beta$ is independent of the embedding dimension $d$, we conclude that the curse of dimensionality is beaten when a convolutional target is learnt with a convolutional or local kernel. In fact, ~\autoref{eq:prediction} indicates that there is no asymptotic advantage in using a convolutional rather than local student when learning a convolutional task, confirming the picture that locality, not weight sharing, is the main source of the convolutional architecture's performances~\cite{poggio2017and}. In \autoref{app:thm1} we show that the generalization error of a local student learning convolutional teacher decays as
\begin{equation}\label{eq:prediction-local-stud}
    \epsilon(P) \sim \left(\frac{P}{|\mathcal{P}|}\right)^{-\beta}, \quad \beta = \alpha_t / s.
\end{equation}
\autoref{eq:prediction-local-stud} implies that including weight sharing only amounts to a rescaling of $P$ by a factor $|\mathcal{P}|$---the size of the translation group over patches---recovering the result obtained in \cite{mei2021learning}. 
Intuitively, a local student will need $|\mathcal{P}|$ times more points than a convolutional student to learn the target with comparable accuracy, since it has to learn the same local function in all the possible $|\mathcal{P}|$ locations.
The predictions in ~\autoref{eq:prediction} and \autoref{eq:prediction-local-stud} are confirmed empirically, as discussed in~\autoref{sec:empirical} and \autoref{app:numerics}. Let us mention in particular that, although our predictions are valid only asymptotically, they hold already in the range $P\sim 10^2-10^3$, consistently with the number of training points typically used in applications.

\autoref{th:scaling} is proven in~\autoref{app:thm1} and extended to the case of a local teacher and local student in~\autoref{app:local}. Here we sketch the proof for the nonoverlapping case, which begins with the calculation of the variance of the coefficients of the target function in the student basis. By indexing the coefficients with the $s$-dimensional wavevectors $\bm{k}$,
\begin{equation}\label{eq:coeff}\begin{aligned}
  \mathbb{E}[|c_{\bm{k}}|^2] &= \int_{[0,1]^d} d^d x \Phi_{\bm{k}}(\bm{x})\int_{[0,1]^d} d^d y \overline{\Phi_{\bm{k}}}(\bm{y}) \mathbb{E}[f^*(\bm{x})f^*(\bm{y})]\\ & =\int_{[0,1]^d} d^d x \Phi_{\bm{k}}(\bm{x})\int_{[0,1]^d} d^d y \overline{\Phi_{\bm{k}}}(\bm{y}) K_T(\bm{x}, \bm{y}). 
\end{aligned}\end{equation}
If the size of teacher and student coincide, $s\,{=}\,t$, teacher and student have the same eigenfunctions. Thus, using the eigenvalue equation \autoref{eq:mercer} of the teacher yields $\mathbb{E}[|c_{\bm{k}}|^2] \sim k^{-(\alpha_t + t)}\,{=}\,k^{-(\alpha_t + s)}$. After ranking eigenvalues by $k$, with multiplicity $k^{s-1}$ from all the wavevectors having the same modulus $k$, one has
\begin{equation}\label{eq:t-equal-s}
    \mathcal{B}(P) = \sum_{\left\lbrace \bm{k}| k > P^{1/s}\right\rbrace} k^{-(\alpha_t + s)} \sim \int_{P^{1/s}}^\infty dk k^{s-1} k^{-(\alpha_t + s)} \sim P^{-\frac{\alpha_t}{s}}.
\end{equation}
When the filter size of the teacher $t$ is lowered, some of the coefficients $\mathbb{E}[|c_{\bm{k}}|^2]$ vanish. As the target function becomes a composition of $t$-dimensional constituents, the only non-zero coefficients are found for $\bm{k}$'s which lie in some $t$-dimensional subspaces of the $s$-dimensional Fourier space. These subspaces correspond to the $\bm{k}$ having at most a patch of $t$ consecutive non-vanishing components. In other words, $\mathbb{E}[|c_{\bm{k}}|^2]$ is finite only if $\bm{k}$ is effectively $t$-dimensional and the integral on the right-hand side of~\autoref{eq:t-equal-s} becomes $t$-dimensional, thus
\begin{equation}\label{eq:t-smaller-s}
    \mathcal{B}(P) \sim \int_{P^{1/s}}^\infty dk k^{t-1} k^{-(\alpha_t + t)} \sim P^{-\frac{\alpha_t}{s}}.
\end{equation}

If the teacher patches are not contained in the student ones, the target cannot be represented with a combination of student eigenfunctions, hence the error asymptotes to a finite value when $P\to\infty$.

\section{Empirical learning curves for ridgeless regression}\label{sec:empirical}

This section investigates numerically the asymptotic behaviour of the learning curves for our teacher-student framework. We consider different combinations of convolutional and local teachers and students with overlapping patches and Laplacian constituent kernels, i.e. $\mathcal{C}(\bm{x}_i-\bm{x}_j) \, {=} \, e^{-\|\bm{x}_i-\bm{x}_j\|}$. In order to test the robustness of our results to the data distribution, data are uniformly generated in the hypercube $[0,1]^d$ (results in \autoref{fig:figure}) or on a $d$-hypersphere (results in~\autoref{app:numerics}). \autoref{fig:figure} shows learning curves for both convolutional (left panels) and local (right panels) students learning a convolutional target function. The results in the case of a local teacher are presented in \autoref{app:numerics}, and display no qualitative differences.

In the following, we always refer to \autoref{fig:figure}. Panels A and B show that, with $\alpha_t\,{=}\,\alpha_s\,{=}\,1$, our prediction $\beta\,{=}\,1/s$ holds independently of the embedding dimension $d$. Furthermore, notice that fixing the dimension $d$ and the teacher filter size $t$, the generalisation errors of a convolutional and a local student with the same filter size differ only by a multiplicative constant independent of $P$. Indeed, the shift-invariant nature of the convolutional student only results in a pre-asymptotic correction to our estimate of the generalisation error $\mathcal{B}(P)$. In \autoref{app:numerics}, we check that this multiplicative constant corresponds to rescaling $P$ by the number of patches, as predicted in \autoref{sec:learning-curves}. Panels C and D show learning curves for several values of $s$ and fixed $t$. The curse of dimensionality is recovered when the size of the student filters coincides with the input dimension, both for local and convolutional students. Finally, panels E and F show learning curves for fixed $t$ and $s$ being smaller than, equal to or larger than $t$. We stress that, when $s\,{<}\,t$ the student kernel cannot reproduce the target function, hence the error does not decrease by increasing $P$. Further details on the experiments are provided in~\autoref{app:numerics}, together with learning curves for data distributed uniformly on the unit sphere $\mathbb{S}^{d-1}$ and for regression with the actual analytical and empirical NTKs of one-hidden-layer convolutional networks. It is worthwhile to notice that experiments are always in excellent agreement with our predictions, despite using data distributions that are out of the hypotheses of~\autoref{th:scaling}. Indeed, for regression with the actual NTK even the assumption of translationally-invariant constituents is violated. Moreover, we report the learning curves of local kernels on the CIFAR-10 dataset showing that smaller filter sizes correspond to faster decays even for real and anisotropic data distributions, in agreement with the picture emerging from our synthetic model.

\begin{figure}
    \centering
    \includegraphics[width=1.0\linewidth]{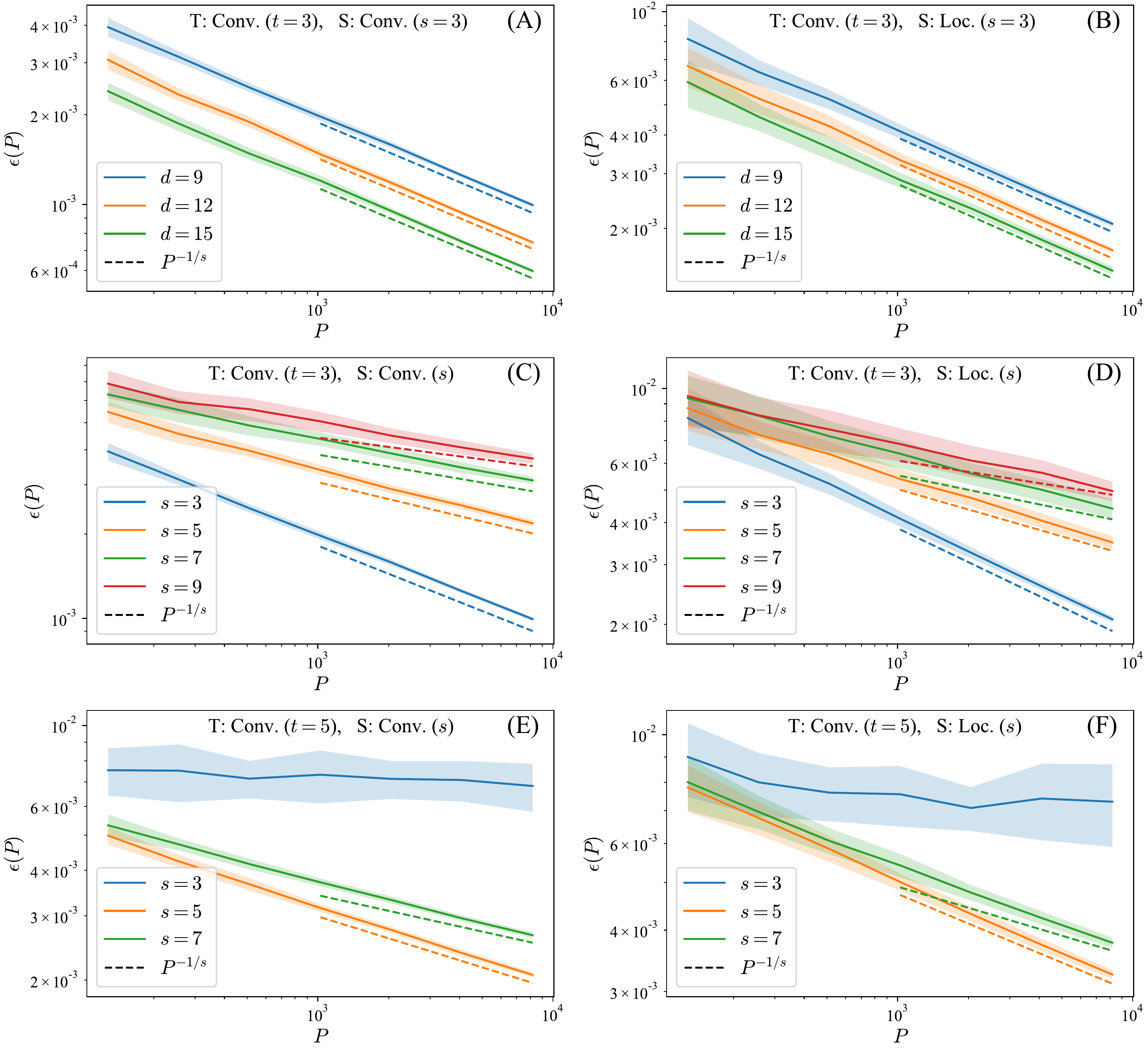}
    \caption{Learning curves for different combinations of convolutional teachers with convolutional (left panels) and local (right panels) students. The teacher and student filter sizes are denoted with $t$ and $s$ respectively. Data are sampled uniformly in the hypercube $[0,1]^d$, with $d=9$ if not specified otherwise. Solid lines are the results of numerical experiments averaged over 128 realisations and the shaded areas represent the empirical standard deviations. The predicted scalings are shown by dashed lines. All the panels are discussed in \autoref{sec:empirical}, while additional details on experiments are reported in \autoref{app:numerics}, together with additional experiments.}
    \label{fig:figure}
\end{figure}

\section{Asymptotics of learning curves with decreasing ridge}

We now prove an upper bound for the exponent $\beta$ implying that the curse of dimensionality is beaten by a local or convolutional kernel learning a convolutional target (as in~\autoref{eq:loc-f}), using the framework developed in~\cite{jacot2020kernel} and a natural universality assumption on the kernel eigenfunctions. It is worth noticing that this framework does not require the target function to be generated by a teacher kernel. Proofs are presented in~\autoref{app:thm2}. 
Let $\mathcal{D}(\Lambda)$ denote the density of eigenvalues of the student kernel, $\mathcal{D}(\Lambda) \,{=}\, \sum_\rho \delta(\Lambda - \Lambda_\rho)$, with $\delta(x)$ denoting Dirac delta function. Having a random target function with coefficients $c_\rho$ in the kernel eigenbasis having variance $\mathbb{E}[|c_\rho|^2]$, one can define the following reduced density (with respect to the teacher):
\begin{equation}\label{eq:reduced-density}
\mathcal{D}_T(\Lambda) = \displaystyle\sum_{\left\lbrace\rho| \mathbb{E}[|c_\rho|^2] > 0\right\rbrace } \delta(\Lambda - \Lambda_\rho)
\end{equation}
$\mathcal{D}_T(\Lambda)$ counts  eigenvalues for which the target has a non-zero variance, such that:
\begin{equation}
    \sum_{\rho} \mathbb{E}[|c_\rho|^2] = \int d\Lambda \mathcal{D}_T(\Lambda) c^2(\Lambda),
\end{equation}
where the function $c(\Lambda)$ is defined by $c^2(\Lambda_\rho)\,{=}\,\mathbb{E}[|c_\rho|^2] $ for all $\rho$ such that $\mathbb{E}[|c_\rho|^2]\,{>}\,0$.
The following theorem then follows from the results of~\cite{jacot2020kernel}. 
\begin{theorem}\label{th:ridge}
Let us consider a positive-definite kernel $K$ with eigenvalues $\Lambda_\rho$, $\sum_\rho \Lambda_\rho < \infty$, and eigenfunctions ${\Phi_\rho}$ learning a (random) target function $f^*$ in kernel ridge regression (\autoref{eq:argmin}) with ridge $\lambda$ from $P$ observations $f^*(\bm{x}^\mu)$, with $\bm{x}^\mu\in \mathbb{R}^d$ drawn from a certain probability distribution. Let us denote with $\mathcal{D}_T(\Lambda)$ the reduced density of kernel eigenvalues with respect to the target and $\epsilon(\lambda,P)$ the generalisation error and also assume that
\begin{itemize}
    \item[$i)$] For any $P$-tuple of indices $\rho_1,\dots,\rho_P$, the vector $(\Phi_{\rho_1}(\bm{x}^1), \dots,\Phi_{\rho_P}(\bm{x}^P))$ is a Gaussian random vector;
    \item[$ii)$] The target function can be written in the kernel eigenbasis with coefficients $c_\rho$ and $c^2(\Lambda_\rho)\,{=}\,\mathbb{E}[|c_\rho|^2]$, with $\mathcal{D}_T(\Lambda) \sim \Lambda^{-(1+r)}$, $c^2(\Lambda) \sim \Lambda^{q}$ asymptotically for small $\Lambda$ and $r\,{>}\,0$, $ r\,{<}\,q\,{<}\,r+2$;
\end{itemize}
Then the following equivalence holds in the joint $P\to\infty$ and $\lambda\to 0$ limit with $1/(\lambda\sqrt{P})\to 0$:
\begin{equation}
    \epsilon(\lambda, P) \sim  \sum_{\left\lbrace \rho|\Lambda_\rho < \lambda \right\rbrace} \mathbb{E}{[|c_\rho|^2]} = \int_0^{\lambda} d\Lambda \mathcal{D}_T(\Lambda) c^2(\Lambda).
\end{equation}
\end{theorem}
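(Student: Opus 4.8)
The plan is to combine the deterministic risk estimate of~\cite{jacot2020kernel}---which applies precisely because of the Gaussian universality hypothesis $i)$---with a Tauberian-type evaluation of the resulting bias under the power laws of hypothesis $ii)$. Concretely, under $i)$ the expected test error of kernel ridge regression coincides, up to errors vanishing in the stated limit, with that of its Gaussian-equivalent model, for which~\cite{jacot2020kernel} provides a closed deterministic expression. I would introduce the effective ridge $\kappa=\kappa(\lambda,P)>0$, the unique positive solution of $\kappa = \lambda + \tfrac{\kappa}{P}\sum_\rho \tfrac{\Lambda_\rho}{\Lambda_\rho+\kappa}$, and the effective degrees of freedom $\gamma = \tfrac1P\sum_\rho \tfrac{\Lambda_\rho^2}{(\Lambda_\rho+\kappa)^2}$. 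For a noiseless target the estimate of~\cite{jacot2020kernel} reads $\epsilon(\lambda,P) \sim (1-\gamma)^{-1}\sum_\rho \tfrac{\kappa^2}{(\Lambda_\rho+\kappa)^2}\,\mathbb{E}[|c_\rho|^2]$, with relative error $o(1)$ in the regime $1/(\lambda\sqrt P)\to0$, i.e. $\lambda^2P\to\infty$.

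The next step is to collapse this estimate onto a pure bias term evaluated at the bare ridge $\lambda$. Since $K$ is trace-class, $\sum_\rho\Lambda_\rho=\mathrm{Tr}\,K<\infty$, and $\Lambda_\rho/(\Lambda_\rho+\kappa)\le\Lambda_\rho/\kappa$, the self-consistency equation forces $\lambda\le\kappa\le\lambda+\mathrm{Tr}\,K/P$, hence $\kappa/\lambda\to1$ because $1/(\lambda P)\to0$; likewise $\gamma\le(\mathrm{Tr}\,K)^2/(P\kappa^2)\le(\mathrm{Tr}\,K)^2/(\lambda\sqrt P)^2\to0$. Moreover the ratios $\kappa^2(\Lambda_\rho+\lambda)^2/[\lambda^2(\Lambda_\rho+\kappa)^2]$ are within a factor $1+o(1)$ of $1$ uniformly in $\rho$, so one may substitute $\kappa\to\lambda$ termwise, obtaining $\epsilon(\lambda,P)\sim\sum_\rho \tfrac{\lambda^2}{(\Lambda_\rho+\lambda)^2}\,\mathbb{E}[|c_\rho|^2] = \int_0^\infty \tfrac{\lambda^2}{(\Lambda+\lambda)^2}\,\mathcal{D}_T(\Lambda)c^2(\Lambda)\,d\Lambda$, the last equality being exact from the definitions of $\mathcal{D}_T$ and $c$.

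It then remains to evaluate this integral as $\lambda\to0$. Writing $g(\Lambda):=\mathcal{D}_T(\Lambda)c^2(\Lambda)$, hypothesis $ii)$ gives $g(\Lambda)\sim A\Lambda^{q-r-1}$ as $\Lambda\to0^+$ with $q-r-1\in(-1,1)$. Rescaling $\Lambda=\lambda u$ and splitting at a small fixed $\delta$: on $(0,\delta]$ one inserts the local asymptotics and uses $\int_0^\infty u^{q-r-1}(u+1)^{-2}\,du = B(q-r,\,2-q+r)<\infty$, finite exactly because $r<q<r+2$; on $(\delta,\infty)$ the weight is $O(\lambda^2)$ while $\int_\delta^\infty g\le\sum_\rho\mathbb{E}[|c_\rho|^2]<\infty$, so that piece is $o(\lambda^{q-r})$ since $q-r<2$. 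Combined with $\int_0^\lambda g\sim A\lambda^{q-r}/(q-r)$, this yields $\epsilon(\lambda,P)\sim (q-r)B(q-r,2-q+r)\int_0^\lambda \mathcal{D}_T(\Lambda)c^2(\Lambda)\,d\Lambda$; in particular both sides decay as $\lambda^{q-r}$, which is the asserted equivalence (understood, as elsewhere in the paper, up to a positive multiplicative constant).

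The main obstacle is the first step. The estimators of~\cite{jacot2020kernel} hold only up to error terms that must be controlled \emph{uniformly} in the joint limit $P\to\infty$, $\lambda\to0$, and it is exactly the hypothesis $1/(\lambda\sqrt P)\to0$ that renders these corrections---together with the variance and degrees-of-freedom contributions encoded in $\gamma$---subleading; one must also check that assumption $i)$ genuinely reduces the problem to the Gaussian-equivalent model those results govern. The remaining steps---the monotone squeeze $\kappa\to\lambda$ and the dominated-convergence evaluation of the bias integral---are routine.
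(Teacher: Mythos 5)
Your proposal is correct and follows essentially the same route as the paper: your effective ridge $\kappa$ and prefactor $(1-\gamma)^{-1}$ are exactly the signal capture threshold $\vartheta(\lambda)$ and $\partial_\lambda\vartheta(\lambda)$ of~\cite{jacot2020kernel} used in the paper's proof, the squeeze $\lambda\le\kappa\le\lambda+\mathrm{Tr}[T_K]/P$ plays the role of their Proposition~3, and the Beta-function evaluation of the bias integral under $r<q<r+2$ is the same computation. The one step you flag as the ``main obstacle''---uniform control of the deterministic-equivalent error in the joint limit---is precisely what the paper closes by citing Theorem~6 of~\cite{jacot2020kernel}, whose bound $|\epsilon-\mathcal{R}|\le\bigl(P^{-1}+g(\mathrm{Tr}[T_K]/(\lambda\sqrt{P}))\bigr)\mathcal{R}$ vanishes relative to $\mathcal{R}$ exactly under $1/(\lambda\sqrt{P})\to0$.
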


Note that the assumption $i)$ of the theorem on the Gaussianity of the eigenbasis does not hold in our setup where  the $\Phi_\rho$'s are plane waves. However,  the random variables $\Phi_{\rho}(\bm{x}^\mu)$ have a probability density  with compact support. It is thus natural to assume that a Gaussian universality assumption holds, i.e. that~\autoref{th:ridge} applies to our problem. With this assumption, we obtain the following
\begin{corollary}\label{cor:beta-rigorous}
Performing kernel ridge regression in a teacher-student scenario with smoothness exponents $\alpha_t$ (teacher) and $\alpha_s$ (student), with ridge $\lambda\sim P^{-\gamma}$ and $0\,{<}\,\gamma\,{<}\,1/2$, under the joint hypotheses of~\autoref{th:scaling} and ~\autoref{th:ridge}, the exponent governing the asymptotic scaling of the generalisation error with $P$ is given by: 
\begin{equation}\label{eq:beta-rigorous}
    \beta = \gamma\frac{\alpha_t}{\alpha_s + s},
\end{equation}
\end{corollary}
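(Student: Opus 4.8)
The plan is to combine the spectral description of the convolutional/local kernels from Section~\ref{sec:convolutional-mercer} with the conclusion of Theorem~\ref{th:ridge}, after verifying that the hypotheses $i)$ and $ii)$ of that theorem translate, in our teacher-student setting, into the exponent conditions stated in the corollary. First I would identify the two power-law exponents $r$ and $q$ that enter Theorem~\ref{th:ridge}. From Lemma~\ref{lemma:conv-spectra} (or Lemma~\ref{lemma:loc-spectra}) the student eigenvalues $\Lambda_\rho$ are, up to the overall factor $s/d$, the eigenvalues $\lambda_\rho$ of the $s$-dimensional constituent kernel; in Fourier space these are indexed by an $s$-dimensional wavevector $\bm k$ and decay as $\lambda_{\bm k}\sim k^{-(s+\alpha_s)}$. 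Ranking by modulus with multiplicity $k^{s-1}$ gives the eigenvalue counting function, hence the density $\mathcal{D}(\Lambda)\sim \Lambda^{-(1+r)}$ with $r = s/(s+\alpha_s)$. For the \emph{reduced} density $\mathcal{D}_T$, the argument in the proof sketch of Theorem~\ref{th:scaling} (Eqs.~\eqref{eq:t-equal-s}--\eqref{eq:t-smaller-s}) shows that only $t$-dimensional sub-shells of wavevectors carry nonzero target power, so the multiplicity becomes $k^{t-1}$; this changes the exponent to $r = t/(s+\alpha_s)$ — but the key point is that $\mathcal{D}_T(\Lambda)\sim\Lambda^{-(1+r)}$ with $r>0$ in all cases covered by Theorem~\ref{th:scaling}. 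The coefficient function is $c^2(\Lambda)$: from Eq.~\eqref{eq:coeff} with the teacher eigenvalue equation, $\mathbb{E}[|c_{\bm k}|^2]\sim k^{-(\alpha_t+t)}$ on the support, and expressing $k$ in terms of $\Lambda$ via $\Lambda\sim k^{-(s+\alpha_s)}$ yields $c^2(\Lambda)\sim \Lambda^{q}$ with $q = (\alpha_t+t)/(s+\alpha_s)$. One then checks that the assumed range $r<q<r+2$ holds precisely under the hypotheses of Theorem~\ref{th:scaling} together with $\alpha_t\le 2(\alpha_s+s)$ (the inequality $q<r+2$ is where this last bound is used; $q>r$ follows from $\alpha_t>0$).

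Granting the Gaussian universality assumption so that Theorem~\ref{th:ridge} applies, its conclusion gives
\begin{equation}\label{eq:cor-integral}
\epsilon(\lambda,P)\sim \int_0^\lambda d\Lambda\,\mathcal{D}_T(\Lambda)\,c^2(\Lambda)\sim \int_0^\lambda d\Lambda\,\Lambda^{-(1+r)+q}\sim \lambda^{q-r},
\end{equation}
the last step being valid because $q-r>0$, so the integral converges at $\Lambda=0$ and is dominated by the upper endpoint. Substituting the values found above, $q-r = \alpha_t/(s+\alpha_s)$ — note the teacher-multiplicity exponent $t$ cancels between $q$ and $r$, exactly as the filter size $t$ dropped out of $\beta$ in Theorem~\ref{th:scaling}. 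Finally, inserting the prescribed ridge schedule $\lambda\sim P^{-\gamma}$ with $0<\gamma<1/2$ (which also guarantees $1/(\lambda\sqrt P)\sim P^{\gamma-1/2}\to 0$, so the joint-limit condition of Theorem~\ref{th:ridge} is met) gives
\begin{equation}\label{eq:cor-final}
\epsilon(\lambda,P)\sim \lambda^{\,\alpha_t/(\alpha_s+s)}\sim P^{-\gamma\,\alpha_t/(\alpha_s+s)},
\end{equation}
which is the claimed $\beta = \gamma\,\alpha_t/(\alpha_s+s)$.

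The main obstacle I anticipate is not the integral~\eqref{eq:cor-integral}, which is routine, but the bookkeeping that turns the discrete, wavevector-indexed spectra of Lemmas~\ref{lemma:conv-spectra}--\ref{lemma:loc-spectra} into the clean continuum power laws $\mathcal{D}_T(\Lambda)\sim\Lambda^{-(1+r)}$ and $c^2(\Lambda)\sim\Lambda^{q}$ with the \emph{correct} exponents, and then verifying that $r<q<r+2$ is equivalent to the stated hypotheses. In particular one must be careful that the reduced density uses the teacher multiplicity $k^{t-1}$ while $c^2(\Lambda)$ is expressed through the \emph{student} eigenvalue-to-modulus relation $\Lambda\sim k^{-(s+\alpha_s)}$ — mixing these up would spoil the cancellation of $t$. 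A secondary, more conceptual gap is the Gaussian universality assumption itself: the eigenfunctions are plane waves, not Gaussian, and Theorem~\ref{th:ridge} is invoked rather than verified; the most one can do rigorously here is note, as the text already does, that the $\Phi_\rho(\bm x^\mu)$ are bounded random variables, making the universality plausible, and flag that closing this gap is beyond the present scope.
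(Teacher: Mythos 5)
Your proposal is correct and follows essentially the same route as the paper: identify $r=t/(s+\alpha_s)$ and $q=(\alpha_t+t)/(\alpha_s+s)$ from the student spectrum and the reduced density, verify $r<q<r+2$ via $\alpha_t>0$ and $\alpha_t<2(\alpha_s+s)$, then apply Theorem~\ref{th:ridge} to get $\epsilon\sim\lambda^{q-r}$ and insert $\lambda\sim P^{-\gamma}$. The cancellation of $t$ and the check that $\gamma<1/2$ ensures $1/(\lambda\sqrt{P})\to 0$ are exactly as in the paper's argument.
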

which does not vanish in the limit $d\rightarrow\infty$. Furthermore, ~\autoref{eq:beta-rigorous} depends on $s$ and not on $t$ as the prediction of~\autoref{eq:prediction}.

\section{Conclusions and future work}

Our work shows that, even in large  dimension $d$, a function can be learnt efficiently if it
can be expressed as a sum of constituent functions each depending on a smaller number of variables $t$, by performing regression with a kernel that entails such a  compositional structure with  $s$-dimensional constituents.  The learning curve exponent is then independent of $d$ and governed by $s$ if $s\geq t$, optimal for $s\,{=}\,t$ and null if $s\,{<}\,t$. 

In the context of image classification, this result relates to the ``Bag of Words'' viewpoint. Consider for example  two-dimensional images consisting of $M$ features of $t$ adjacent pixels, and that different classes correspond to distinct subsets of (possibly shared) features. If features can be located anywhere, then data lie on a $2M$-dimensional manifold. On the one hand, we expect a one-hidden-layer convolutional network with filter size $s\,{\geq}\,t$ to learn well with a learning curve exponent governed by $s$ and independent of $M$. On the other hand, a fully-connected network would suffer from the curse of dimensionality for large $M$.

Our work does not consider that the compositional structure of real data is hierarchical, with large features that consist of smaller sub-features. It is intuitively clear that depth and locality taken together are well-suited for such data structure \cite{bietti2021approximation,poggio2017and}. Extending the present teacher-student framework to this case would offer valuable quantitative insights into the question of how many data are required to learn such tasks.

\section*{Acknowledgements}
We thank Alberto Bietti, Stefano Spigler, Antonio Sclocchi, Leonardo Petrini, Mario Geiger, and Umberto Maria Tomasini for helpful discussions. This work was supported by a grant from the Simons Foundation (\#454953 Matthieu Wyart).

\bibliography{main}
\bibliographystyle{unsrt}

\newpage
\appendix

\renewcommand{\thefigure}{S\arabic{figure}}
\setcounter{figure}{0}

\renewcommand{\theequation}{S\arabic{equation}}
\setcounter{equation}{0}



\renewcommand\contentsname{Supplementary Material}

\tableofcontents

\addtocontents{toc}{\setcounter{tocdepth}{1}}

\section{Spectral bias in kernel regression}\label{app:spectral}

In this appendix we provide additional details about the derivation of~\autoref{eq:error-scaling} within the framework of~\cite{bordelon2020spectrum, canatar2021spectral}. Let us begin by recalling the definition of the kernel ridge regression estimator $f$ of a target function $f^*$,
\begin{equation}\label{eq:argmin-app}
 f =\argmin_{f\in \mathcal{H}} \left\lbrace \frac{1}{P}\displaystyle \sum_{\mu=1}^P \left(f(\bm{x}^\mu) - f^*(\bm{x}^\mu)\right)^2  + \lambda \, \|f\|^2_{\mathcal{H}} \right  \rbrace,
\end{equation}
where $\mathcal{H}$ denotes the Reproducing Kernel Hilbert Space (RKHS) of the kernel $K(\bm{x},\bm{y})$. After introducing the Mercer's decomposition of the kernel,
\begin{equation}\label{eq:mercer-app}
 K(\bm{x},\bm{y}) = \sum_{\rho=1}^\infty \lambda_\rho \phi_\rho(\bm{x}) \overline{\phi_\rho(\bm{y})}, \quad \int p\left( d^d y\right) K(\bm{x},\bm{y}) \phi_\rho(\bm{y}) = \lambda_\rho \phi_\rho(\bm{x}). 
\end{equation}
the RKHS can be characterised as a subset of the space of functions lying in the span of the kernel eigenbasis,
\begin{equation}\label{eq:RKHS-app}
    \mathcal{H} = \left\lbrace f = \sum_{\rho=1}^\infty a_\rho \phi_\rho(\bm{x})\right. \left| \; \sum_{\rho=1}^\infty \frac{|a_\rho|^2}{\lambda_\rho} <\infty \right\rbrace.
\end{equation}
In other words, the RKHS contains functions having a finite norm $||f||_\mathcal{H} \eq \sqrt{\left\langle f, f\right\rangle_{\mathcal{H}}}$ with respect to the following inner product,
\begin{equation}
f(\bm{x})=\sum_\rho a_\rho \phi_\rho(\bm{x}),\; f'(\bm{x})=\sum_\rho a'_\rho \phi_\rho(\bm{x}), \; \left\langle f, f'\right\rangle_{\mathcal{H}} = \sum_\rho \frac{a_\rho a'_\rho}{\lambda_\rho}.
\end{equation}
Given any target function $f^*$ lying in the span of the kernel eigenbasis, the mean squared generalisation error of the kernel ridge regression estimator reads
\begin{equation}\label{eq:error-app}
    \epsilon(\lambda, \left\lbrace \bm{x}^\mu\right\rbrace) = \int p(d^d\bm{x}) \left(f(\bm{x})-f^*(\bm{x})\right)^2 = \sum_{\rho=1}^\infty \left| a_\rho(\lambda, \left\lbrace \bm{x}^\mu\right\rbrace) -c_\rho\right|^2,
\end{equation}
with $c_\rho$ denoting the $\rho$-th coefficient of the target $f^*$ and $a_\rho$ that of the estimator $f$, which depends on the ridge $\lambda$ and on the training set $\left\lbrace \bm{x}^\mu\right\rbrace_{\mu=1,\dots,P}$. Notice that, as $f$ belongs to $\mathcal{H}$ by definition, $\sum_\rho |a_\rho|^2/\lambda_\rho \,{<}\,+\infty$, whereas the $c_\rho$'s are free to take any value.

The authors of~\cite{bordelon2020spectrum, canatar2021spectral} found a heuristic expression for the average of the mean squared error over realisations of the training set $\left\lbrace \bm{x}^\mu\right\rbrace$. Such expression, based on the replica method of statistical physics, reads\footnote{Notice that the risk considered in \cite{bordelon2020spectrum, canatar2021spectral} slightly differs from \autoref{eq:argmin-app} by a factor $1/P$ in front of the sum.}
\begin{equation}\label{eq:bordelon1}\epsilon(\lambda, P) = \partial_{\lambda}\left(\frac{\kappa_{\lambda}(P)}{P}\right) \sum_\rho \frac{\kappa_\lambda(P)^2}{\left(P \lambda_\rho + \kappa_\lambda(P)\right)^2} |c_\rho|^2 ,\end{equation}
where $\kappa(P)$ satisfies
\begin{equation}\label{eq:bordelon2} \frac{\kappa_\lambda(P)}{P} = \lambda + \frac{1}{P}\sum_\rho \frac{\lambda_\rho \kappa_\lambda(P)/P}{\lambda_\rho + \kappa_\lambda(P)/P}.
\end{equation}
In short, the replica method works as follows~\cite{Mezard87}: first one defines an energy function $E(f)$ as the argument of the minimum in~\autoref{eq:argmin-app}, then attribute to the predictor $f$ a Boltzmann-like probability distribution $P(f) = Z^{-1} e^{-\beta E(f)}$ , with $Z$ a normalisation constant and $\beta\,{>}\,0$. As $\beta\to\infty$, the probability distribution $P(f)$ concentrates around the solution of the minimisation problem of~\autoref{eq:argmin-app}, i.e. the predictor of kernel regression. Hence, one can replace $f$ in the right-hand side of~\autoref{eq:error-app} with an average over $P(f)$ at finite $\beta$, then perform the limit $\beta\to\infty$ after the calculation so as to recover the generalisation error. The simplification stems from the fact that, once $f$ is replaced with its eigendecomposition, the energy function $E(f)$ becomes a quadratic function of the coefficients $c_\rho$. Then, under the assumption that the data distribution enters only via the first and second moments of the eigenfunctions $\phi_{\rho}(\bm{x})$ w.r.t $\bm{x}$, all averages in~\autoref{eq:error-app} reduce to Gaussian integrals.

Mathematically, $\kappa_\lambda(P)/P$ is related to the Stieltjes transform~\cite{potters2020first} of the Gram matrix $\mathbb{K}_P/P$ in the large-$P$ limit. Intuitively, it plays the role of a threshold: the modal contributions to the error tend to $0$ for $\rho$ such that $\lambda_\rho \gg k_\lambda(P)/P$, and to $\mathbb{E}[|c_\rho|^2]$ for $\rho$ such that $\lambda_\rho \ll k_\lambda(P)/P$. This is equivalent to saying that the algorithm predictor $f(\bm{x})$ captures only the eigenmodes having eigenvalue larger than $k_\lambda(P)/P$ (see also~\cite{jacot2020implicit, jacot2020kernel}). 

This intuitive picture can actually be exploited in order to extract the learning curve exponent $\beta$ from the asymptotic behaviour of~\autoref{eq:bordelon1} and~\autoref{eq:bordelon2} in the ridgeless limit $\lambda\to 0^+$. In the following, we assume that both the kernel and the target function have a power-law spectrum, in particular $\lambda_\rho \sim \rho^{-a}$ and $\mathbb{E}[|{c^*_\rho}|^2] \sim \rho^{-b}$, with $2a\,{>}\,b-1$. First, we approximate the sum over modes in \autoref{eq:bordelon2} with an integral using the Euler-Maclaurin formula. Then we substitute the eigenvalues inside the integral with their asymptotic limit, $\lambda_\rho = A\rho^{-a}$. Since, $\kappa_0(P)/P \to 0$ as $P \to \infty$, both these operations result in an error which is asymptotically independent of $P$. Namely,
\begin{align}
    \frac{\kappa_0(P)}{P} &= \frac{\kappa_0(P)}{P} \frac{1}{P} \left(\int_0^\infty \frac{d\rho \, A\rho^{-a} }{A\rho^{-a} + \kappa_0(P)/P} + \mathcal{O}(1) \right) \\
    &= \frac{\kappa_0(P)}{P} \frac{1}{P} \left( \left( \frac{\kappa_0(P)}{P} \right)^{-\frac{1}{a}}\int_0^\infty \frac{d\sigma \, \sigma^{\frac{1}{a}-1}A^{\frac{1}{a}}a^{-1}}{1 + \sigma} + \mathcal{O}(1) \right), \nonumber
\end{align}
where in the second line, we changed the integration variable from $\rho$ to $\sigma\,{=}\, \kappa_0(P)\rho^a/( A P)$. Since the integral in $\sigma$ is finite and independent of $P$, we obtain that $\kappa_0(P)/P = \mathcal{O}(P^{-a})$. Similarly, we find that the mode-independent prefactor $\partial_\lambda \left(\kappa_\lambda(P)/P\right)|_{\lambda=0} = \mathcal{O}(1)$.
As a result we are left with, modulo some $P$-independent prefactors,
\begin{equation}\label{eq:error-scaling1} \epsilon(P) \sim \sum_{\rho} \frac{P^{-2a}}{\left(A\rho^{-a}+P^{-a}\right)^2}\mathbb{E}[|c_\rho|^2].\end{equation}
Following the intuitive argument about the thresholding role of $\kappa_0(P)/P \sim P^{-a}$, it is convenient to split the sum in~\autoref{eq:error-scaling1} into sectors where $\lambda_\rho\gg\kappa_0(P)/P $, $\lambda_\rho \sim \kappa_0(P)/P $ and $\lambda_\rho\ll\kappa_0(P)/P $, i.e.,
\begin{equation}\label{eq:error-scaling2} \epsilon(P) \sim \sum_{\rho \ll P} \frac{P^{-2a}}{\left(A\rho^{-a}\right)^2}\mathbb{E}[|c_\rho|^2] +\sum_{\rho \sim P} \frac{1}{2}\mathbb{E}[|c_\rho|^2] + \sum_{\rho \gg P} \mathbb{E}[|c_\rho|^2].\end{equation}
Finally, ~\autoref{eq:error-scaling} is obtained by noticing that, under our assumptions on the decay of $\mathbb{E}[|c_\rho|^2]$ with $\rho$, the contribution of the sum over $\rho\ll P$ is subleading in $P$ whereas the other two sums can be gathered together.

\section{NTKs of convolutional and locally-connected networks}\label{app:ntk}

We begin this section by reviewing the computation of the NTK of a one-hidden-layer fully-connected network \cite{chizat2019lazy}.

\begin{definition}[one-hidden-layer FCN]\label{eq:fcn}
A one-hidden-layer fully-connected network with $H$ hidden neurons is defined as follows,
\begin{equation}
    f^{FCN}(\bm{x}) = \frac{1}{\sqrt{H}} \sum_{h=1}^H a_h \sigma(\bm{w}_{h} \cdot \bm{x} + b_h),
\end{equation}
where $\bm{x}\in\mathbb{R}^d$ is the input, $H$ is the width, $\sigma$ is a nonlinear activation function, $\{\bm{w}_{h}\in\mathbb{R}^d\}_{h=1}^H$, $\{b_{h}\in\mathbb{R}\}_{h=1}^H$, and $\{a_{h}\in\mathbb{R}\}_{h=1}^H$ are the network's parameters. The dot $\cdot$ denotes the standard Euclidean scalar product.
\end{definition}

Inserting (\ref{eq:fcn}) into Definition~\ref{eq:finite_ntk}, one obtains

\begin{align}\label{eq:rand-feat-fc}
    \Theta^{FC}_N(\bm{x},\bm{y};\bm{\theta}) = \frac{1}{H} \sum_{h=1}^H &\left( \sigma(\bm{w}_h \cdot \bm{x} + b_h) \sigma (\bm{w}_h \cdot \bm{y} + b_h) \right. \\
    &+ \left. a_h^2 \sigma'(\bm{w}_h \cdot \bm{x} + b_h) \sigma'(\bm{w}_h \cdot \bm{y} + b_h) (\bm{x} \cdot \bm{y} + 1) \right), \nonumber
\end{align}

where $\sigma'$ denotes the derivative of $\sigma$ with respect to its argument. If all the parameters are initialised independently from a standard Normal distribution, $\Theta^{FC}_N(\bm{x},\bm{y};\bm{\theta})$ is a random-feature kernel that in the $H \to \infty$ limit converges to

\begin{align}\label{eq:fc-ntk}
    \Theta^{FC}(\bm{x},\bm{y}) &=  \mathbb{E}_{\bm{w},b}[\sigma(\bm{w} \cdot \bm{x} + b) \sigma (\bm{w} \cdot \bm{y} + b)] \\ 
    &+ \mathbb{E}_{a}[a^2] \mathbb{E}_{\bm{w},b}[\sigma'(\bm{w} \cdot \bm{x} + b) \sigma'(\bm{w} \cdot \bm{y} + b)] (\bm{x} \cdot \bm{y} + 1). \nonumber
\end{align}

When $\sigma$ is the ReLU activation function, the expectations can be computed exactly using techniques from the literature of arc-cosine kernels \cite{cho2009kernel}

\begin{align}\label{eq:relu-fc-ntk}
	\Theta^{FC}(\bm{x},\bm{y}) &= \frac{1}{2\pi} \sqrt{\|\bm{x}\|^2+1} \sqrt{\|\bm{y}\|^2+1} \, (\sin\varphi + (\pi-\varphi)\cos\varphi) \\ &+ \frac{1}{2\pi} (\bm{x} \cdot \bm{y} + 1) (\pi-\varphi), \nonumber
\end{align}

with $\varphi$ denoting the angle

\begin{equation}
	\varphi = \arccos \left( \frac{\bm{x} \cdot \bm{y} + 1}{\sqrt{\|\bm{x}\|^2+1} \sqrt{\|\bm{y}\|^2+1}}\right).
\end{equation}

Notice that, as commented in \autoref{sec:convolutional-mercer}, for ReLU networks $\Theta^{FC}(\bm{x},\bm{y})$ displays a cusp at $\bm{x} \eq \bm{y}$.

\paragraph{Proof of Lemma \ref{lemma:cntk}}

\begin{proof}

Inserting \autoref{eq:cnn-out} into Definition~\ref{eq:finite_ntk},
\begin{align}\label{eq:rand-feat-cn}
    \Theta^{CN}_N(\bm{x},\bm{y};\bm{\theta}) =  \frac{1}{|\mathcal{P}|^2} \sum_{i,j\in\mathcal{P}} \Biggl( \frac{1}{H} \sum_{h=1}^H &\bigl( \sigma(\bm{w}_h \cdot \bm{x}_i + b_h) \sigma (\bm{w}_h \cdot \bm{y}_j + b_h) \\ &+ a_h^2 \sigma'(\bm{w}_h \cdot \bm{x}_i + b_h) \sigma'(\bm{w}_h \cdot \bm{y}_j + b_h) (\bm{x}_i \cdot \bm{y}_j + 1) \bigr) \Biggr) \nonumber
\end{align}

In the previous line, the single terms of the summation over patches are the random-feature kernels $\Theta_N^{FC}$ obtained in \autoref{eq:rand-feat-fc} acting on $s$-dimensional inputs, i.e. the patches of $\bm{x}$ and $\bm{y}$. Therefore,

\begin{equation}\label{eq:conv-ntk-finite}
    \Theta^{CN}_N(\bm{x},\bm{y};\bm{\theta}) =  \frac{1}{|\mathcal{P}|^2} \sum_{i,j\in\mathcal{P}} \Theta^{(FC)}_N(\bm{x},\bm{y}).
\end{equation}

If all the parameters are initialised independently from a standard Normal distribution, the $H \to \infty$ limit of~\autoref{eq:conv-ntk-finite} yields \autoref{eq:conv-ntk}.

\end{proof}

\paragraph{Proof of Lemma \ref{lemma:lntk}}

\begin{proof}

Inserting \autoref{eq:lcn-out} into Definition~\ref{eq:finite_ntk},

\begin{align}\label{eq:rand-feat-lc}
    \Theta^{LC}_N(\bm{x},\bm{y};\bm{\theta}) = \frac{1}{|\mathcal{P}|} \sum_{i\in\mathcal{P}} \Biggl( \frac{1}{H} \sum_{h=1}^H  &\bigl( \sigma(\bm{w}_{h,i} \cdot \bm{x}_i + b_{h,i}) \sigma (\bm{w}_{h,i} \cdot \bm{y}_i + b_{h,i})
    \\ &+ a_{h,i}^2 \sigma'(\bm{w}_{h,i} \cdot \bm{x}_i + b_{h,i}) \sigma'(\bm{w}_{h,i} \cdot \bm{y}_i + b_{h,i}) (\bm{x}_i \cdot \bm{y}_i + 1) \bigl) \Biggl). \nonumber
\end{align}

In the previous line, the single terms of the summation over patches are the random-feature kernels $\Theta_N^{FC}$ obtained in \autoref{eq:rand-feat-fc} acting on $s$-dimensional inputs, i.e. the patches of $\bm{x}$ and $\bm{y}$. Therefore, 

\begin{equation}
    \Theta^{LC}_N(\bm{x},\bm{y};\bm{\theta}) =  \frac{1}{|\mathcal{P}|} \sum_{i\in\mathcal{P}} \Theta^{(FC)}_N(\bm{x}_i,\bm{y}_i).
\end{equation}

If all the parameters are initialised independently from a standard Normal distribution, \autoref{eq:loc-ntk} is recovered in the $H\to\infty$ limit.

\end{proof}

\section{Mercer's decomposition of convolutional and local kernels}\label{app:mercer-overlap}

In this section we prove the eigendecompositions introduced in Lemma~\ref{lemma:conv-spectra} and Lemma~\ref{lemma:loc-spectra}, then extend them to overlapping-patches kernel (cf.~\ref{asec:mercer-overlapping}). We define the scalar product in input space between two (complex) functions $f$ and $g$ as

\begin{equation}
    \left\langle f, g\right\rangle = \int p(d^dx)\, f(\bm{x}) \overline{g(\bm{x})}.
\end{equation}

\paragraph{Proof of Lemma \ref{lemma:conv-spectra}}

\begin{proof}

We start by proving orthonormality of the eigenfunctions. By writing the $d$-dimensional eigenfunctions $\Phi_\rho$ in terms of the $s$-dimensional eigenfunctions $\phi_\rho$ of the constituent kernel as in \autoref{eq:conv-spectrum}, for $\rho, \sigma \, {\neq} \, 1$,

\begin{equation}
    \left\langle \Phi_\rho, \Phi_\sigma\right\rangle = \frac{s}{d} \sum_{i,j\in\mathcal{P}} \int p(d^dx) \phi_\rho(\bm{x}_i) \overline{\phi_\sigma(\bm{x}_j)}.
\end{equation}

Separating the term in the sum over patches in which $i$ and $j$ coincide from the others, and since the patches are not overlapping, the RHS can be written as

\begin{equation}
     \frac{s}{d} \sum_{i\in\mathcal{P}} \int p(d^sx_i) \phi_\rho(\bm{x}_i) \overline{\phi_\sigma(\bm{x}_i)} + \sum_{i,j\neq i\in\mathcal{P}} \int p(d^sx_i) \phi_\rho(\bm{x}_i) \int p(d^sx_j) \overline{\phi_\sigma(\bm{x}_j)}.
\end{equation}

From the orthonormality of the eigenfunctions $\phi_\rho$, the first integral is non-zero and equal to one only when $\rho\eq\sigma$, while, from assumption \textit{i)}, $\int p^{(s)}(d^s x) \phi_\rho(\bm{x})\eq0$ for all $\rho\,{>}\,1$, so that the second integral is always zero. Therefore,

\begin{equation}
    \left\langle \Phi_\rho, \Phi_\sigma\right\rangle = \delta_{\rho,\sigma},\text{ for }\rho,\sigma > 1.
\end{equation}

When $\rho\eq1$ and $\sigma\,{\neq}\,1$, $\int p(d^dx) \Phi_1(\bm{x}) \overline{\Phi_\sigma(\bm{x})}\eq0$ from assumption \textit{i)}, i.e. $\Phi_1\eq1$ and $\int p^{(s)}(d^s x) \phi_\rho(\bm{x})\eq0$ for all $\rho\,{>}\,1$. Finally, if $\rho\eq\sigma\eq1$, $\int p(d^dx) \Phi_1(\bm{x}) \overline{\Phi_1(\bm{x})}\eq1$ trivially.

Then, we prove that the eigenfunctions and the eigenvalues defined in \autoref{eq:conv-spectrum} satisfy the kernel eigenproblem. For $\rho\eq1$,

\begin{equation}
    \int p(d^dy) K^{CN}(\bm{x},\bm{y}) = \int p(d^dy) \frac{s^2}{d^2} \sum_{i,j\in\mathcal{P}} C(\bm{x}_i, \bm{y}_j) = \frac{s^2}{d^2} \sum_{i,j\in\mathcal{P}} \lambda_1 = \Lambda_1,
\end{equation}

where we used $\int p^{(s)}(d^sy) C(\bm{x},\bm{y}) \eq \lambda_1$ from assumption \textit{i)}. For $\rho>1$,

\begin{equation}
    \int p(d^dy) K^{CN}(\bm{x},\bm{y}) \Phi_\rho(\bm{y}) = \int p(d^dy) \frac{s^2}{d^2} \sum_{i,j\in\mathcal{P}} C(\bm{x}_i, \bm{y}_j) \sqrt{\frac{s}{d}} \sum_{l\in\mathcal{P}} \phi_\rho(\bm{y}_l).
\end{equation}

Splitting the sum over $l$ into the term with $l \eq j$ and the remaining ones, the RHS can be written as

\begin{align}
    \frac{s^2}{d^2} \sum_{i,j\in\mathcal{P}} &\Biggl( \int p(d^sy_j) C(\bm{x}_i, \bm{y}_j) \sqrt{\frac{s}{d}} \phi_\rho(\bm{y}_j) \\ &+ \int p(d^sy_j) C(\bm{x}_i, \bm{y}_j) \sqrt{\frac{s}{d}} \sum_{l\neq j\in\mathcal{P}} \int p(d^sy_l) \phi_\rho(\bm{y}_l) \Biggr). \nonumber
\end{align}

Using assumption \textit{i)}, the third integral is always zero, therefore

\begin{equation}
    \int p(d^dy) K^{CN}(\bm{x},\bm{y}) \Phi_\rho(\bm{y}) = \frac{s^2}{d^2} \sum_{i,j\in\mathcal{P}} \lambda_\rho \sqrt{\frac{s}{d}} \phi_\rho(\bm{x}_i) = \Lambda_\rho \Phi_\rho(\bm{x}).
\end{equation}

Finally, we prove the expansion of \autoref{eq:conv-decomp} from the definition of $K^{CN}$,

\begin{align}
    K^{CN}(\bm{x}, \bm{y}) &= \frac{s^2}{d^2} \sum_{i,j\in\mathcal{P}} C(\bm{x}_i, \bm{y}_j) \\
    &= \frac{s^2}{d^2} \sum_{i,j\in\mathcal{P}} \sum_\rho \lambda_\rho \phi_\rho(\bm{x}_i) \overline{\phi_\rho(\bm{y}_j)} \nonumber \\
    &= \lambda_1 \frac{s^2}{d^2} \sum_{i,j\in\mathcal{P}} \phi_1(\bm{x}_i) \overline{\phi_1(\bm{y}_j)} + \sum_{\rho>1} \left( \frac{s}{d} \lambda_\rho \right) \left( \sqrt{\frac{s}{d}} \sum_{i\in\mathcal{P}} \phi_\rho(\bm{x}_i) \right) \left( \sqrt{\frac{s}{d}}\sum_{j\in\mathcal{P}} \overline{\phi_\rho(\bm{y}_j)} \right) \nonumber \\
    &= \sum_{\rho} \Lambda_\rho \Phi_\rho(\bm{x}) \overline{\Phi_\rho(\bm{y})}. \nonumber
\end{align}

\end{proof}

\paragraph{Proof of Lemma \ref{lemma:loc-spectra}}

\begin{proof}

We start again by proving the orthonormality of the eigenfunctions. By writing the $d$-dimensional eigenfunctions $\Phi_{\rho,i}$ in terms of the $s$-dimensional eigenfunctions $\phi_\rho$ of the constituent kernel as in \autoref{eq:loc-spectrum}, for $\rho, \sigma \, {\neq} \, 1$,

\begin{equation}
    \left\langle \Phi_{\rho,i}, \Phi_{\sigma,j}\right\rangle = \int p(d^dx) \phi_\rho(\bm{x}_i) \overline{\phi_\sigma(\bm{x}_j)} = \delta_{\rho,\sigma} \delta_{i,j},
\end{equation}

from the orthonormality of the eigenfunctions $\phi_\rho$ when $i \eq j$, and assumption \textit{i)}, $\int p^{(s)}(d^s x) \phi_\rho(\bm{x})\eq0$ for all $\rho\,{>}\,1$, when $i\,{\neq}j$. Moreover, as $\Phi_1(\bm{x})\,{=}\,1$, $\int p(d^dx) \Phi_{1}(\bm{x}) \overline{\Phi_{\sigma\neq1,j}(\bm{x})} \eq 0$ and $\int p(d^dx) \Phi_1(\bm{x}) \overline{\Phi_1(\bm{x})}\eq1$. 

Then, we prove that the eigenfunctions and the eigenvalues defined in \autoref{eq:loc-spectrum} satisfy the kernel eigenproblem. For $\rho\eq1$,

\begin{equation}
    \int p(d^dy) K^{LC}(\bm{x},\bm{y}) = \int p(d^dy) \frac{s}{d} \sum_{i\in\mathcal{P}} C(\bm{x}_i, \bm{y}_i) = \frac{s}{d} \sum_{i\in\mathcal{P}} \lambda_1 = \Lambda_1,
\end{equation}

where we used $\int p^{(s)}(d^sy) C(\bm{x},\bm{y}) \eq \lambda_1$ from assumption \textit{i)}. For $\rho>1$,

\begin{equation}
    \int p(d^dy) K^{LC}(\bm{x},\bm{y}) \Phi_{\rho,i}(\bm{y}) = \int p(d^dy) \frac{s}{d} \sum_{j\in\mathcal{P}} C(\bm{x}_j, \bm{y}_j) \phi_\rho(\bm{y}_i).
\end{equation}

Splitting the sum over $j$ in the term for which $j\eq i$ and the remaining ones, the RHS can be written as

\begin{equation}
    \frac{s}{d} \int p(d^sy_i) C(\bm{x}_i, \bm{y}_i) \phi_\rho(\bm{y}_i) +  \frac{s}{d} \sum_{j \neq i\in\mathcal{P}} \int p(d^sy_j) C(\bm{x}_j, \bm{y}_j) \int p(d^sy_i) \phi_\rho(\bm{y}_i).
\end{equation}

Using assumption \textit{i)}, the third integral is always zero, therefore

\begin{equation}
    \int p(d^dy) K^{CN}(\bm{x},\bm{y}) \Phi_\rho(\bm{y}) = \frac{s}{d} \lambda_\rho \phi_\rho(\bm{x}_i) = \Lambda_{\rho,i} \Phi_{\rho,i}(\bm{x}).
\end{equation}

Finally, we prove the expansion of \autoref{eq:conv-decomp} from the definition of $K^{LC}$,

\begin{align}
    K^{LC}(\bm{x}, \bm{y}) &= \frac{s}{d} \sum_{i\in\mathcal{P}} C(\bm{x}_i, \bm{y}_i) \\
    &= \frac{s^2}{d^2} \sum_{i\in\mathcal{P}} \sum_\rho \lambda_\rho \phi_\rho(\bm{x}_i) \overline{\phi_\rho(\bm{y}_i)} \\
    &= \lambda_1 \frac{s}{d} \sum_{i\in\mathcal{P}} \phi_1(\bm{x}_i) \overline{\phi_1(\bm{y}_i)} + \sum_{\rho>1} \sum_{i\in\mathcal{P}}  \left( \frac{s}{d} \lambda_\rho \right)  \phi_\rho(\bm{x}_i) \overline{\phi_\rho(\bm{y}_i)}  \\
    &= \Lambda_1 \Phi_1(\bm{x}) \overline{\Phi_1(\bm{y})} +  \sum_{\rho>1} \sum_{i\in\mathcal{P}} \Lambda_{\rho,i} \Phi_{\rho,i}(\bm{x}) \overline{\Phi_{\rho,i}(\bm{y})}.
\end{align}

\end{proof}

\subsection{Spectra of convolutional kernels with overlapping patches}\label{asec:mercer-overlapping}

In this section Lemma~\ref{lemma:conv-spectra} and Lemma~\ref{lemma:loc-spectra} are extended to kernels with overlapping patches, having $\mathcal{P}\,{=}\,\left\lbrace 1,\dots, d\right\rbrace$ and $|\mathcal{P}|\,{=}\,d$. 
Such extension requires additional assumptions, which are stated below:
\begin{itemize}
    \item[$i)$] The $d$-dimensional input measure $p^{(d)}(d^d x)$ is uniform on the $d$-torus $[0,1]^d$;
    \item[$ii)$] The constituent kernel $C(\bm{x},\bm{y})$ is translationally-invariant, isotropic and periodic,

    \begin{equation}
        C(\bm{x},\bm{y}) = \mathcal{C}(||\bm{x}-\bm{y}||),\quad \mathcal{C}(||\bm{x}-\bm{y} + \bm{n}||) = \mathcal{C}(||\bm{x}-\bm{y}||) \quad \forall \bm{n}\in\mathbb{Z}^s.
    \end{equation}
\end{itemize}
Assumptions $i)$ and $ii)$ above imply that $C(\bm{x},\bm{y})$ can be diagonalised in Fourier space, i.e. (with $\bm{k}$ denoting the $s$-dimensional wavevector)

\begin{equation}\label{eq:constituent-fourier}
    \mathcal{C}(\bm{x}-\bm{y}) = \sum_{\left\lbrace \bm{k}=2\pi\bm{n}| \bm{n}\in\mathbb{Z}^s\right\rbrace} \lambda_{\bm{k}} \phi_{\bm{k}}(\bm{x}) \overline{\phi_{\bm{k}}(\bm{y})} = \sum_{\left\lbrace \bm{k}=2\pi\bm{n}| \bm{n}\in\mathbb{Z}^s\right\rbrace} \lambda_{\bm{k}} e^{i\bm{k}\cdot(\bm{x}-\bm{y})},
\end{equation}

and the eigenvalues $\lambda_{\bm{k}}$ depend only on the modulus of $\bm{k}$, $k\,{=}\,\sqrt{\bm{k}\cdot\bm{k}}$.

Let us introduce the following definitions, after recalling that a $s$-dimensional patch $\bm{x}_i$ of $\bm{x}$ is a contiguous subsequence of $\bm{x}$ starting at $x_i$, i.e.
\begin{equation}
    \bm{x}=(x_1, x_2, \dots, x_d) \Rightarrow \bm{x}_i = (x_i, x_{i+1}, \dots, x_{i+s-1}),
\end{equation}
and that inputs are `wrapped', i.e. we identify $x_{i+ n d}$ with $x_i$ for all $n\in\mathbb{Z}$.

\begin{itemize}
    \item Two patches $\bm{x}_i$ and $\bm{x}_j$ \emph{overlap} if $\bm{x}_i\displaystyle \cap \bm{x}_j\,{\neq}\,\emptyset$. The overlap $\bm{x}_{i\cap j}\equiv \bm{x}_i\displaystyle \cap \bm{x}_j$ is an $o$-dimensional patch of $\bm{x}$, with $o\,{=}\,|\bm{x}_i\displaystyle \cap \bm{x}_j|$;
    \item let $\mathcal{P}$ denote the set of patch indices associated with a given kernel/architecture. We denote with $\mathcal{P}_i$ the set of indices of patches which overlap with $\bm{x}_i$, i.e. $\mathcal{P}_i\,{=}\,\left\lbrace i-s +1,\dots, i, \dots, i + s-1\right\rbrace\,{=}\,\left\lbrace \mathcal{P}_{-,i}, i ,\mathcal{P}_{+,i} \right\rbrace $;
    \item Given two overlapping patches $\bm{x}_i$ and $\bm{x}_j$ with $o$-dimensional overlap, the union $\bm{x}_{i\cup j}\equiv \bm{x}_i \displaystyle\cup \bm{x}_j$ and differences $\bm{x}_{i \smallsetminus j} \equiv \bm{x}_i \,{\smallsetminus}\, \bm{x}_j$ and $\bm{x}_{j\smallsetminus i} \equiv \bm{x}_j \,{\smallsetminus}\, \bm{x}_i$ are all patches of $\bm{x}$, with dimensions $2s\,{-}\,o$, $s\,{-}\,o$ and $s\,{-}\,o$, respectively.
\end{itemize}

We also use the following notation for denoting subspaces of the $\bm{k}$-space $\cong \mathbb{Z}^s$,

\begin{equation}
    \mathcal{F}^{u} = \left\lbrace \bm{k}\,{=}\,2\pi \bm{n} \, | \, \bm{n}\in\mathbb{Z}^{s}; \, n_1, n_u \neq 0; \, n_{v} = 0 \, \forall v \text{ s. t. } u<v\leq s  \right\rbrace.
\end{equation}

$\mathcal{F}^{s}$ is the set of all wavevectors $\bm{k}$ having nonvanishing extremal components $k_1$ and $k_s$. For $u\,{<}\,s$, $\mathcal{F}^{u}$ is formed by first considering only wavevectors having the last $s-u$ components equal to zero, then asking the resulting $u$-dimensional wavevectors to have nonvanishing extremal components. Practically, $\mathcal{F}^{u}$ contains wavevectors which can be entirely specified by the first $u$-dimensional patch $\bm{k}^{(u)}_1\,{=}\,(k_1,\dots,k_u)$ but not by the first $(u\,{-}\,1)$-dimensional one. Notice that, in order to safely compare $\bm{k}$'s in different $\mathcal{F}$'s, we have introduced an apex $u$ denoting the dimensionality of the patch.

\begin{lemma}[Spectra of overlapping convolutional kernels]\label{lemma:conv-spectra-overlap}
Let $K^{CN}$ be a convolutional kernel defined as in~\autoref{eq:conv-ker}, with $\mathcal{P}\,{=}\,\left\lbrace1,\dots, d\right\rbrace$ and constituent kernel $C$ satisfying assumptions $i)$, $ii)$ above. Then, $K^{CN}$ admits the following Mercer's decomposition,

\begin{equation}\label{eq:conv-decomp-overlap}
   K^{CN}(\bm{x},\bm{y}) = \Lambda_{\bm{0}} + \displaystyle\sum_{u=1}^{s}\left( \displaystyle\sum_{ \bm{k}\in \mathcal{F}^{u}} \Lambda_{\bm{k}} \Phi_{\bm{k}}(\bm{x}) \Phi_{\bm{k}}(\bm{y})\right),
\end{equation}

with eigenfunctions

\begin{equation}\label{eq:conv-eigvec-overlap}
 \Phi_{\bm{0}}(\bm{x})\,{=}\,1, \quad \Phi_{\bm{k}}(\bm{x})\,{=}\,\frac{1}{\sqrt{d}}\sum_{i=1}^d \phi_{\bm{k}}(\bm{x}_i) \quad \forall\, \bm{k}\neq\bm{0},
\end{equation}

and eigenvalues

\begin{equation}\label{eq:conv-eigval-overlap}
 \Lambda_{\bm{0}}\,{=}\,\lambda_{\bm{0}}, \quad \Lambda_{\bm{k}}\,{=}\, \frac{s-u+1}{d}\lambda_{\bm{k}} \quad \forall\, \bm{k}\in\mathcal{F}^{u} \text{ with } u\leq s.
\end{equation}
\end{lemma}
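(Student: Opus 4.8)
The plan is to reprise the proof of Lemma~\ref{lemma:conv-spectra}, but carried out in Fourier space, replacing the patch-disjointness step by a bookkeeping of genuine $d$-dimensional Fourier modes. The key observation is that, because $\bm{k}=2\pi\bm{n}$ with $n_1,n_u\neq0$ and $n_v=0$ for $v>u$ (that is, $\bm{k}\in\mathcal{F}^u$), the generalised plane wave $\phi_{\bm{k}}(\bm{x}_i)=e^{i\bm{k}\cdot\bm{x}_i}$ is an honest $d$-dimensional torus mode $e^{i\tilde{\bm{k}}^{(i)}\cdot\bm{x}}$, where $\tilde{\bm{k}}^{(i)}$ is the $d$-vector obtained by placing the nonzero block $(k_1,\dots,k_u)$ at the cyclic positions $i,\dots,i+u-1$. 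First I would record two elementary facts: (a) for fixed $\bm{k}\in\mathcal{F}^u$ the vectors $\tilde{\bm{k}}^{(1)},\dots,\tilde{\bm{k}}^{(d)}$ are pairwise distinct, because the minimality condition $n_1,n_u\neq0$ pins down the starting position of the block; (b) the families $\{\tilde{\bm{k}}^{(i)}\}_i$ attached to distinct $\bm{k}\in\bigcup_{u\le s}\mathcal{F}^u$ are pairwise disjoint, since two of them intersect only when one wavevector is a cyclic shift of the other, which $\mathcal{F}^u$-membership forbids. Facts (a) and (b) say that $\Phi_{\bm{k}}=d^{-1/2}\sum_i\phi_{\bm{k}}(\bm{x}_i)$ is a unit-norm equal-weight superposition of $d$ orthonormal torus modes, that distinct $\Phi_{\bm{k}}$'s are orthogonal, and that each is orthogonal to $\Phi_{\bm{0}}=1$ (orthogonality of distinct torus modes); this establishes orthonormality of the claimed eigensystem.

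Next I would obtain the Mercer expansion \autoref{eq:conv-decomp-overlap} by direct regrouping. Here the relevant combinatorial fact is (c): every nonzero $s$-dimensional wavevector $\bm{q}$, whose nonzero block of length $u$ starts at some patch position $a\in\{1,\dots,s-u+1\}$, is the in-patch translate by $a-1$ of the unique $\mathcal{F}^u$-representative $\bm{k}$ carrying the same block $(k_1,\dots,k_u)$, so that $e^{i\bm{q}\cdot\bm{x}_i}=\phi_{\bm{k}}(\bm{x}_{i+a-1})$ and, by isotropy, $\lambda_{\bm{q}}=\lambda_{\bm{k}}$. Expanding the constituent via \autoref{eq:constituent-fourier} and inserting into \autoref{eq:conv-ker} gives
\begin{equation}
 K^{CN}(\bm{x},\bm{y}) = \frac{1}{d^2}\sum_{i,j}C(\bm{x}_i,\bm{y}_j) = \lambda_{\bm{0}} + \frac{1}{d^2}\sum_{u=1}^{s}\sum_{\bm{k}\in\mathcal{F}^u}\lambda_{\bm{k}}\sum_{a=1}^{s-u+1}\sum_{i,j}\phi_{\bm{k}}(\bm{x}_{i+a-1})\overline{\phi_{\bm{k}}(\bm{y}_{j+a-1})}.
\end{equation}
Using the cyclic reindexing $\sum_i\phi_{\bm{k}}(\bm{x}_{i+a-1})=\sum_i\phi_{\bm{k}}(\bm{x}_i)=\sqrt{d}\,\Phi_{\bm{k}}(\bm{x})$ makes the $a$-sum trivial (each summand equals $d\,\Phi_{\bm{k}}(\bm{x})\overline{\Phi_{\bm{k}}(\bm{y})}$), so that
\begin{equation}
 K^{CN}(\bm{x},\bm{y}) = \lambda_{\bm{0}}\,\Phi_{\bm{0}}(\bm{x})\overline{\Phi_{\bm{0}}(\bm{y})} + \sum_{u=1}^{s}\sum_{\bm{k}\in\mathcal{F}^u}\frac{s-u+1}{d}\,\lambda_{\bm{k}}\,\Phi_{\bm{k}}(\bm{x})\overline{\Phi_{\bm{k}}(\bm{y})},
\end{equation}
which is \autoref{eq:conv-decomp-overlap} with the eigenvectors \autoref{eq:conv-eigvec-overlap} and eigenvalues \autoref{eq:conv-eigval-overlap}. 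Since the $\Phi_{\bm{k}}$ are orthonormal, the eigenvalue equations follow by integrating this identity against a fixed $\Phi_{\bm{k}}$; alternatively they can be checked directly, as in Lemma~\ref{lemma:conv-spectra}, by computing $\int p(d^dy)\,K^{CN}(\bm{x},\bm{y})\Phi_{\bm{k}}(\bm{y})$ using the orthogonality relation $\int_{[0,1]^d}e^{i(\tilde{\bm{k}}^{(l)}-\tilde{\bm{q}}^{(j)})\cdot\bm{y}}\,d^dy=\delta_{\tilde{\bm{k}}^{(l)},\tilde{\bm{q}}^{(j)}}$ and counting the $s-u+1$ surviving pairs $(j,l)$.

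I expect the main obstacle to be combinatorial rather than analytic: proving rigorously that $\bigcup_{u\le s}\mathcal{F}^u$ is an exact transversal for the cyclic-shift action on patch-supported $d$-dimensional wavevectors and that the multiplicity of each representative is precisely $s-u+1$ (facts (b) and (c)), and carrying the wrap-around reindexing through consistently so that no mode is double-counted and the normalisations of $\Phi_{\bm{k}}$ and $\Lambda_{\bm{k}}$ match. Once these counting statements are in place, the remainder is a routine extension of the non-overlapping argument of Lemma~\ref{lemma:conv-spectra}; the same scheme, with $\sum_{i,j}$ replaced by the single sum $\sum_i$ in the definition of the kernel, yields the analogous overlapping-patches statement for the local kernel $K^{LC}$.
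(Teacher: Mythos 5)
Your proof is correct and follows essentially the same route as the paper's: diagonalise the constituent in Fourier space and count, for each representative $\bm{k}\in\mathcal{F}^u$, the $s-u+1$ in-patch shifts that by isotropy share the eigenvalue $\lambda_{\bm{k}}$ and, after the cyclic sum over patches, the same eigenfunction $\Phi_{\bm{k}}$, which is exactly the paper's degeneracy bookkeeping behind the factor $(s-u+1)/d$. The only organisational difference is that you identify each $\phi_{\bm{k}}(\bm{x}_i)$ with a genuine $d$-dimensional torus mode, so orthonormality is immediate and the eigenvalue equations follow from the regrouped expansion, whereas the paper verifies orthonormality and the eigenproblem by explicit patch-overlap integrals before assembling the decomposition; both arguments share the same implicit genericity/wrap-around caveats, which you correctly flag.
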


\begin{proof} We start by proving the orthonormality of the eigenfunctions. In general, by orthonormality of the $s$-dimensional plane waves $\phi_{\bm{k}}(\bm{x})$, we have

\begin{align}\label{eq:scalar-product-overlapping}
&\left\langle \Phi_{\bm{k}}, \Phi_{\bm{q}}\right\rangle = \frac{1}{d}\int_{[0,1]^d} d^d x\, \left(\displaystyle\sum_{i=1}^d \phi_{\bm{k}}(\bm{x}_i)\right) \overline{\left(\displaystyle\sum_{j=1}^d \phi_{\bm{q}}(\bm{x}_j)\right)} \nonumber \\
&=\frac{1}{d}\displaystyle\sum_{i\in\mathcal{P}}\displaystyle\sum_{j\notin\mathcal{P}_i} \int d^s x_i\, e^{i\bm{k}\cdot\bm{x}_i} \int d^s x_j \, e^{-i\bm{q}\cdot\bm{x}_j}  + \frac{1}{d}\sum_{i\in\mathcal{P}} \int d^s x_i\, e^{i(\bm{k}-\bm{q})\cdot\bm{x}_i} \nonumber \\
&+ \frac{1}{d}\displaystyle\sum_{i\in\mathcal{P}} \displaystyle\sum_{j\in\mathcal{P}_{i,+}} \int \left(d^{s\text{-}o} x_{i\smallsetminus j}\right) e^{i\bm{k}^{(s-o)}_1\cdot\bm{x}_{i\smallsetminus j}} \int \left(d^{o} x_{i\cup j}\right) e^{i(\bm{k}^{(o)}_{s-o+1}-\bm{q}^{(o)}_1)\cdot\bm{x}_{i\cup j}} \int \left(d^{s\text{-}o} x_{j\smallsetminus i}\right) e^{i\bm{q}^{(s-o)}_{o+1}\cdot\bm{x}_{j\smallsetminus i}} \nonumber \\
& + \frac{1}{d}\displaystyle\sum_{i\in\mathcal{P}} \displaystyle\sum_{j\in\mathcal{P}_{i,-}} \left\lbrace  i\leftrightarrow j, \bm{k} \leftrightarrow \bm{q} \right\rbrace \nonumber \\
&=\frac{1}{d}\displaystyle\sum_{i\in\mathcal{P}} \delta(\bm{k},\bm{0}) \displaystyle\sum_{j\notin\mathcal{P}_i} \delta(\bm{q},\bm{0}) + \frac{1}{d}\displaystyle\sum_{i\in\mathcal{P}} \delta(\bm{k},\bm{q})\nonumber \\
&+\frac{1}{d}\displaystyle\sum_{i\in\mathcal{P}} \left( \displaystyle\sum_{j\in\mathcal{P}_{i,+}} \delta(\bm{k}^{(s-o)}_1,\bm{0}) \, \delta(\bm{k}^{(o)}_{s-o+1}, \bm{q}^{(o)}_1) \, \delta(\bm{q}^{(s-o)}_{o+1}, \bm{0}) \right. \nonumber \\ & \left. + \displaystyle\sum_{j\in\mathcal{P}_{i,-}} \delta(\bm{q}^{(s-o)}_1,\bm{0}) \, \delta(\bm{k}^{(o)}_1, \bm{q}^{(o)}_{s-o+1}) \, \delta(\bm{k}^{(s-o)}_{o+1}, \bm{0})\right),
\end{align}

with $\delta(\bm{k},\bm{q})$ denoting the multidimensional Kronecker delta. For fixed $i$, the three terms on the RHS correspond to $j$'s such that $\bm{x}_j$ does not overlap with $\bm{x}_i$, to $j\,{=}\,i$ and to $j$'s such that $\bm{x}_j$ overlaps with $\bm{x}_i$, respectively. We recall that, in patch notation, $\bm{k}_1^{(s-o)}$ denotes the subsequence of $\bm{k}$ formed with the first $s-o$ components and $\bm{k}_{s-o+1}^{(o)}$ the subsequence formed with the last $o$ components.

By taking $\bm{k}$ and $\bm{q}$ in $\mathcal{F}^s$, as $k_1, k_s\neq 0$ and $q_1,q_s\neq 0$, \autoref{eq:scalar-product-overlapping} implies

\begin{equation}
    \left\langle \Phi_{\bm{k}},\Phi_{\bm{q}}\right\rangle  = \delta(\bm{k},\bm{q}).
\end{equation}

In addition, by taking $\bm{k}\in\mathcal{F}^s$ and $\bm{q}=\bm{q}^{(u)}_1\in\mathcal{F}^u$ with $u\,{<}\,s$,

\begin{equation}
    \left\langle \Phi_{\bm{k}},\Phi_{\bm{q}^{(u)}_1} \right\rangle  = 0\quad \forall\, u<s.
\end{equation}

Thus the $\Phi_{\bm{k}}$'s with $\bm{k}\in\mathcal{F}^s$ are orthonormal between each other and orthogonal to all $\Phi_{\bm{q}^{(u)}_1}$'s with $u\,{<}\,s$. Similarly, by taking $\bm{k}\in\mathcal{F}^u$ with $u\,{<}\,s$ and $\bm{q}\in\mathcal{F}^v$ with $v\,{\leq}\,u$, orthonormality is proven down to $\Phi_{\bm{k}^{(1)}_1}$. The zero-th eigenfunction $\Phi_{\bm{0}}(\bm{x})\,{=}\,1$ is also orthogonal to all other eigenfunctions by~\autoref{eq:scalar-product-overlapping} with $\bm{k}\,{=}\,0$ and trivially normalised to $1$.

Secondly, we prove that eigenfunctions from~\autoref{eq:conv-eigvec-overlap} and eigenvalues from~\autoref{eq:conv-eigval-overlap} satisfy the kernel eigenproblem of $K^{CN}$. For $\bm{k}\,{=}\,\bm{0}$,

\begin{equation}
    \int_{[0,1]^d} d^dy\, K^{CN}(\bm{x},\bm{y}) = \frac{1}{d^2} \sum_{i,j=1}^d \int_{[0,1]^d} d^dy\, \sum_{\bm{q}}\lambda_{\bm{k}} e^{i\bm{q}\cdot(\bm{x}_i-\bm{y}_j)} = \lambda_{\bm{0}},
\end{equation}

proving that $\Lambda_{\bm{0}}$ and $\Phi_{\bm{0}}$ satisfy the eigenproblem. For $\bm{k}\neq\bm{0}$,

\begin{equation}\begin{aligned}
    &\int_{[0,1]^d} d^dy\, K^{CN}(\bm{x},\bm{y})\left(\frac{1}{\sqrt{d}} \sum_{l=1}^d e^{i\bm{k}\cdot\bm{y}_l}\right) = \frac{1}{d^{5/2}} \sum_{i,j,l=1}^d \int_{[0,1]^d} d^dy\, \sum_{\bm{q}}\lambda_{\bm{q}} e^{i\bm{q}\cdot(\bm{x}_i-\bm{y}_j)}e^{i\bm{k}\cdot\bm{y}_l} \\
    &= \frac{1}{d^{5/2}} \sum_{i=1}^d \sum_{\bm{q}} \lambda_{\bm{q}}e^{i\bm{q}\cdot\bm{x}_i}\sum_{j=1}^d \left( \delta(\bm{k},\bm{q}) +  \displaystyle\sum_{l\in\mathcal{P}_{j,+}} \delta(\bm{q}^{(s-o)}_1,\bm{0}) \, \delta(\bm{q}^{(o)}_{s-o+1}, \bm{k}^{(o)}_1) \, \delta(\bm{k}^{(s-o)}_{o+1}, \bm{0}) \right. \\ &  \left.+  \displaystyle\sum_{l\in\mathcal{P}_{j,-}} \delta(\bm{k}^{(s-o)}_1,\bm{0}) \, \delta(\bm{q}^{(o)}_1, \bm{k}^{(o)}_{s-o+1}) \, \delta(\bm{q}^{(s-o)}_{o+1}, \bm{0}) \right).
\end{aligned}\end{equation}

When $\bm{k}\in\mathcal{F}^s$, the deltas coming from the terms with $j\in\mathcal{P}_{j,\pm}$ vanish, showing that the eigenproblem is satisfied with $\Lambda_{\bm{k}}\,{=}\,\lambda_{\bm{k}}/d$ and $\Phi_{\bm{k}}(\bm{x})\,{=}\,\sum_l e^{i\bm{k}\cdot\bm{x}}/\sqrt{d}$. When $\bm{k}\in\mathcal{F}^u$ with $u\,{<}\,s$, as the last $s\,{-}\,u$ components of $\bm{k}$ vanish, there are several $\bm{q}$'s satisfying the deltas in the bracket. There is $\bm{q}\,{=}\,\bm{k}$, from the $l\,{=}\,j$ term, then there are the $s\,{-}\,u$ $\bm{q}$'s such that $\delta(\bm{q}^{(s-o)}_1,\bm{0}) \delta(\bm{q}^{(o)}_{s-o+1}, \bm{k}^{(o)}_1) \delta(\bm{k}^{(s-o)}_{o+1}, \bm{0}) \,{=}\,1$. These are all the $\bm{q}$'s having a $u$-dimensional patch equal to $\bm{k}_1^{(u)}$ and all the other elements set to zero, thus there are $(s-u+1)$ such $\bm{q}$'s. Moreover, as $\lambda_{\bm{q}}$ depends only on the modulus of $\bm{q}$, all these $\bm{q}$'s result in the same eigenvalue, and in the same eigenfunction $\sum_l e^{i\bm{q}\cdot\bm{x}}/\sqrt{d}$, after the sum over patches. Therefore, 

\begin{equation}
    \int_{[0,1]^d} d^dy\, K^{CN}(\bm{x},\bm{y}) \Phi_{\bm{k}_1^{(u)}} = \frac{(s-u + 1)}{d}\lambda_{\bm{k}_1^{(u)}}\Phi_{\bm{k}_1^{(u)}} = \Lambda_{\bm{k}_1^{(u)}}\Phi_{\bm{k}_1^{(u)}}.
\end{equation}

Finally, we prove the expansion of the kernel in~\autoref{eq:conv-decomp-overlap},

\begin{align}\label{eq:decomp-proof}
    K^{CN}(\bm{x}, \bm{y}) &= \frac{1}{d^2} \sum_{i,j\in\mathcal{P}} C(\bm{x}_i, \bm{y}_j) \\
    &= \sum_{\bm{k}} \frac{1}{d}\lambda_{\bm{k}} \left(\frac{1}{\sqrt{d}}\sum_{i\in\mathcal{P}}\phi_{\bm{k}}(\bm{x}_i)\right) \overline{\left(\frac{1}{\sqrt{d}}\sum_{j\in\mathcal{P}}\phi_{\bm{k}}(\bm{y}_j)\right)}.
\end{align}

The terms on the RHS of~\autoref{eq:decomp-proof} are trivially equal to those of~\autoref{eq:conv-decomp-overlap} for $\bm{k}\in\mathcal{F}^s$. All the $\bm{k}$ having $s\,{-}\,u$ vanishing extremal components can be written as shifts of $\bm{k}_1^{(u)}\in\mathcal{F}^{u}$, which has the \emph{last} $s\,{-}\,u$ components vanishing. But a shift of $\bm{k}$ does not affect $\lambda_{\bm{k}}$ nor $\sum_l e^{i\bm{k}\cdot\bm{x}}$, leading to a degeneracy of eigenvalues having $\bm{k}$ which can be obtained from a shift of $\bm{k}_1^{(u)}\in\mathcal{F}^u$. Such degeneracy is removed by restricting the sum over $\bm{k}$ to the sets $\mathcal{F}^u$, $u\,{\leq}\,s$, of wavevectors with non-vanishing extremal components, and rescaling the remaining eigenvalues with a factor of $(s-u+1)/d$, so that~\autoref{eq:conv-decomp-overlap} is obtained.
\end{proof}

\begin{lemma}[Spectra of overlapping local kernels]\label{lemma:loc-spectra-overlap}
Let $K^{LC}$ be a local kernel defined as in~\autoref{eq:loc-ker}, with $\mathcal{P}\,{=}\,\left\lbrace1,\dots, d\right\rbrace$ and constituent kernel $C$ satisfying assumptions $i)$, $ii)$ above. Then, $K^{LC}$ admits the following Mercer's decomposition,

\begin{equation}\label{eq:loc-decomp-overlap}
   K^{LC}(\bm{x},\bm{y}) = \Lambda_{\bm{0}} +  \displaystyle\sum_{u=1}^{s}\left( \displaystyle\sum_{ \bm{k}\in \mathcal{F}^{u}}\displaystyle\sum_{i=1}^d \Lambda_{\bm{k},i} \Phi_{\bm{k},i}(\bm{x}) \Phi_{\bm{k},i}(\bm{y})\right)
\end{equation}

with eigenfunctions

\begin{equation}\label{eq:loc-eigvec-overlap}
 \Phi_{\bm{0}}(\bm{x})\,{=}\,1, \quad \Phi_{\bm{k},i}(\bm{x})\,{=}\,\phi_{\bm{k}}(\bm{x}_i) \quad \forall\, \bm{k}\in\mathcal{F}^u \text{ with }1\leq u \leq s \text{ and }i=1,\dots,d,
\end{equation}

and eigenvalues

\begin{equation}\label{eq:loc-eigval-overlap}
 \Lambda_{\bm{0}}\,{=}\,\lambda_{\bm{0}}, \Lambda_{\bm{k},i} = \frac{s-u+1}{d}\lambda_{\bm{k}} \quad \forall\, \bm{k}\in\mathcal{F}^{u} \text{ with } u\leq s\text{ and }i=1,\dots,d.
\end{equation}
\end{lemma}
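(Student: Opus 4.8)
The plan is to follow the three-step template used for the convolutional case (Lemma~\ref{lemma:conv-spectra-overlap})---prove orthonormality of the candidate eigenfunctions, verify that they solve the kernel eigenproblem with the stated eigenvalues, and re-sum the Fourier representation of $K^{LC}$ into the claimed Mercer form---the essential difference being that here the patch index $i$ is a genuine label of the eigenfunctions rather than a summation variable, so there is no ``pooling'' over patches.

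\textbf{Step 1 (orthonormality).} I would write $\Phi_{\bm{k},i}(\bm{x})\,{=}\,\phi_{\bm{k}}(\bm{x}_i)\,{=}\,e^{i\bm{k}\cdot\bm{x}_i}$ and evaluate $\langle\Phi_{\bm{k},i},\Phi_{\bm{q},j}\rangle\,{=}\,\int_{[0,1]^d}d^dx\,e^{i\bm{k}\cdot\bm{x}_i}e^{-i\bm{q}\cdot\bm{x}_j}$ by splitting according to the relative position of the patches $\bm{x}_i,\bm{x}_j$, exactly as in~\autoref{eq:scalar-product-overlapping}. If $i\,{=}\,j$ the integral factorises over the $s$ coordinates of the patch and the remaining $d-s$ coordinates and gives $\delta(\bm{k},\bm{q})$; if the patches are disjoint it factorises completely and gives $\delta(\bm{k},\bm{0})\delta(\bm{q},\bm{0})$, which vanishes for $\bm{k},\bm{q}\neq\bm{0}$; if they overlap on an $o$-dimensional patch, the integral factorises into the overlap part and the two disjoint pieces, and one of the factors imposes that either the first $s-o$ components of $\bm{k}$ or the first $s-o$ components of $\bm{q}$ vanish. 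Since $\bm{k},\bm{q}\in\bigcup_{u}\mathcal{F}^u$ have a nonvanishing first component, this is impossible, so all cross terms vanish and $\langle\Phi_{\bm{k},i},\Phi_{\bm{q},j}\rangle\,{=}\,\delta(\bm{k},\bm{q})\,\delta_{i,j}$; the constant mode $\Phi_{\bm{0}}\,{=}\,1$ is handled by the same computation with $\bm{k}\,{=}\,\bm{0}$.

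\textbf{Step 2 (eigenproblem and the factor $s-u+1$).} Substituting $C(\bm{x}_j,\bm{y}_j)\,{=}\,\sum_{\bm{q}}\lambda_{\bm{q}}\,e^{i\bm{q}\cdot(\bm{x}_j-\bm{y}_j)}$ into $K^{LC}$, I would compute $\int_{[0,1]^d}d^dy\,K^{LC}(\bm{x},\bm{y})\,\Phi_{\bm{k},i}(\bm{y})\,{=}\,\frac{1}{d}\sum_{j}\sum_{\bm{q}}\lambda_{\bm{q}}\,e^{i\bm{q}\cdot\bm{x}_j}\int_{[0,1]^d}d^dy\,e^{-i\bm{q}\cdot\bm{y}_j}e^{i\bm{k}\cdot\bm{y}_i}$; the $y$-integral is precisely the scalar product of Step 1, hence nonzero only when $j\,{=}\,i,\ \bm{q}\,{=}\,\bm{k}$, or when $\bm{x}_j$ overlaps $\bm{x}_i$ on the side ($j\,{=}\,i-m$, $m\geq1$) whose overlap still contains all the nonvanishing components of $\bm{k}$, with $\bm{q}$ equal to the right-shift of $\bm{k}$ by $m$. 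For $\bm{k}\in\mathcal{F}^u$ (last $s-u$ components zero) the admissible shifts are exactly $m\,{=}\,0,1,\dots,s-u$---that is, $s-u+1$ pairs $(j,\bm{q})$, indices taken modulo $d$. For each of them $\lambda_{\bm{q}}\,{=}\,\lambda_{\bm{k}}$ by isotropy (a shift does not change the modulus) and $e^{i\bm{q}\cdot\bm{x}_j}\,{=}\,e^{i\bm{k}\cdot\bm{x}_i}\,{=}\,\Phi_{\bm{k},i}(\bm{x})$, so the whole expression collapses to $\Lambda_{\bm{k},i}\Phi_{\bm{k},i}(\bm{x})$ with $\Lambda_{\bm{k},i}\,{=}\,\frac{s-u+1}{d}\,\lambda_{\bm{k}}$, while the case $\bm{k}\,{=}\,\bm{0}$ gives $\int_{[0,1]^d}d^dy\,K^{LC}(\bm{x},\bm{y})\,{=}\,\lambda_{\bm{0}}$ immediately.

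\textbf{Step 3 (reconstruction) and main obstacle.} Finally I would start from $K^{LC}(\bm{x},\bm{y})\,{=}\,\frac{1}{d}\sum_{j=1}^d\sum_{\bm{q}}\lambda_{\bm{q}}\,\phi_{\bm{q}}(\bm{x}_j)\overline{\phi_{\bm{q}}(\bm{y}_j)}$ and regroup the wavevectors: every $\bm{q}\neq\bm{0}$ is the right-shift by a unique $m\in\{0,\dots,s-u\}$ of a unique representative $\bm{k}\in\mathcal{F}^u$, and $\phi_{\bm{q}}(\bm{x}_j)\,{=}\,\phi_{\bm{k}}(\bm{x}_{j+m})$. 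The reindexing $i\,{=}\,j+m$ turns $\sum_{j}\sum_{\bm{q}}$ into $\sum_{u}\sum_{\bm{k}\in\mathcal{F}^u}\sum_{m=0}^{s-u}\sum_{i=1}^{d}$, the $m$-sum supplies the factor $s-u+1$, and the result is exactly~\autoref{eq:loc-decomp-overlap} with eigenvalues~\autoref{eq:loc-eigval-overlap}. The step I expect to require the most care is the degeneracy bookkeeping common to Steps 2 and 3: showing that restricting the wavevector sum to $\bigcup_u\mathcal{F}^u$ and inserting the multiplicity $s-u+1$ are the same operation, that only the ``left'' overlaps contribute (because the eigenfunctions $\Phi_{\bm{k},i}$ are left-aligned inside the patch, unlike the pooled $\Phi_{\bm{k}}$ of the convolutional case), and that the wrap-around of the patch index is consistent throughout. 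Everything else is a routine transcription of the computations in the proof of Lemma~\ref{lemma:conv-spectra-overlap}.
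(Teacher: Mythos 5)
Your proposal is correct and follows essentially the same route as the paper's proof: orthonormality via the same patch-overlap case analysis (cross terms killed because every $\bm{k}\in\bigcup_u\mathcal{F}^u$ has $k_1\neq 0$), the eigenproblem where only the $j=i$ term and the left-overlapping patches with $\bm{q}$ a right-shift of $\bm{k}$ survive, giving the multiplicity $s-u+1$ by isotropy, and the resummation of the Fourier expansion by grouping shift-equivalent wavevectors into representatives in $\mathcal{F}^u$. Your Step 3 merely makes the paper's degeneracy bookkeeping slightly more explicit via the reindexing $i=j+m$, which is the same argument.
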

\begin{proof} We start by proving the orthonormality of the eigenfunctions. The scalar product $\left\langle\Phi_{\bm{k},i}, \Phi_{\bm{q},j} \right\rangle$ depends on the relation between the $i$-th and $j$-th patches.

\begin{subequations}\label{eq:scalar-product-overlapping-loc}
\begin{align}
\int_{[0,1]^d} d^d x\,  \phi_{\bm{k}}(\bm{x}_i) \overline{ \phi_{\bm{q}}(\bm{x}_j)} & &\nonumber \\
\label{eq:scalar-product-overlapping-loc-pplus} &=\delta(\bm{k}^{(s-o)}_1,\bm{0}) \, \delta(\bm{k}^{(o)}_{s-o+1}, \bm{q}^{(o)}_1) \, \delta(\bm{q}^{(s-o)}_{o+1}, \bm{0}), &\text{ if } j\in\mathcal{P}_{i,+},\\ \label{eq:scalar-product-overlapping-loc-pminus}
& =\delta(\bm{q}^{(s-o)}_1,\bm{0}) \, \delta(\bm{k}^{(o)}_1, \bm{q}^{(o)}_{s-o+1}) \, \delta(\bm{k}^{(s-o)}_{o+1}, \bm{0}), & \text{ if } j\in\mathcal{P}_{i,-},\\
\label{eq:scalar-product-overlapping-loc-zero}& =\delta(\bm{k},\bm{0})\,\delta(\bm{q},\bm{0}), & \text{ if } j\notin\mathcal{P}_{i}, \\ \label{eq:scalar-product-overlapping-loc-equal}
& = \delta(\bm{k},\bm{q}), &\text{ if } j=i.
\end{align}
\end{subequations}

Clearly, $\left\langle \Phi_{\bm{0}}, \Phi_{\bm{0}}\right\rangle\,{=}\,1$ and
setting one of $\bm{q}$ and $\bm{k}$ to $\bm{0}$ in~\autoref{eq:scalar-product-overlapping-loc} yields orthogonality between $\Phi_{\bm{0}}$ and $\Phi_{\bm{k}, i}$ for all $\bm{k}\neq \bm{0}$ and $i\,{=}\,1,\dots,d$. For any $\bm{k}$ and $\bm{q}\neq 0$, \autoref{eq:scalar-product-overlapping-loc-equal} implies

\begin{equation}
 \left\langle\Phi_{\bm{k},i}, \Phi_{\bm{q},j} \right\rangle = \delta(\bm{k},\bm{q})\delta_{i,j}
\end{equation}

unless $\bm{k}\,{=}\,\bm{k}^{(u)}_1\in\mathcal{F}^u$ and $\bm{q}$ is a shift of $\bm{k}^{(u)}$. But such a $\bm{q}$ would have $q_1\,{=}\,0$ and there is no eigenfunction $\Phi_{\bm{q}}$ with $q_1\,{=}\,0$, apart from $\Phi_{\bm{0}}$. Hence, orthonormality is proven.

We then prove that eigenfunctions and eigenvalues defined in~\autoref{eq:loc-eigvec-overlap} and~\autoref{eq:loc-eigval-overlap} satisfy the kernel eigenproblem of $K^{LC}$. For $\bm{k}\,{=}\,\bm{0}$,

\begin{equation}
    \int_{[0,1]^d} d^dy\, K^{LC}(\bm{x},\bm{y}) = \frac{1}{d} \sum_{i=1}^d \int_{[0,1]^d} d^dy\, \sum_{\bm{q}}\lambda_{\bm{k}} e^{i\bm{q}\cdot(\bm{x}_i-\bm{y}_i)} = \lambda_{\bm{0}}.
\end{equation}

In general,

\begin{equation}\begin{aligned}
    &\int_{[0,1]^d} d^dy\, K^{LC}(\bm{x},\bm{y})e^{i\bm{k}\cdot\bm{y}_l} = \frac{1}{d} \sum_{i=1}^d \int_{[0,1]^d} d^dy\, \sum_{\bm{q}}\lambda_{\bm{q}} e^{i\bm{q}\cdot(\bm{x}_i-\bm{y}_i)} e^{i\bm{k}\cdot\bm{y_l}} \\
    &= \frac{1}{d}\sum_{\bm{q}} \lambda_{\bm{q}} \left( \delta(\bm{k},\bm{q}) e^{i\bm{k}\cdot\bm{x}_l} + \sum_{i\notin\mathcal{P}_l}\delta(\bm{q},0) \, \delta(\bm{k},0) \right.\\ &+ \left. \sum_{i\in\mathcal{P}_{l,+}} e^{i\bm{q}\cdot\bm{x}_i}\delta(\bm{k}^{(s-o)}_1,\bm{0}) \, \delta(\bm{k}^{(o)}_{s-o+1}, \bm{q}^{(o)}_1) \, \delta(\bm{q}^{(s-o)}_{o+1}, \bm{0}) \right. \\  &+ \left.\sum_{i\in\mathcal{P}_{l,-}} e^{i\bm{q}\cdot\bm{x}_i}\delta(\bm{q}^{(s-o)}_1,\bm{0}) \, \delta(\bm{k}^{(o)}_1, \bm{q}^{(o)}_{s-o+1}) \, \delta(\bm{k}^{(s-o)}_{o+1}, \bm{0}) \right).
\end{aligned}\end{equation}

For $\bm{k}\,\in\,\mathcal{F}^u$, with $u=1,\dots,s$, the deltas which set the first component of $\bm{k}$ to $0$ are never satisfied, therefore

\begin{equation}\label{eq:eigval-loc-comp}\begin{aligned}
    &\int_{[0,1]^d} d^dy\, K^{LC}(\bm{x},\bm{y})e^{i\bm{k}\cdot\bm{y}_l} \\
    &= \frac{1}{d}\sum_{\bm{q}} \lambda_{\bm{q}} \left( \delta(\bm{k},\bm{q}) e^{i\bm{k}\cdot\bm{x}_l} + \sum_{i\in\mathcal{P}_{l,-}} e^{i\bm{q}\cdot\bm{x}_i}\delta(\bm{q}^{(s-o)}_1,\bm{0}) \, \delta(\bm{k}^{(o)}_1, \bm{q}^{(o)}_{s-o+1}) \, \delta(\bm{k}^{(s-o)}_{o+1}, \bm{0}) \right).
\end{aligned}\end{equation}

The second term in brackets vanishes for $\bm{k}\,\in\,\mathcal{F}^s$ and the eigenvalue equation is satisfied with $\lambda_{\bm{k}, l} =\lambda_{\bm{k}}/d$. For $\bm{k}=\bm{k}^{(u)}_1\,\in\,\mathcal{F}^u$ with $u\,{<}\,s$, $\delta(\bm{k}^{(s-o)}_{o+1}, \bm{0})\,{=}\,1$ for any $o\,{\geq}\,u$. As a result of the remaining deltas, the RHS of~\autoref{eq:eigval-loc-comp} becomes a sum over all $\bm{q}$'s which can be obtained from shifts of $\bm{k}^{(u)}_1$, which are $s\,{-}\,u\,{+}\,1$ (including $\bm{k}^{(u)}_1$ itself). The patch $\bm{x}_i$ which is multiplied by $\bm{q}$ in the exponent is also a shift of $\bm{x}_l$, thus all the factors $e^{i\bm{q}\cdot\bm{x}_i}$ appearing in the sum coincide with $e^{i\bm{k}^{(u)}_1\cdot\bm{x}_i}$. As $\lambda_{\bm{q}}$ depends on the modulus of $\bm{q}$, all these terms correspond to the same eigenvalue, $\lambda_{\bm{k}^{(u)}_1}$, so that 

\begin{equation}
    \int_{[0,1]^d} d^dy\, K^{LC}(\bm{x},\bm{y})e^{i\bm{k}^{(u)}_1\cdot\bm{y}_l} = \left(\frac{s-u+1}{d} \lambda_{\bm{k}^{(u)}_1}\right) e^{i\bm{k}^{(u)}_1\cdot\bm{x}_l}.
\end{equation}

Finally, we prove the expansion of the kernel in~\autoref{eq:loc-decomp-overlap},

\begin{align}\label{eq:loc-decomp-proof}
    K^{LC}(\bm{x}, \bm{y}) &= \frac{1}{d} \sum_{i\in\mathcal{P}} C(\bm{x}_i, \bm{y}_i) = \sum_{\bm{k}} \frac{1}{d}\lambda_{\bm{k}}
    \sum_{i\in\mathcal{P}} \phi_{\bm{k}}(\bm{x}_i) \overline{\phi_{\bm{k}}(\bm{y}_i)}.
\end{align}

As in the proof of the eigendecomposition of convolutional kernels, all the $\bm{k}$ having $s\,{-}\,u$ vanishing extremal components can be written as shifts of $\bm{k}_1^{(u)}\in\mathcal{F}^{u}$, which has the \emph{last} $s\,{-}\,u$ components vanishing. The shift of $\bm{k}$ does not affect $\lambda_{\bm{k}}$ nor the product $\phi_{\bm{k}}(\bm{x}_i) \overline{\phi_{\bm{k}}(\bm{y}_i)}$, after summing over $i$ leading to a degeneracy of eigenvalues which is removed by restricting the sum over $\bm{k}$ to the sets $\mathcal{F}^u$, $u\,{\leq}\,s$, and rescaling the remaining eigenvalues $\lambda_{\bm{k}_1^{(u)}}$ with a factor of $(s-u+1)/d$, leading to~\autoref{eq:loc-decomp-overlap}.
\end{proof}

\section{Proof of Theorem \ref{th:scaling}}\label{app:thm1}

\begin{theorem}[Theorem \ref{th:scaling} in the main text]
Let $K_T$ be a $d$-dimensional convolutional kernel with a translationally-invariant $t$-dimensional constituent and leading nonanalyticity at the origin controlled by the exponent $\alpha_t\,{>}\,0$. Let $K_S$ be a $d$-dimensional convolutional or local student kernel with a translationally-invariant $s$-dimensional constituent, and with a nonanalyticity at the origin controlled by the exponent $\alpha_s\,{>}\,0$. 
Assume, in addition, that  if the kernels have overlapping patches then $s\geq t$; whereas if the kernels have nonoverlapping patches $s$ is an integer multiple of $t$; and that data are uniformly distributed on a $d$-dimensional torus. Then, the following asymptotic equivalence holds in the limit $P\to\infty$,
\begin{equation}
    \mathcal{B}(P) \sim P^{-\beta}, \quad \beta = \alpha_t / s.
\end{equation}
\end{theorem}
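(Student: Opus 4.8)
The plan is to pass to Fourier space, where the hypotheses (translationally-invariant constituents, data uniform on the $d$-torus) make both $K_T$ and $K_S$ diagonal in the plane-wave basis, and then to estimate $\mathcal{B}(P)=\sum_{\rho>P}\mathbb{E}[|c_\rho|^2]$ in three steps: compute the variances $\mathbb{E}[|c_{\bm k}|^2]$ of the target's coefficients in the \emph{student} eigenbasis; rank the student eigenvalues so as to turn the cutoff $\rho>P$ into a cutoff $\|\bm k\|>P^{1/s}$ on wavevector modulus; and evaluate the resulting tail sum.

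First I would record the two power laws used throughout. By Widom's asymptotics~\cite{widom1964asymptotic}, the student constituent of filter size $s$ with leading nonanalyticity $\alpha_s$ has eigenvalues $\lambda^S_{\bm k}\sim\|\bm k\|^{-(s+\alpha_s)}$ for large $\|\bm k\|$, and the teacher constituent has $\lambda^T_{\bm q}\sim\|\bm q\|^{-(t+\alpha_t)}$. Using Lemma~\ref{lemma:conv-spectra-overlap} (overlapping patches) or Lemma~\ref{lemma:conv-spectra} (nonoverlapping), together with their local analogues Lemma~\ref{lemma:loc-spectra-overlap} and Lemma~\ref{lemma:loc-spectra}, the student kernel $K_S$ is diagonalized by $d$-dimensional plane-wave combinations $\Phi_{\bm k}$ with eigenvalues equal to $\lambda^S_{\bm k}$ up to $k$-independent (at most $|\mathcal{P}|$-dependent) factors. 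The ranking is then elementary: the number of $s$-dimensional wavevectors with $\|\bm k\|\leq k$ is $\sim k^s$, while the lower-dimensional degenerate families $\mathcal{F}^{u}$ with $u<s$ contribute only a subleading $k^{u}$; hence the $\rho$-th largest student eigenvalue sits at $k\sim\rho^{1/s}$, so $\rho>P$ becomes $\|\bm k\|>P^{1/s}$ for a convolutional student, and $\|\bm k\|>(P/|\mathcal{P}|)^{1/s}$ for a local student because of its $|\mathcal{P}|$-fold multiplicity of modes.

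The crux is the computation of
\[
\mathbb{E}[|c_{\bm k}|^2]=\int_{[0,1]^d}\!\!d^dx\int_{[0,1]^d}\!\!d^dy\;\Phi_{\bm k}(\bm x)\,\overline{\Phi_{\bm k}(\bm y)}\,K_T(\bm x,\bm y),
\]
using $\mathbb{E}[f^*(\bm x)f^*(\bm y)]=K_T(\bm x,\bm y)$. Expanding $K_T$ over its $t$-dimensional plane-wave eigenbasis and invoking orthogonality of plane waves on the torus, one shows that this integral vanishes unless the student wavevector $\bm k$ is, up to the cyclic/shift structure of the patches, supported within a window of $t$ consecutive coordinates --- equivalently, unless the effective frequency of $\Phi_{\bm k}$ is a frequency available to some teacher patch. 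This is exactly where the hypotheses that teacher patches fit inside student patches ($s\geq t$ overlapping, $t\mid s$ nonoverlapping) are used: if they fail, $K_T$ puts power on a component orthogonal to the closure of the student's span and $\mathcal{B}(P)$ cannot tend to zero. For such ``$t$-local'' $\bm k$, writing $\Phi_{\bm k}$ as a rescaling of the eigenfunction of a $t$-dimensional convolutional kernel and projecting the $t$-dimensional convolutional teacher onto it, the surviving sums over teacher and student patch indices produce only $k$-independent (at most $|\mathcal{P}|$-dependent) combinatorial prefactors, leaving $\mathbb{E}[|c_{\bm k}|^2]\propto\lambda^T_{\bm k}\sim\|\bm k\|^{-(t+\alpha_t)}$, with $\|\bm k\|$ unchanged whether $\bm k$ is read as a $t$- or an $s$-vector since its remaining components vanish. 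I expect this bookkeeping --- establishing the vanishing-off-the-$t$-local-set claim and the proportionality to $\lambda^T_{\bm k}$ uniformly across the convolutional/local and overlapping/nonoverlapping cases while tracking the $|\mathcal{P}|$ powers --- to be the main obstacle; everything else is routine.

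Finally I would assemble the tail sum. Since the summand is nonzero only on the $t$-local wavevectors, which form an effectively $t$-dimensional set (times a $k$-independent multiplicity counting the window position and, for a local student, the patch index), replacing the sum by an integral via Euler--Maclaurin --- an operation affecting only $k$-independent prefactors and subleading terms, as in~\autoref{app:spectral} --- gives
\begin{equation}
\mathcal{B}(P)=\sum_{\rho>P}\mathbb{E}[|c_\rho|^2]\;\sim\;\int_{P^{1/s}}^{\infty}dk\;k^{t-1}\,k^{-(t+\alpha_t)}\;=\;\int_{P^{1/s}}^{\infty}dk\;k^{-1-\alpha_t}\;\sim\;P^{-\alpha_t/s},
\end{equation}
the integral converging precisely because $\alpha_t>0$. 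For a local student the lower limit becomes $(P/|\mathcal{P}|)^{1/s}$, giving $\mathcal{B}(P)\sim(P/|\mathcal{P}|)^{-\alpha_t/s}$ up to a $P$-independent prefactor, which is~\autoref{eq:prediction-local-stud}. In all cases $\beta=\alpha_t/s$, completing the proof.
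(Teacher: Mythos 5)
Your proposal is correct and follows essentially the same route as the paper's proof in Appendix~\ref{app:thm1}: diagonalization in Fourier space via the Mercer lemmas, the cutoff $k_c(P)\sim P^{1/s}$ (rescaled by $|\mathcal{P}|$ for a local student), the observation that $\mathbb{E}[|c_{\bm k}|^2]$ is supported on effectively $t$-dimensional wavevectors where it is proportional to the teacher eigenvalue $\sim k^{-(t+\alpha_t)}$, and the resulting tail integral $\int_{P^{1/s}}^{\infty}dk\,k^{t-1}k^{-(t+\alpha_t)}\sim P^{-\alpha_t/s}$. Your treatment of the overlapping case (the $\mathcal{F}^u$, $u<s$, families contributing only subleading counts so the ranking is unchanged) also mirrors the paper's argument.
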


\begin{proof}

For the sake of clarity, we start with the proof in the nonoverlapping-patches case, and then extend it to the overlapping-patches case. Since $K_T$ and $K_S$ have translationally-invariant constituent kernels and data are uniformly distributed on a $d$-dimensional torus, the kernels can be diagonalised in Fourier space. Let us start by considering a convolutional student: because of the constituent kernel's isotropy, the Fourier coefficients $\Lambda^{(s)}_{\bm{k}}$ of $K_S$ depend on $k$ (modulus of $\bm{k}$) only. Notice the superscript indicating the dimensionality of the student constituents. In particular, $\Lambda^{(s)}_{\bm{k}}$ is a decreasing function of $k$ and, for large $k$, $\Lambda_{\bm{k}}\sim k^{-(s+\alpha_s)}$.  Then, $\mathcal{B}(P)$ reads

\begin{equation}\label{eq:Bp}
 \mathcal{B}(P) = \sum_{\left\lbrace \bm{k}| k > k_c(P) \right\rbrace} \mathbb{E}[|c_{\bm{k}}|^2],
\end{equation}

where $k_c(P)$ is defined as the wavevector modulus of the $P$-th largest eigenvalue and $\mathbb{E}[|c_{\bm{k}}|^2]$ denotes the variance of the target coefficients in the student eigenbasis. $k_c(P)$ is such that there are exactly $P$ eigenvalues with $k\,{\leq}\,k_c(P)$,

\begin{equation}\label{eq:radius}
    P = \sum_{\left\lbrace \bm{k}|k<k_c(P)\right\rbrace} 1 \sim \int \frac{d^s k}{(2\pi)^s} \theta(k_c(P)-k) = \frac{1}{(2\pi)^s} \frac{\pi^{s/2}}{\Gamma(s/2 + 1)} k_c(P)^{s},
\end{equation}
i.e. $k_c(P)\sim P^{1/s}$.

By denoting the eigenfunctions of the student with $\Phi_{\bm{k}}^{(s)}$, the superscript $(s)$ indicating the dimension of the constituent's plane waves,

\begin{align}\label{eq:convconv}
  \mathbb{E}[|c_{\bm{k}}|^2] &= \int_{[0,1]^d} d^dx \, \Phi_{\bm{k}}^{(s)}(\bm{x})\int_{[0,1]^d} d^dy \, \overline{\Phi_{\bm{k}}^{(s)}(\bm{y})} \mathbb{E}[f^*(\bm{x})f^*(\bm{y})]\\ & =\int_{[0,1]^d} d^dx \, \Phi_{\bm{k}}^{(s)}(\bm{x})\int_{[0,1]^d} d^dy \, \overline{\Phi_{\bm{k}}^{(s)}(\bm{y})} K_T(\bm{x}, \bm{y}). \nonumber
\end{align}

Decomposing the teacher kernel $K_T$ into its eigenvalues $\Lambda_{\bm{q}}^{(t)}$ and eigenfunctions $\Phi_{\bm{q}}^{(t)}(\bm{y})$,

\begin{align}
 \mathbb{E}[|c_{\bm{k}}|^2] = \int_{[0,1]^d} d^dx \, \Phi^{(s)}_{\bm{k}}(\bm{x})\int_{[0,1]^d} d^dy \, \overline{\Phi^{(s)}_{\bm{k}}(\bm{y})} &\Biggl(\Lambda^{(t)}_{\bm{0}} \\ &+ \frac{s}{d} \sum_{\bm{q}\neq\bm{0}} \Lambda^{(t)}_{\bm{q}} \sum_{i\in\mathcal{P}^{(t)}} \phi^{(t)}_{\bm{q}}(\bm{x}_i) \sum_{j\in\mathcal{P}^{(t)}} \overline{\phi^{(t)}_{\bm{q}}(\bm{y}_j)}\Biggr). \nonumber
\end{align}

The $\bm{q}\eq\bm{0}$ mode of the teacher can give non-vanishing contributions to $c_{\bm{0}}$ only, therefore it does not enter any term of the sum in \autoref{eq:Bp}. Once we removed the  term with $\bm{q}\eq\bm{0}$, consider the  $\bm{y}$-integral:

\begin{align}\label{eq:Ik}
    \mathcal{I}_{\bm{k}}(\bm{x}) &= \int_{[0,1]^d} d^dy \, \sqrt{\frac{s}{d}} \sum_{m\in\mathcal{P}^{(s)}}  \overline{\phi^{(s)}_{\bm{k}}(\bm{y}_m)} \frac{s}{d} \sum_{\bm{q}\neq\bm{0}} \Lambda^{(t)}_{\bm{q}} \sum_{i\in\mathcal{P}^{(t)}} \phi^{(t)}_{\bm{q}}(\bm{x}_i) \sum_{j\in\mathcal{P}^{(t)}} \overline{\phi^{(t)}_{\bm{q}}(\bm{y}_j)} \\
    &= \left(\frac{s}{d}\right)^{\frac{3}{2}} \sum_{\bm{q}\neq\bm{0}} \Lambda^{(t)}_{\bm{q}} \sum_{i\in\mathcal{P}^{(t)}} \phi^{(t)}_{\bm{q}}(\bm{x}_i) \sum_{m\in\mathcal{P}^{(s)}} \sum_{j\in\mathcal{P}^{(t)}} \int_{[0,1]^d} d^dy\, \overline{\phi^{(s)}_{\bm{k}}(\bm{y}_m)} \,    \overline{\phi^{(t)}_{\bm{q}}(\bm{y}_j)}. \nonumber
\end{align}

As all the $t$-dimensional patches of the teacher must be contained in at least one of the $s$-dimensional patches of the student, in the nonoverlapping case we require that $s$ is an integer multiple of $t$. Then, each of the teacher patches is entirely contained in one and only one patch of the student. If the teacher patch $\bm{y}_j$ is not contained in the student patch $\bm{y}_m$, we can factorise the integration over $\bm{y}$ into two integrals over $\bm{y}_j$ and $\bm{y}_m$. These terms give vanishing contributions to $\mathcal{I}_{\bm{k}}(\bm{x})$ since the integral of a plane wave over a period is always zero for non-zero wavevectors. Instead, if the teacher patch $\bm{y}_j$ is contained in the student patch $\bm{y}_m$, denoting with $l$ the index of the element of $\bm{y}_m$ which coincide with the first element of $\bm{y}_j$, we can factorise the student eigenfunctions as follows

\begin{equation}\label{eq:fact}
 \phi^{(s)}_{\bm{k}}(\bm{y}_m) = \phi^{(t)}_{\bm{k}^{(t)}_l}(\bm{y}_j) \phi^{(s-t)}_{\bm{k}\smallsetminus\bm{k}^{(t)}_l}(\bm{y}_{m\smallsetminus j}).
\end{equation}

Here $\bm{k}^{(t)}_l$ denotes the $t$-dimensional patch of $\bm{k}$ starting at $l$ and $\bm{k}\smallsetminus\bm{k}^{(t)}_l$ the sequence of elements which are in $\bm{k}$ but not in $\bm{k}^{(t)}_l$. As $s$ is an integer multiple of $t$, $l\,{=}\,\tilde{l} \times s/t$ with $\tilde{l}=1,\dots,t$. Inserting \autoref{eq:fact} into \autoref{eq:Ik},

\begin{equation}
 \mathcal{I}_{\bm{k}}(\bm{x}) = \sum_{l=\tilde{l}s/t,\, \tilde{l}=1}^t \delta(\bm{k}\smallsetminus\bm{k}^{(t)}_l,\bm{0}) \, \Lambda_{\bm{k}^{(t)}_l}^{(t)} \sqrt{\frac{s}{d}} \sum_{i\in\mathcal{P}^{(t)}} \overline{\phi_{\bm{k}^{(t)}_l}^{(t)}(\bm{x}_i)}.
\end{equation}

The $\bm{x}$-integral of \autoref{eq:convconv} can be performed by the same means after expanding $\Phi^{(s)}_{\bm{k}}$ as a sum of $s$-dimensional plane waves, so as to get,

\begin{equation}\label{eq:ck2}
 \mathbb{E}[|c_{\bm{k}}|^2] = \sum_{l=\tilde{l}s/t,\, \tilde{l}=1}^t\delta(\bm{k}\smallsetminus\bm{k}^{(t)}_l,\bm{0}) \, \Lambda_{\bm{k}^{(t)}_l}^{(t)}.
\end{equation}

Therefore, $\mathbb{E}[|c_{\bm{k}}|^2]$ is non-zero only for $\bm{k}$'s which have at most $t$ consecutive components greater or equal than zero, and the remaining $s\,{-}\,t$ being strictly zero. Inserting \autoref{eq:ck2} into \autoref{eq:Bp},

\begin{equation}\label{eq:final}
 \mathcal{B}(P) = \sum_{\left\lbrace \bm{k}| k > k_c(P)\right\rbrace} \sum_{l=\tilde{l}s/t,\, \tilde{l}=1}^t \delta(\bm{k}\smallsetminus\bm{k}^{(t)}_l,\bm{0}) \, \Lambda_{\bm{k}^{(t)}_l}^{(t)} \sim \int_{P^{1/s}}^\infty dk k^{t-1} k^{-(\alpha_t + t)} \sim P^{-\frac{\alpha_t}{s}}.
\end{equation}

When using a local student, the convolutional eigenfunctions in the RHS of~\autoref{eq:convconv} are replaced by the local eigenfunctions $\Phi_{\bm{k}, i}(\bm{x})$ of~\autoref{eq:loc-decomp}. 
Repeating the same computations, one finds 
\begin{equation}
    k_c \sim  \left(\frac{P}{d/s}\right)^{\frac{1}{s}},
\end{equation}
\begin{equation}\label{eq:final2}
 \mathbb{E}[|c_{\bm{k},i}|^2] = \frac{s}{d} \sum_{l=\tilde{l}s/t,\, \tilde{l}=1}^t\delta(\bm{k}\smallsetminus\bm{k}^{(t)}_l,\bm{0}) \, \Lambda_{\bm{k}^{(t)}_l}^{(t)}.
\end{equation}
As a result,
\begin{align}
 \mathcal{B}(P) &= \sum_{i \in \mathcal{P}} \sum_{\left\lbrace \bm{k}| k > k_c(P)\right\rbrace} \frac{s}{d} \sum_{l=\tilde{l}s/t,\, \tilde{l}=1}^t \delta(\bm{k}\smallsetminus\bm{k}^{(t)}_l,\bm{0}) \, \Lambda_{\bm{k}^{(t)}_l}^{(t)} \\ 
 &\sim \int_{\left(\frac{P}{d/s}\right)^{\frac{1}{s}}}^\infty dk k^{t-1} k^{-(\alpha_t + t)} \sim \left(\frac{P}{d/s}\right)^{-\frac{\alpha_t}{s}}.
\end{align}


As we showed in \autoref{app:mercer-overlap}, when the patches overlap the set of wavevectors which index the eigenvalues is restricted from $\mathbb{Z}^s$ to the union of the $\mathcal{F}^u$'s for $u\,{=}\,0,\dots,s$. In addition, the eigenvalues with $\bm{k} \in \mathcal{F}^{u}$, $0\,{<}\,u\,{<}\,s$, are rescaled by a factor $(s-u+1)/d$. Therefore, in the overlapping case the eigenvalues do not decrease monotonically with $k$ and $\mathcal{B}(P)$ cannot be written as a sum of over $\bm{k}$'s with modulus $k$ larger than a certain threshold $k_c$. By considering also that, with $t\,{\leq}\,s$, $\mathbb{E}[|c_{\bm{k}}|^2]$ is non-zero only for $\bm{k}$'s which have at most $t$ consecutive nonvanishing components, we have

\begin{equation}\label{eq:Bp-overlap}
 \mathcal{B}(P) = \sum_{u=0}^t \sum_{\bm{k}\in\mathcal{F}^u}\mathbb{E}[|c_{\bm{k}}|^2]\chi(\Lambda^{(s)}_{\bm{k}}\,{>}\,\Lambda_P),
\end{equation}

where $\Lambda_P$ denotes the $P$-th largest eigenvalue and the indicator function $\chi(\Lambda^{(s)}_{\bm{k}}\,{>}\,\Lambda_P)$ ensures that the sum runs over all but the first $P$ eigenvalues of the student. The sets $\{\mathcal{F}^{u}\}_{u<t}$ have all null measure in $\mathbb{Z}^t$, whereas $\mathcal{F}^t$ is dense in $\mathbb{Z}^t$, thus the asymptotics of $\mathcal{B}(P)$ are dictated by the sum over $\mathcal{F}^t$. When $\bm{k}$'s are restricted to the latter set, eigenvalues are again decreasing functions of $k$ and the constraint $\Lambda^{(s)}_{\bm{k}}\,{>}\,\Lambda_P$ translates into $k\,{>}\,k_c(P)$. Having changed, with respect to the nonoverlapping case, only an infinitesimal fraction of the eigenvalues, the asymptotic scaling of $k_c(P)$ with $P$ remains unaltered and the estimates of~\autoref{eq:final} and~\autoref{eq:final2} extend to kernels with nonoverlapping patches after substituting the degeneracy $d/s$ with $|\mathcal{P}|=d$.



\end{proof}

\section{Asymptotic learning curves with a local teacher}\label{app:local}

\begin{theorem} Let $K_T$ be a $d$-dimensional local kernel with a translationally-invariant $t$-dimensional constituent and leading nonanalyticity at the origin controlled by the exponent $\alpha_t\,{>}\,0$. Let $K_S$ be a $d$-dimensional local student kernel with a translationally-invariant $s$-dimensional constituent, and with a nonanalyticity at the origin controlled by the exponent $\alpha_s\,{>}\,0$. 
Assume, in addition, that  if the kernels have overlapping patches then $s\geq t$; whereas if the kernels have nonoverlapping patches $s$ is an integer multiple of $t$; and that data are uniformly distributed on a $d$-dimensional torus. Then, the following asymptotic equivalence holds in the limit $P\to\infty$,
\begin{equation}
    \mathcal{B}(P) \sim P^{-\beta}, \quad \beta = \alpha_t / s.
\end{equation}
\end{theorem}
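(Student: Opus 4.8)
The plan is to reuse, essentially verbatim, the argument of \autoref{app:thm1}: a local teacher is in fact \emph{simpler} than the convolutional teacher treated there, because a local teacher kernel couples no distinct patches. Concretely, since both constituents are translationally invariant and the data measure is uniform on the torus, I would diagonalise $K_T$ and $K_S$ in Fourier space by invoking \autoref{lemma:loc-spectra} (nonoverlapping patches) or \autoref{lemma:loc-spectra-overlap} (overlapping patches): the student eigenfunctions are $\Phi^{(s)}_{\bm{k},m}(\bm{x})=\phi^{(s)}_{\bm{k}}(\bm{x}_m)$, $m\in\mathcal{P}^{(s)}$, with eigenvalues proportional to $\lambda^{(s)}_{\bm{k}}\sim k^{-(s+\alpha_s)}$, while the teacher kernel reads $K_T(\bm{x},\bm{y})=\Lambda^{(t)}_{\bm 0}+\sum_{\bm q\neq\bm 0}\sum_{i\in\mathcal{P}^{(t)}}\Lambda^{(t)}_{\bm q,i}\,\phi^{(t)}_{\bm q}(\bm{x}_i)\overline{\phi^{(t)}_{\bm q}(\bm{y}_i)}$ with $\Lambda^{(t)}_{\bm q,i}\propto\lambda^{(t)}_{\bm q}\sim q^{-(t+\alpha_t)}$. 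I would then compute the variance of the target coefficients in the student basis, $\mathbb{E}[|c_{\bm{k},m}|^2]=\int_{[0,1]^d} d^dx\,\Phi^{(s)}_{\bm{k},m}(\bm{x})\int_{[0,1]^d} d^dy\,\overline{\Phi^{(s)}_{\bm{k},m}(\bm{y})}\,K_T(\bm{x},\bm{y})$.

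The core step is the same patch-factorisation bookkeeping as in \autoref{app:thm1}. Fixing the student patch $m$, the sum over teacher patches $i$ receives a contribution only from those $\bm{x}_i$ contained in $\bm{x}_m$ (a containment structure guaranteed by ``$s$ an integer multiple of $t$'' in the nonoverlapping case and by ``$s\geq t$'' in the overlapping case); for such an $i$, writing $l$ for the offset of $\bm{x}_i$ inside $\bm{x}_m$ and factorising $\phi^{(s)}_{\bm{k}}(\bm{x}_m)=\phi^{(t)}_{\bm{k}^{(t)}_l}(\bm{x}_i)\,\phi^{(s-t)}_{\bm{k}\smallsetminus\bm{k}^{(t)}_l}(\bm{x}_{m\smallsetminus i})$, using the eigenvalue equation of the $t$-dimensional constituent and the orthogonality of plane waves on $\bm{x}_{m\smallsetminus i}$ and $\bm{y}_{m\smallsetminus i}$, one finds $\mathbb{E}[|c_{\bm{k},m}|^2]\propto\sum_{l}\delta(\bm{k}\smallsetminus\bm{k}^{(t)}_l,\bm 0)\,\lambda^{(t)}_{\bm{k}^{(t)}_l}$, the sum being over the $O(1)$ (i.e.\ $P$-independent) offsets for which the $l$-th $t$-window of $\bm{x}_m$ is a teacher patch; teacher patches not contained in $\bm{x}_m$ contribute zero by plane-wave orthogonality, exactly as the $i\neq j$ terms did in \autoref{app:thm1}. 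Hence $\mathbb{E}[|c_{\bm{k},m}|^2]$ is nonzero only when $\bm{k}$ is \emph{effectively $t$-dimensional}, i.e.\ supported on $t$ consecutive components, and then $\mathbb{E}[|c_{\bm{k},m}|^2]\sim k^{-(\alpha_t+t)}$.

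It remains to rank the eigenvalues and sum the tail. Counting student eigenfunctions with $|\bm{k}|\le k$ gives $P\sim|\mathcal{P}^{(s)}|\,k_c(P)^{s}$, so $k_c(P)\sim(P/|\mathcal{P}^{(s)}|)^{1/s}$, and
\[
\mathcal{B}(P)=\sum_{m\in\mathcal{P}^{(s)}} \sum_{\{\bm{k}\,:\,|\bm{k}|>k_c(P)\}}\mathbb{E}[|c_{\bm{k},m}|^2]
\ \sim\ |\mathcal{P}^{(s)}|\int_{k_c(P)}^{\infty}\!dk\,k^{t-1}\,k^{-(\alpha_t+t)}
\ \sim\ |\mathcal{P}^{(s)}|\,k_c(P)^{-\alpha_t}
\ \sim\ P^{-\alpha_t/s},
\]
up to a $P$-independent constant; the appearance of $k^{t-1}$ rather than $k^{s-1}$ in the measure, against a cutoff $k_c(P)\sim P^{1/s}$ fixed by the full $s$-dimensional student spectrum, is exactly what produces $\beta=\alpha_t/s$. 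For overlapping patches I would, as in \autoref{app:thm1}, restrict the wavevector index set to $\bigcup_{u\le t}\mathcal{F}^u$ and rescale the eigenvalue attached to $\bm{k}\in\mathcal{F}^u$ by $(s-u+1)/d$; since $\mathcal{F}^t$ is dense in $\mathbb{Z}^t$ while each $\mathcal{F}^u$ with $u<t$ has zero measure, the tail is dominated by $\bm{k}\in\mathcal{F}^t$, where eigenvalues are again monotone in $k$, so $k_c(P)\sim P^{1/s}$ up to constants and the estimate above carries over after the replacement $|\mathcal{P}^{(s)}|\to d$ (consistently with \autoref{eq:prediction-local-stud}). The only genuinely delicate point --- hence the main obstacle --- is the combinatorial argument that $\mathbb{E}[|c_{\bm{k},m}|^2]$ vanishes unless $\bm{k}$ is supported on a $t$-window aligned with a teacher patch sitting inside $\bm{x}_m$; once this is in place, the remainder is a routine transcription of the local-student computation of \autoref{app:thm1}.
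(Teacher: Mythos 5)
Your proposal is correct and follows essentially the same route as the paper, whose own proof of this statement simply transcribes the argument of \autoref{app:thm1} with an extra patch index on the student eigenfunctions, the factor $d/s$ (or $d$) in the eigenvalue count giving $k_c(P)\sim(P/|\mathcal{P}^{(s)}|)^{1/s}$, and a final $P$-independent sum over patches that leaves $\beta=\alpha_t/s$ unchanged. Your patch-containment bookkeeping for the local teacher and the $\mathcal{F}^u$ treatment of the overlapping case match the paper's reasoning point by point.
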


\begin{proof}

The proof is analogous to that of~\autoref{app:thm1}, the only difference being that eigenfunctions and eigenvalues are indexed by $\bm{k}$ and the patch index $i$. This results in an additional factor of $d/s$ in the RHS of~\autoref{eq:radius}. All the discussion between~\autoref{eq:convconv} and~\autoref{eq:ck2} can be repeated by attaching the additional patch index $i$ to all coefficients. ~\autoref{eq:final} for $\mathcal{B}(P)$ is then corrected with an additional sum over patches. The extra sum, however, does not influence the asymptotic scaling with $P$.

\end{proof}

\section{Proof of Theorem \ref{th:ridge}}\label{app:thm2}

\begin{theorem}[Theorem \ref{th:ridge} in the main text]
Let us consider a positive-definite kernel $K$ with eigenvalues $\Lambda_\rho$, $\sum_\rho \Lambda_\rho < \infty$, and eigenfunctions ${\Phi_\rho}$ learning a (random) target function $f^*$ in kernel ridge regression (\autoref{eq:argmin}) with ridge $\lambda$ from $P$ observations $f^*_{\mu}\,{=}\,f^*(\bm{x}^\mu)$, with $\bm{x}^\mu\in \mathbb{R}^d$ drawn from a certain probability distribution. Let us denote with $\mathcal{D}_T(\Lambda)$ the reduced density of kernel eigenvalues with respect to the target and $\epsilon(\lambda,P)$ the generalisation error and also assume that
\begin{itemize}
    \item[$i)$] For any $P$-tuple of indices $\rho_1,\dots,\rho_P$, the vector $(\Phi_{\rho_1}(\bm{x}^1), \dots,\Phi_{\rho_P}(\bm{x}^P))$ is a Gaussian random vector;
    \item[$ii)$] The target function can be written in the kernel eigenbasis with coefficients $c_\rho$ and $c^2(\Lambda_\rho)\,{=}\,\mathbb{E}[|c_\rho|^2]$, with $\mathcal{D}_T(\Lambda) \sim \Lambda^{-(1+r)}$, $c^2(\Lambda) \sim \Lambda^{q}$ asymptotically for small $\Lambda$ and $r\,{>}\,0$, $r\,{<}\,q\,{<}\,r\,{+}\,2$;
\end{itemize}
Then the following equivalence holds in the joint $P\to\infty$ and $\lambda\to 0$ limit with $1/(\lambda\sqrt{P})\to 0$:
\begin{equation}\label{eq:thm-ridge-result-app}
    \epsilon(\lambda, P) \sim  \sum_{\left\lbrace \rho|\Lambda_\rho < \lambda \right\rbrace} \mathbb{E}{[|c_\rho|^2]} = \int_0^{\lambda} d\Lambda \mathcal{D}_T(\Lambda) c^2(\Lambda).
\end{equation}
\end{theorem}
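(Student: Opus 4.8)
The plan is to reduce the claim to a deterministic scalar computation by appealing to the risk estimator of \cite{jacot2020kernel}. Under the Gaussian-design hypothesis $i)$ and in the regime $1/(\lambda\sqrt P)\to 0$, that work supplies a deterministic equivalent for the generalisation error of the form
\begin{equation*}
\epsilon(\lambda,P) \;\sim\; \frac{1}{\,1-\tfrac1P\sum_\rho \tfrac{\Lambda_\rho^2}{(\Lambda_\rho+\tilde\lambda)^2}\,}\;\sum_\rho \frac{\tilde\lambda^2}{(\Lambda_\rho+\tilde\lambda)^2}\,\mathbb{E}[|c_\rho|^2],
\end{equation*}
where the \emph{effective ridge} $\tilde\lambda=\tilde\lambda(\lambda,P)>0$ is the unique solution of $\tilde\lambda = \lambda + \tfrac{\tilde\lambda}{P}\sum_\rho \tfrac{\Lambda_\rho}{\Lambda_\rho+\tilde\lambda}$; this is, up to the $1/P$ normalisation of the loss, the quantity $\kappa_\lambda(P)/P$ of \autoref{eq:bordelon1}--\autoref{eq:bordelon2}, now equipped with rigorous concentration bounds. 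The first step is simply to invoke this equivalent, checking that its hypotheses match our setting.

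The second step is to show that along the joint limit $P\to\infty$, $\lambda\to 0$ with $\lambda\sqrt P\to\infty$, the effective ridge collapses onto the bare ridge, $\tilde\lambda\sim\lambda$, and the denominator tends to $1$. Both follow from the summability of the spectrum alone: since $\tilde\lambda\ge\lambda>0$ and $\sum_\rho\tfrac{\Lambda_\rho}{\Lambda_\rho+\tilde\lambda}\le\tilde\lambda^{-1}\sum_\rho\Lambda_\rho$, the fixed-point equation yields $\lambda\le\tilde\lambda\le\lambda+O(1/P)$, hence $\tilde\lambda=\lambda(1+o(1))$ because $\lambda\gg P^{-1/2}\gg P^{-1}$; similarly $\tfrac1P\sum_\rho\tfrac{\Lambda_\rho^2}{(\Lambda_\rho+\tilde\lambda)^2}\le\tfrac1{P\tilde\lambda}\sum_\rho\Lambda_\rho=O(1/(P\lambda))=o(1)$. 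Consequently $\epsilon(\lambda,P)\sim\sum_\rho\tfrac{\lambda^2}{(\Lambda_\rho+\lambda)^2}\mathbb{E}[|c_\rho|^2]=\int d\Lambda\,\mathcal{D}_T(\Lambda)\tfrac{\lambda^2}{(\Lambda+\lambda)^2}c^2(\Lambda)$, the last identity being just the definition of $\mathcal{D}_T$ and $c^2$.

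The third step is the power-law asymptotics of this integral. Splitting it at a fixed cutoff $\delta$: the tail $\Lambda>\delta$ contributes $O(\lambda^2)$ (the implicit constant being $\delta^{-2}\sum_\rho\mathbb{E}[|c_\rho|^2]$, finite because $q>r$), which is $o(\lambda^{q-r})$ since $q<r+2$; on $\Lambda\le\delta$ one substitutes the asymptotic forms $\mathcal{D}_T(\Lambda)\sim\Lambda^{-(1+r)}$, $c^2(\Lambda)\sim\Lambda^{q}$ and rescales $\Lambda=\lambda u$ to get $\sim\lambda^{q-r}\int_0^{\delta/\lambda}\tfrac{u^{q-1-r}}{(1+u)^2}\,du$, which as $\lambda\to 0$ tends to $\lambda^{q-r}$ times the finite constant $\int_0^\infty\tfrac{u^{q-1-r}}{(1+u)^2}\,du$ --- convergent at the origin exactly because $q>r$ and at infinity exactly because $q<r+2$. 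The same power $\lambda^{q-r}$ is produced by $\int_0^\lambda d\Lambda\,\mathcal{D}_T(\Lambda)c^2(\Lambda)\sim\int_0^\lambda\Lambda^{q-1-r}d\Lambda$, which by definition equals $\sum_{\{\rho\mid\Lambda_\rho<\lambda\}}\mathbb{E}[|c_\rho|^2]$; matching the leading behaviours gives the claim.

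I expect the main obstacle to be the first step rather than the analysis that follows: one must verify that the deterministic equivalent of \cite{jacot2020kernel} transfers verbatim to our situation --- in particular that their concentration estimates are uniform enough along the whole scaling $\lambda\to 0$, $P\to\infty$, $\lambda\sqrt P\to\infty$ to give genuine asymptotic equivalence of the random error with the deterministic formula --- and to pin down where the Gaussianity of the evaluation vectors (hypothesis $i)$) enters their argument. Everything downstream is elementary real analysis, the only delicate points being the integrability thresholds, which are precisely the stated conditions $r<q<r+2$.
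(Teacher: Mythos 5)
Your proposal is correct and follows essentially the same route as the paper: the deterministic equivalent you invoke (effective ridge $\tilde\lambda$ satisfying the fixed-point equation, prefactor $\bigl(1-\tfrac1P\sum_\rho\Lambda_\rho^2(\Lambda_\rho+\tilde\lambda)^{-2}\bigr)^{-1}$) is exactly the paper's estimator $\mathcal{R}(\lambda,P)=\partial_\lambda\vartheta\,\sum_\rho \vartheta^2(\Lambda_\rho+\vartheta)^{-2}\,\mathbb{E}[|c_\rho|^2]$ built from the signal capture threshold of \cite{jacot2020kernel}, and your scalar asymptotics (rescaling $\Lambda=\lambda u$, convergence of $\int_0^\infty u^{q-1-r}(1+u)^{-2}du$ precisely under $r<q<r+2$, matching with $\int_0^\lambda\mathcal{D}_T c^2\,d\Lambda\sim\lambda^{q-r}$) coincide with the paper's beta-function computation, your $\delta$-cutoff argument being if anything slightly more careful. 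The uniformity concern you flag in the first step is resolved in the paper by Theorem 6 of \cite{jacot2020kernel}, which gives $|\epsilon(\lambda,P)-\mathcal{R}(\lambda,P)|\le\bigl(\tfrac1P+g\bigl(\mathrm{Tr}[T_K]/(\lambda\sqrt{P})\bigr)\bigr)\mathcal{R}(\lambda,P)$ with $g$ polynomial and $g(0)=0$, so the relative error vanishes along the scaling $1/(\lambda\sqrt{P})\to0$, exactly as your argument requires.
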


\begin{proof}

In this proof we make use of results derived in~\cite{jacot2020kernel}. Our setup for kernel ridge regression correspond to what the authors of~\cite{jacot2020kernel} call the \emph{classical setting}. Let us introduce the integral operator $T_K$ associated with the kernel, defined by

\begin{equation}
  (T_K f)(\bm{ x}) = \int p\left( d^d y\right) K(\bm{x},\bm{y}) f(\bm{y}). 
\end{equation}

The trace $Tr[T_K]$ coincide with the sum of $K$'s eigenvalues and is finite by hypothesis. We define the following estimator of the generalisation error $\epsilon(\lambda, P)$,

\begin{equation}
    \mathcal{R}(\lambda, P) = \partial_\lambda \vartheta(\lambda) \int p(d^dx)\, \left( f^*(\bm{x}) - (\mathcal{A}_{\vartheta}f^*)(\bm{x}) \right)^2,
\end{equation}

where $\vartheta(\lambda)$ is the \emph{signal capture threshold} (SCT)~\cite{jacot2020kernel} and $\mathcal{A}_{\vartheta}\,{=}\, T_K (T_K + \vartheta(\lambda))^{-1}$ is a reconstruction operator~\cite{jacot2020kernel}. The target function can be written in the kernel eigenbasis by hypothesis (with coefficients $c_\rho$) and $T_K$ has the same eigenvalues and eigenfunctions of the kernel by definition. Hence,

\begin{equation}
    \mathcal{R}(\lambda, P) = \partial_\lambda \vartheta(\lambda)\sum_{\rho=1}^{\infty} \frac{\vartheta(\lambda)^2}{(\Lambda_\rho + \vartheta(\lambda))^2} |c_\rho|^2 = \partial_\lambda \vartheta(\lambda) \int_0^{\infty}d\Lambda\, \mathcal{D}_T(\Lambda) c^2(\Lambda) \frac{\vartheta(\lambda)^2}{(\Lambda + \vartheta(\lambda))^2},
\end{equation}

where $\mathcal{D}_T$ is the reduced density of eigenvalues~\autoref{eq:reduced-density}. We now derive the asymptotics of $\mathcal{R}(\lambda, P)$ in the joint $P\to\infty$ and $\lambda\to0$ limit, then relate the asymptotics of $\mathcal{R}$ to those of $\epsilon(\lambda,P)$ via a theorem proven in~\cite{jacot2020kernel}.

Proposition 3 of~\cite{jacot2020kernel} shows that for any $\lambda\,{>}\,0$, $\partial_\lambda \vartheta(\lambda)\to 1$ and $\vartheta(\lambda)\to \lambda$ with corrections of order $1/N$. Thus, we focus on the following integral,

\begin{equation}\label{eq:integral1}
    \int_0^{\infty}d\Lambda\, \mathcal{D}_T(\Lambda) c^2(\Lambda) \frac{\lambda^2}{(\Lambda + \lambda)^2}.
\end{equation}

The functions $\mathcal{D}_T(\Lambda)$ and $c^2(\Lambda)$ can be safely replaced with their small-$\Lambda$ expansions $\Lambda^{-(1+r)}$ and $\Lambda^{q}$ over the whole range of the integral above because of the assumptions $q\,{>}\,r$ and $q\,{\leq}\,r+2$. In practice, there should be an upper cut-off on the integral coinciding with the largest eigenvalue $\Lambda_1$, but the assumption $q\,{\leq}\,r+2$ causes this part of the spectrum to be irrelevant for the asymptotics of the error. In fact, we will conclude that the integral is dominated by the portion of the domain around $\lambda$. After the change of variables $y\,{=}\,\Lambda/\lambda$,

\begin{equation}\label{eq:integral2}
    \int_0^{\infty} d\Lambda\,\mathcal{D}_T(\Lambda) c^2(\Lambda) \frac{\lambda^2}{(\Lambda + \lambda)^2} = \lambda^{q-r} \int dy\, \frac{y^{q-1-r}}{(1+y)^2},
\end{equation}

where one recognises one of the integral representations of the beta function,

\begin{equation}
    B(a, b) = \int dy\, \frac{y^{a-1}}{(1+y)^{a+b}} = \frac{\Gamma(a)\Gamma(b)}{\Gamma(a+b)},
\end{equation}

with $\Gamma$ denoting the gamma function. Therefore, 

\begin{equation}\label{eq:integral3}
    \int_0^{\infty} d\Lambda\,\mathcal{D}_T(\Lambda) c^2(\Lambda) \frac{\lambda^2}{(\Lambda + \lambda)^2} = \lambda^{q-r} \frac{\Gamma(q-r)\Gamma(2-q+r)}{\Gamma(2)}.
\end{equation}

It is interesting to notice how the assumptions $q\,{>}\,r$ and $q\,{<}\,r\,{+}\,2$ are required in order to avoid the poles of the $\Gamma$ functions in the RHS of~\autoref{eq:integral3}. 

We now use~\autoref{eq:integral3} to infer the asymptotics of $\mathcal{R}(\lambda, P)$ in the scaling limit $\lambda\to0$ and $P\to\infty$ with $1/(\lambda\sqrt{P})\to0$. The latter condition implies that $\lambda$ decays more slowly than $(P)^{-1/2}$, thus additional terms stemming from the finite-$P$ difference between $\vartheta $ and $\lambda$, of order $P^{-1}$ are negligible w.r.t. $\lambda^{q-r}$. The finite-$P$ difference between $\partial_\lambda\vartheta$, also $O(P^{-1})$, can be neglected too. Finally, 
\begin{equation}\label{eq:integral4}
    \mathcal{R}(\lambda, P) \sim \int_0^{\infty} d\Lambda\,\mathcal{D}_T(\Lambda) c^2(\Lambda) \frac{\lambda^2}{(\Lambda + \lambda)^2}\sim \lambda^{q-r} \sim \int_0^\lambda d\Lambda \mathcal{D}_T(\Lambda) c^2(\Lambda).
\end{equation}

Theorem 6 of~\cite{jacot2020kernel} shows the convergence of $\epsilon(\lambda, P)$ towards $\mathcal{R}(\lambda, P)$ when $P\to\infty$. Specifically,

\begin{equation}
    | \epsilon(\lambda,P) - \mathcal{R}(\lambda, P)| \leq \left(\frac{1}{P} + g\left(\frac{Tr[T_K]}{\lambda\sqrt{P}}\right)\right) \mathcal{R}(\lambda, P),
\end{equation}

where $g$ is a polynomial with non-negative coefficients and $g(0)\,{=}\,0$. With a decaying ridge $\lambda(P)$ such that $1/(\lambda\sqrt{P})\to 0$, both $\mathcal{R}/P$ and $\mathcal{R}g(Tr[T_K]/(\lambda\sqrt{P}))$ tend to zero faster than $\mathcal{R}$ itself, thus the asymptotics of $\epsilon(\lambda,P)$ coincide with those of $\mathcal{R}(\lambda, P)$ and~\autoref{eq:thm-ridge-result-app} is proven.
\end{proof}

\paragraph{Remark} The estimate for the exponent $\beta$ of Corollary~\ref{cor:beta-rigorous} follows from the theorem above with $r\,{=}\,t/(s+\alpha_s)$, $q\,{=}\,(\alpha_t + t)/(\alpha_s + s)$ and $\lambda\sim P^{-\gamma}$. Then $q\,{>}\,r$ because $\alpha_t\,{>}\,0$, whereas the condition $q\,{<}\,r + 2$ is equivalent to the assumption $\alpha_t \,{<}\,2(\alpha_s + s)$ required in~\autoref{sec:learning-curves} in order to derive the learning curve exponent in~\autoref{eq:prediction} from our estimate of $\mathcal{B}(P)$.

\section{Numerical experiments}\label{app:numerics}
 
\subsection{Details on the simulations}

To obtain the empirical learning curves, we generate $P+P_{\text{test}}$ random points uniformly distributed in a $d$-dimensional hypercube or on the surface of a $d-1$-dimensional hypersphere embedded in $d$ dimensions. We use $P \in \{128, 256, 512, 1024, 2048, 4096, 8192\}$ and $P_{\text{test}}=8192$. For each value of $P$, we generate a Gaussian random field with covariance given by the teacher kernel, and we compute the kernel ridgeless regression predictor of the student kernel using \autoref{eq:krrpredictor} with the $P$ training samples. The generalisation error defined in \autoref{eq:test-def} is approximated by computing the empirical mean squared error on the $P_{\text{test}}$ unseen samples. The expectation with respect to the target function is obtained averaging over 128 independent teacher Gaussian processes, each sampled on different points of the domain. As teacher and student kernels, we consider different combinations of the convolutional and local kernels defined in \autoref{eq:conv-ker} and \autoref{eq:loc-ker}, with Laplacian constituents $\mathcal{C}(\bm{x}_i-\bm{x}_j) \, {=} \, e^{-\|\bm{x}_i-\bm{x}_j\|}$ and overlapping patches. In particular,
\begin{itemize}
\item the cases with convolutional teacher and both convolutional and local students with various filter sizes are reported in \autoref{fig:figure} and \autoref{fig:sphere} for data distributed in a hypercube and on a hypersphere respectively;
\item the cases with local teacher and both local and convolutional students are reported in \autoref{fig:loc} for data distributed in a hypercube.
\end{itemize}

Experiments are run on NVIDIA Tesla V100 GPUs using the PyTorch package. The approximate total amount of time to reproduce all experiments with our setup is 400 hours. Code for reproducing the experiments can be found at \url{https://github.com/fran-cagnetta/local_kernels}.

\subsection{Additional experiments}

\begin{figure}
    \centering
    \includegraphics[width=1.0\linewidth]{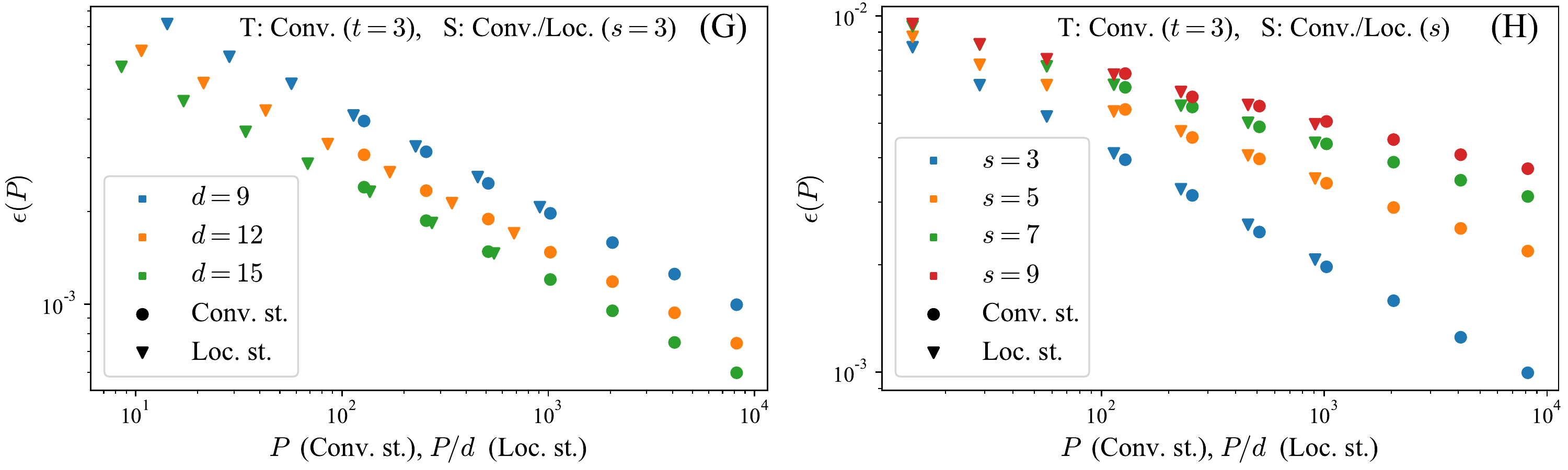}
    \caption{Learning curves for convolutional teacher and local and convolutional student kernels, with filter sizes denoted by $t$ and $s$ respectively. Data are sampled uniformly in the hypercube $[0,1]^d$, with $d=9$ if not specified otherwise. The sample complexity $P$ of the local students is rescaled with the number of patches to highlight the pre-asymptotic effect of shift-invariance on the learning curves.}
    \label{fig:preasympt}
\end{figure}

\paragraph{Convolutional vs local students} In \autoref{fig:preasympt} we report the empirical learning curves for convolutional and local student kernels learning a convolutional teacher kernel, with filter sizes $s$ and $t$ respectively. Data are uniformly sampled in the hypercube $[0,1]^d$. By rescaling the sample complexity $P$ of the local students with the number of patches $|\mathcal{P}|=d$, the learning curves of local and convolutional students overlap, confirming our prediction on the role of shift-invariance. Indeed, the local student has to learn the same local task at all the possible patch locations, while the convolutional student is naturally shift-invariant.

\begin{figure}
    \centering
    \includegraphics[width=1.0\linewidth]{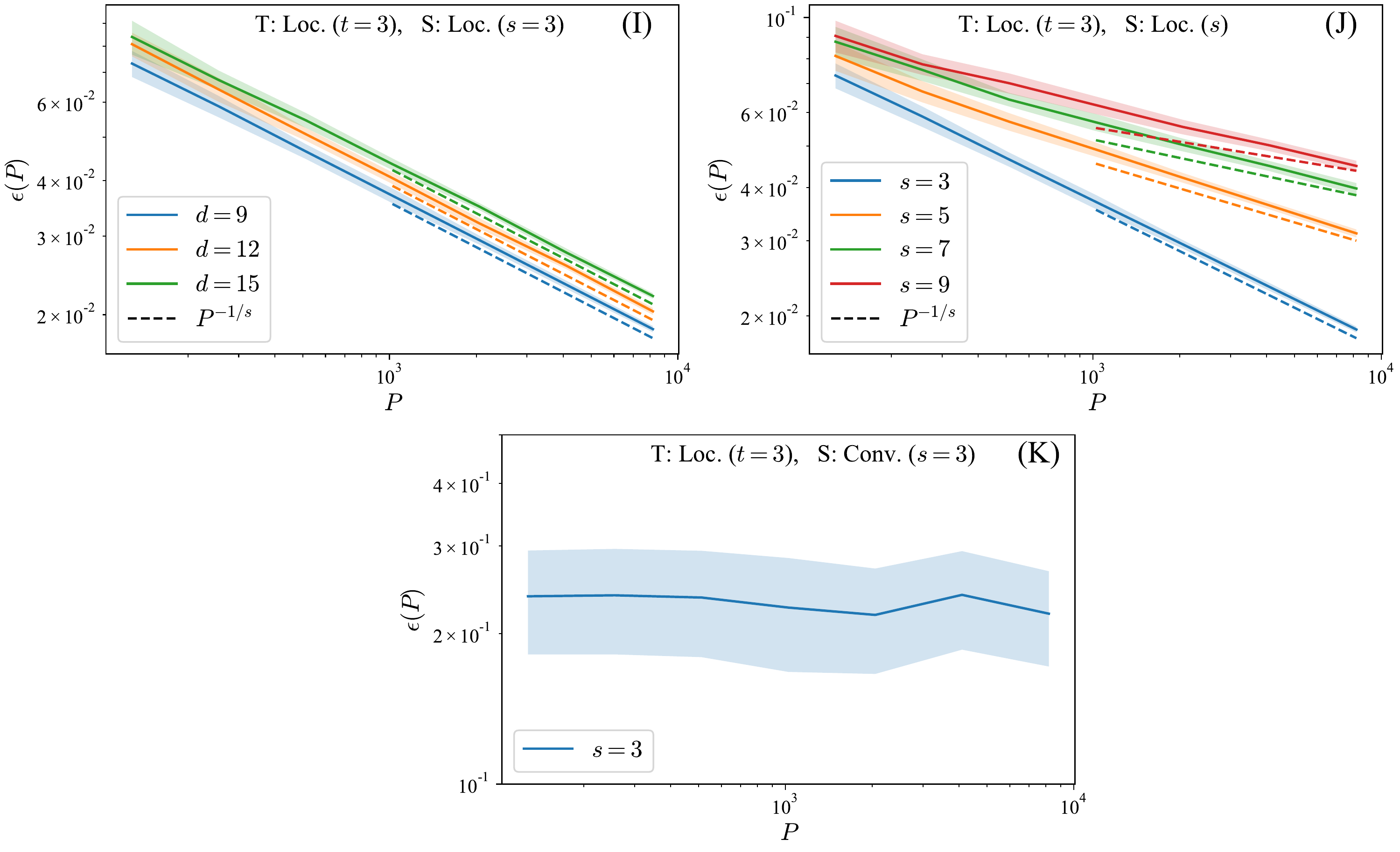}
    \caption{Learning curves for local teacher and local and convolutional student kernels, with filter sizes denoted by $t$ and $s$ respectively. Data are sampled uniformly in the hypercube $[0,1]^d$, with $d=9$ if not specified otherwise. Solid lines are the results of numerical experiments averaged over 128 realisations and the shaded areas represent the empirical standard deviations. The predicted scaling are shown by dashed lines.}
    \label{fig:loc}
\end{figure}

\paragraph{Local teacher} In \autoref{fig:loc} we report the empirical learning curves for a local teacher kernel and data uniformly sampled in the hypercube $[0,1]^d$. In panels I and J, also the student is a local kernel and the same discussion of \autoref{sec:empirical} applies. In panel K, the student is a convolutional kernel and the generalisation error does not decrease by increasing the size of the training set. Indeed, a local non-shift-invariant function is not on the span of the eigenfunctions of a convolutional kernel, and therefore the student is not able to learn the target.

\begin{figure}
    \centering
    \includegraphics[width=1.0\linewidth]{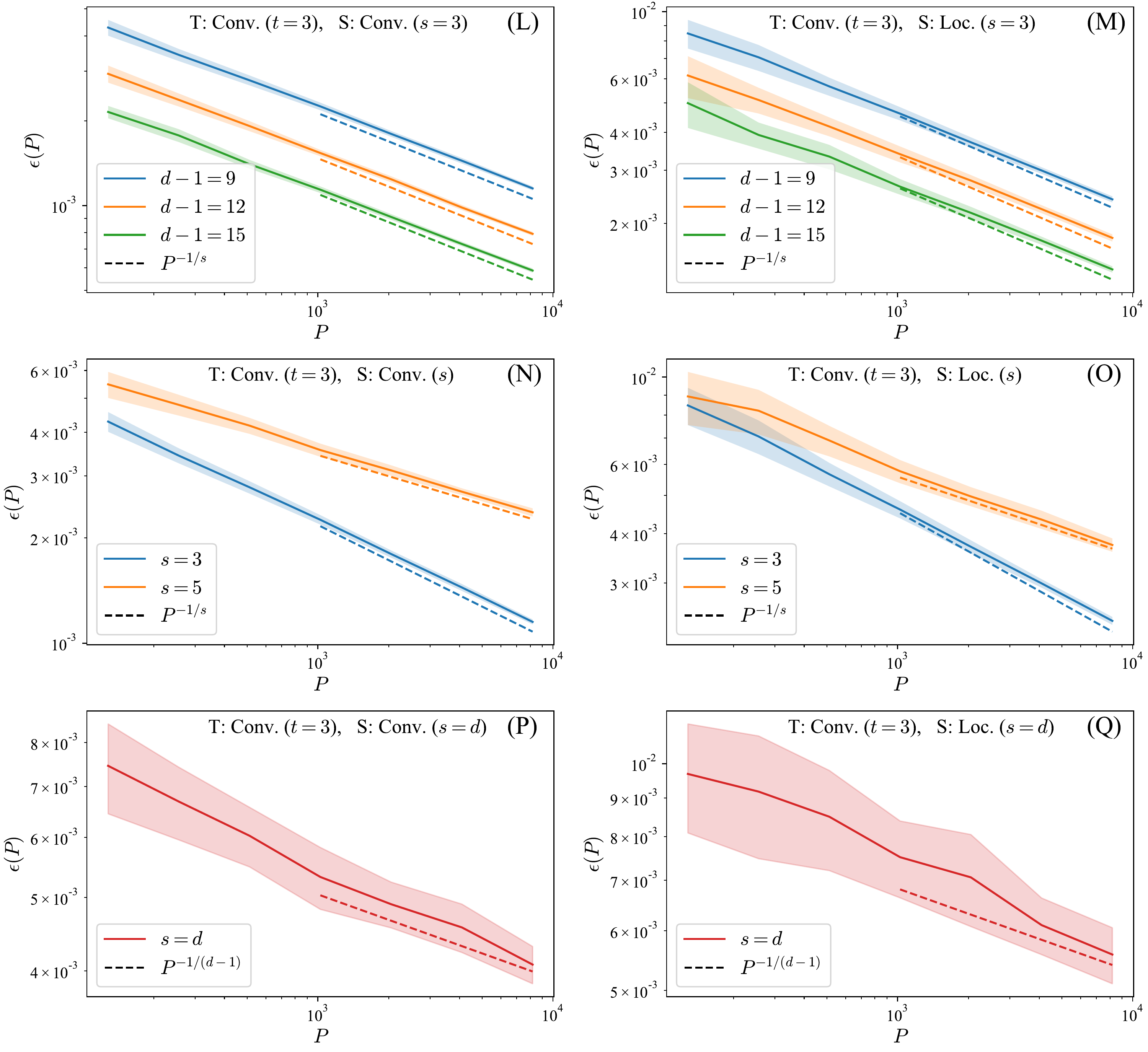}
    \caption{Learning curves for data uniformly distributed on the unit sphere $\mathbb{S}^{d-1}$, with $d=10$ if not specified otherwise. The teacher and student filter sizes are denoted with $t$ and $s$ respectively. Solid lines are the results of numerical experiments averaged over 128 realisations and the shaded areas represent the empirical standard deviations.}
    \label{fig:sphere}
\end{figure}

\paragraph{Spherical data} In \autoref{fig:sphere} we report the empirical learning curves for convolutional teacher and convolutional (left panels) and local (right panels) student kernels. Data are restricted to the unit sphere $\mathbb{S}^{d-1}$. Panels L-O are the analogous of panels A-D of \autoref{fig:figure}. Notice that when the filter size of the student coincides with $d$ (panels P, Q), the learning curves decay with exponent $\beta\,{\eq}\,1/(d-1)$ (instead of $\beta\,{=}\,1/d$). Indeed, for data normalised on $\mathbb{S}^{d-1}$, the spectrum of the Laplacian kernel decays at a rate $\mathcal{O}(k^{-\alpha-(d-1)})$ with $\alpha\,{\eq}\,1$. However, as the student filter size is lowered, we recover the exponent $1/s$ independently of the dimension $d$ of input space, as derived for data on the torus and shown empirically for data in the hypercube. In fact, we expect that the $s$-dimensional marginals of the uniform distribution on $\mathbb{S}^{d-1}$ become insensitive to the spherical constraint when $s\ll d$.

\begin{figure}
    \centering
    \includegraphics[width=1.0\linewidth]{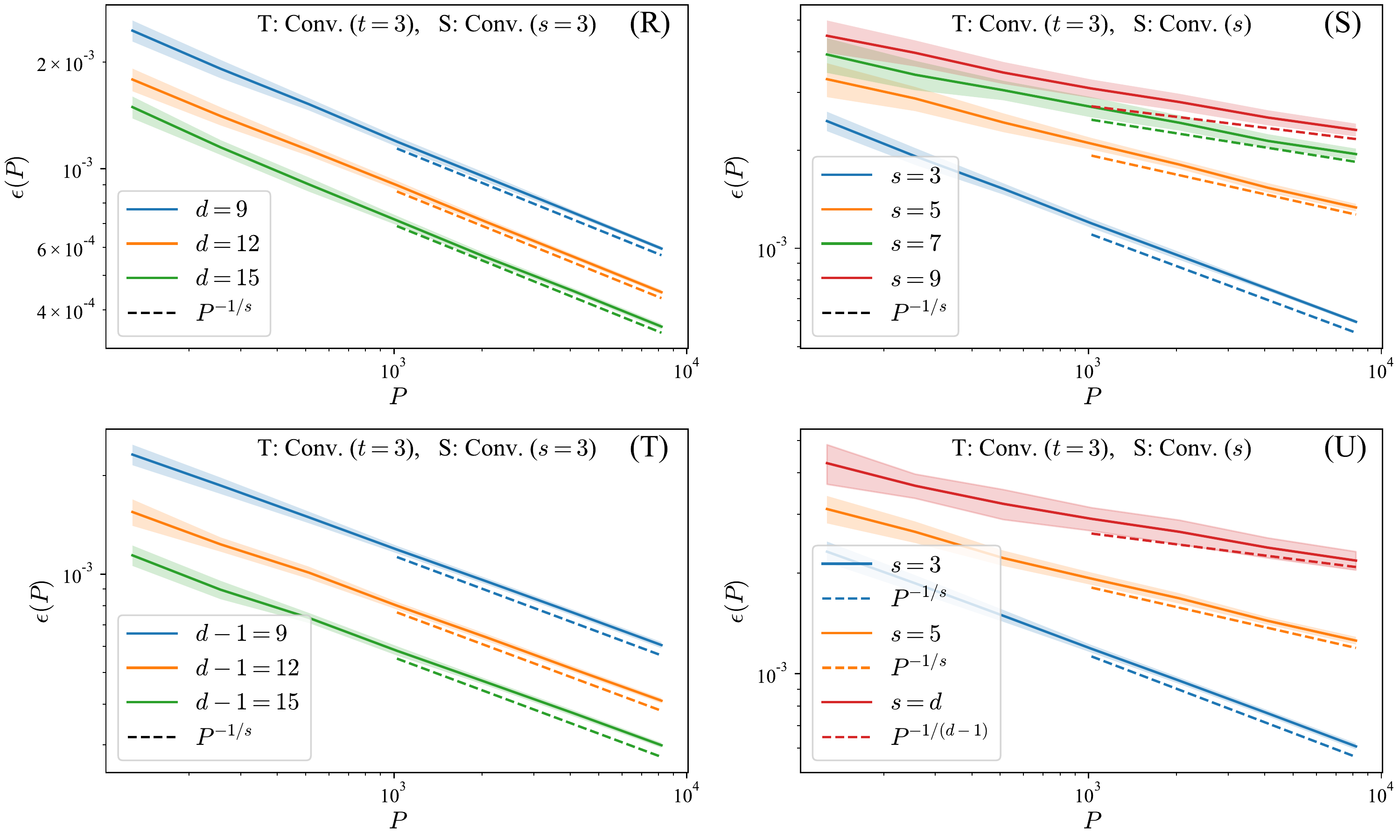}
    \caption{Learning curves for convolutional NTKs and data uniformly distributed in the hypercube $[0,1]^d$ (panels R, S) or on the unit sphere $\mathbb{S}^{d-1}$ (panels T, U). The teacher and student filter sizes are denoted with $t$ and $s$ respectively. Solid lines are the results of numerical experiments averaged over 128 realisations and the shaded areas represent the empirical standard deviations.}
    \label{fig:analytical-ntk}
\end{figure}

\begin{figure}
    \centering
    \includegraphics[width=1.0\linewidth]{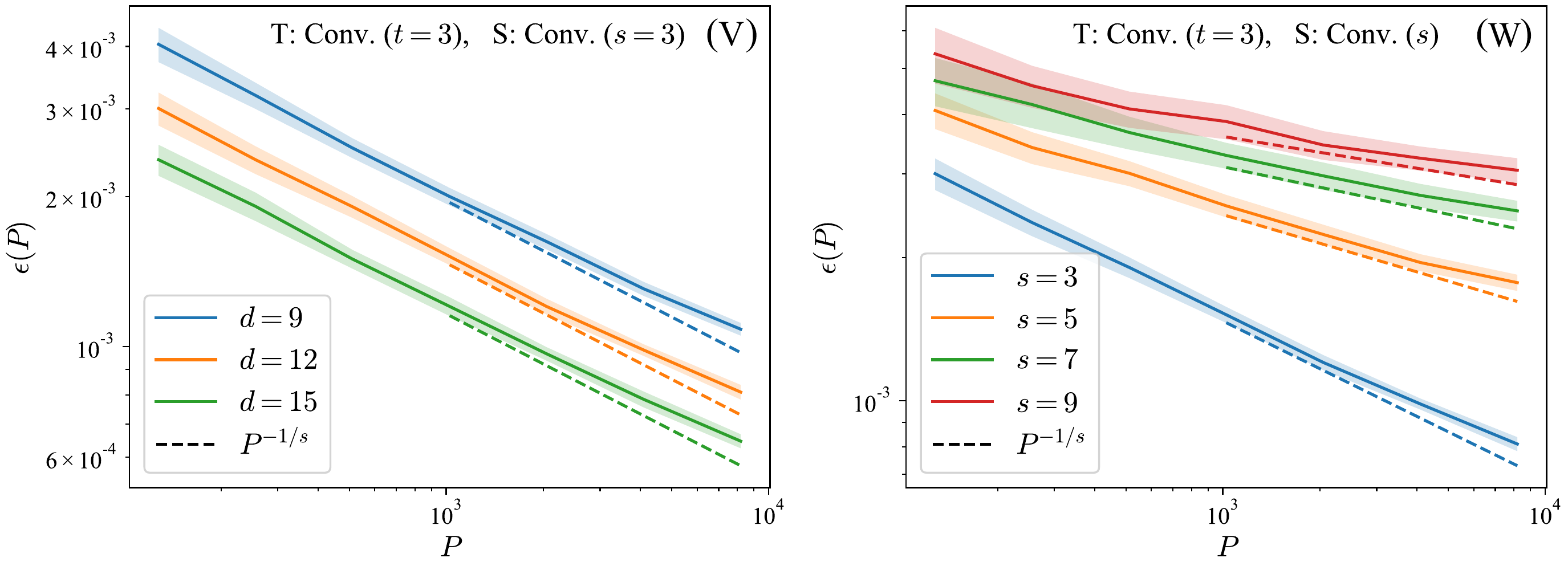}
    \caption{Learning curves for empirical NTKs of very-wide one-hidden-layer CNNs ($H\approx10^6$) and data uniformly distributed in the hypercube $[0,1]^d$. The teacher and student filter sizes are denoted with $t$ and $s$ respectively. Solid lines are the results of numerical experiments averaged over 128 realisations and the shaded areas represent the empirical standard deviations.}
    \label{fig:empirical-ntk}
\end{figure}

\paragraph{Convolutional NTKs} In \autoref{fig:analytical-ntk} we report the empirical learning curves obtained using the NTK of one-hidden-layer CNNs with ReLU activations, which corresponds to using the kernel $\Theta^{FC}$ defined in \autoref{eq:relu-fc-ntk} as the constituent. Since this kernel is not translationally invariant, it cannot be diagonalised in the Fourier domain, and the analysis of \autoref{sec:learning-curves} does not apply. However, as shown in panels P-S, the same learning curve exponents $\beta$ of the Laplacian-constituent case are recovered. Indeed, $\Theta^{FC}$ and the Laplacian kernel share the same nonanalytic behaviour in the origin, and their spectra have the same asymptotic decay \cite{geifman2020similarity}. In \autoref{fig:empirical-ntk} we present the same plots of panels R and S, but instead of the analytical NTKs, we compute numerically the kernels of randomly-initialised very-wide CNNs ($H \approx 10^6$).

\begin{figure}
    \centering
    \includegraphics[width=1.0\linewidth]{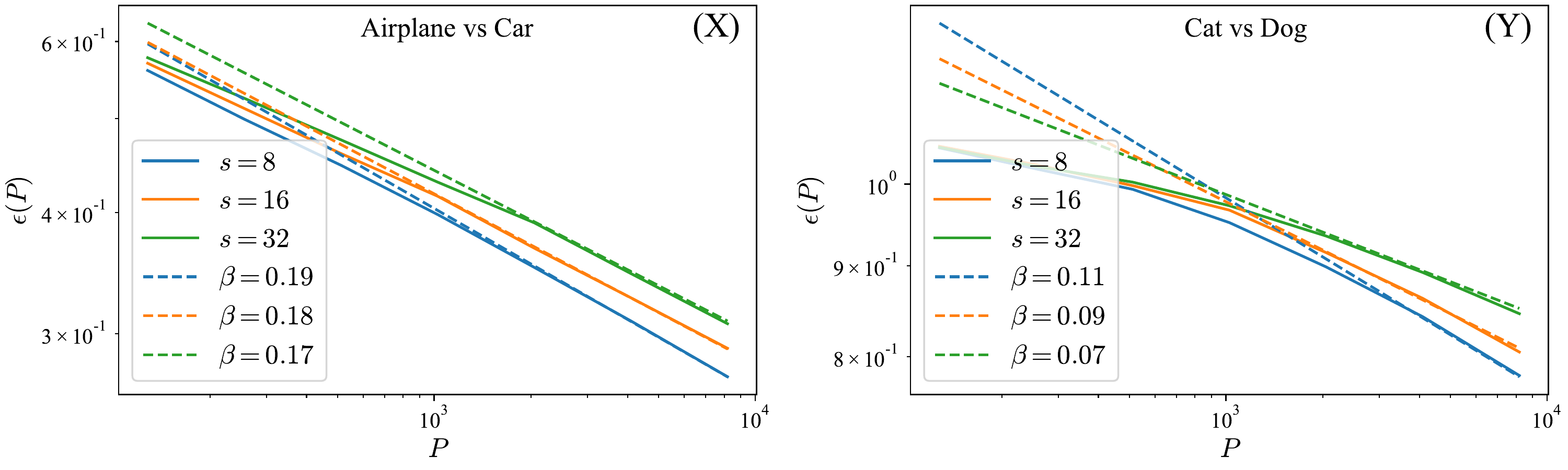}
    \caption{Learning curves of local kernels with filters of size $s$ on CIFAR-10 data. Solid lines are the results of numerical experiments and dashed lines are power laws with exponent $\beta$ interpolated in the last decade.}
    \label{fig:cifar}
\end{figure}

\paragraph{Real data} In Fig. \autoref{fig:cifar} we report the learning curves of local kernels with Laplacian constituents applied to the CIFAR-10 dataset. We build the tasks by selecting two classes and assigning label $+1$ to data from one class and $-1$ to data from the other class. As before, we use $P \in \{128, 256, 512, 1024, 2048, 4096, 8192\}$ and $P_{\text{test}} = 8192$. Differently from our assumptions, image data are strongly anisotropic, and the distance between nearest-neighbour points decays faster than $P^{-1/d}$. Indeed, target functions defined on data of this kind are usually not cursed with the full dimensionality $d$ of the inputs, but rather with an effective dimension $d_{\text{eff}}$. $d_{\text{eff}}$ is related to the dimension of the manifold in which data lie \cite{spigler2019asymptotic}, and may also vary when extracting patches of different sizes. Nonetheless, as we found in our synthetic setup, the learning curve exponent $\beta$ increases monotonically with the filter size of the kernel, strengthening the concept that leveraging locality is key for performance.

\end{document}